\documentclass{article}

\usepackage{microtype}
\usepackage{graphicx}
\usepackage{subcaption}
\usepackage{booktabs} %

\usepackage{hyperref}

\usepackage[accepted]{icml2026}

\usepackage{amsmath}
\usepackage{amssymb}
\usepackage{mathtools}
\usepackage{amsthm}

\usepackage[capitalize,noabbrev]{cleveref}

\theoremstyle{plain}
\newtheorem{theorem}{Theorem}[section]
\newtheorem{proposition}[theorem]{Proposition}
\newtheorem{lemma}[theorem]{Lemma}
\newtheorem{corollary}[theorem]{Corollary}
\theoremstyle{definition}
\newtheorem{definition}[theorem]{Definition}

\theoremstyle{remark}

\usepackage{causality}
\usepackage{editing}
\usepackage{tikz}
\usetikzlibrary{shapes.geometric, automata}
\usetikzlibrary{arrows.meta, positioning}

\usepackage{amsfonts}
\usepackage{algorithm, algorithmic}
\usepackage{bbm}

\usepackage{enumitem} 
\usepackage{verbatim}
\usepackage{subcaption}
\usepackage{wrapfig}
\usepackage{soul}
\usepackage{multirow}
\usepackage{float}
\usepackage{svg}

\usepackage{etoc}
\makeatletter
\newwrite\appendixtocfile
\newif\ifappendixtocstarted
\appendixtocstartedfalse

\let\AppendixToCOriginalAddContentsLine\addcontentsline

\renewcommand{\addcontentsline}[3]{%
  \AppendixToCOriginalAddContentsLine{#1}{#2}{#3}%
  \ifappendixtocstarted
    \def\AppendixToCFile{#1}%
    \def\AppendixToCToc{toc}%
    \ifx\AppendixToCFile\AppendixToCToc
      \protected@write\appendixtocfile{}{%
        \string\AppendixToCEntry{#2}{#3}{\thepage}}%
    \fi
  \fi
}

\newcommand{\AppendixToCEntry}[3]{%
  \ifcsname l@#1\endcsname
    \csname l@#1\endcsname{#2}{#3}%
  \else
    \par\noindent #2\hfill #3\par
  \fi
}

\newcommand{\StartAppendixToC}{%
  \appendixtocstartedtrue
  \immediate\openout\appendixtocfile=\jobname.atoc\relax
}

\newcommand{\PrintAppendixToC}{%
  \begingroup
  \setcounter{tocdepth}{3}%
  \par\noindent\textbf{Contents}\par\vskip3pt\hrule\vskip5pt
  \IfFileExists{\jobname.atoc}{\@input{\jobname.atoc}}{%
    \noindent\emph{Run LaTeX twice to populate the appendix contents.}\par}%
  \vskip3pt\hrule\vskip10pt
  \endgroup
}

\AtEndDocument{\ifappendixtocstarted\immediate\closeout\appendixtocfile\fi}
\makeatother

\newtheorem{example}{Example}

\newcommand{\G}{\mathcal{G}}

\renewcommand{\tt}[1]{\texttt{#1}}

\newcommand{\dom}{\textnormal{dom}}

\newcommand{\msf}[1]{\mathsf{#1}}

\RequirePackage{xcolor}      %
\newif\ifcomments
\commentsfalse                %

\usepackage[textsize=tiny]{todonotes}

\icmltitlerunning{Relational Structural Causal Models}

\begin{document}

\twocolumn[
  \icmltitle{Relational Structural Causal Models}

  \icmlsetsymbol{equal}{*}

  \begin{icmlauthorlist}
    \icmlauthor{Adiba Ejaz}{columbia}
    \icmlauthor{Elias Bareinboim}{columbia}
    
  \end{icmlauthorlist}

  \icmlaffiliation{columbia}{Causal AI Lab, Department of Computer Science, Columbia University, NY, USA}

  \icmlcorrespondingauthor{Adiba Ejaz}{adiba.ejaz@cs.columbia.edu}

  \icmlkeywords{Machine Learning, ICML}

  \vskip 0.3in
]

\printAffiliationsAndNotice{}  %
\begin{abstract}
An artificial intelligence must have a model of its environment that is \textit{causal}, supporting reasoning about interventions and counterfactuals, and also \emph{combinatorial}, supporting generalization to unseen combinations of objects.
In this work, we formally study when and how such a model can be learned. 
We develop \emph{relational structural causal models}, extending structural causal models \cite{pearl:09} to settings where objects and their relations vary.
First, we show how answers to not only causal but also observational queries about unseen combinations of objects can not be identified without further assumptions.
To enable such identification\textemdash including in the presence of unobserved confounding\textemdash we define \emph{relational causal graphs} and derive symbolic identification criteria.
Finally, we propose \emph{relational neural causal models}, a provably correct approach that outperforms non-relational baselines on simulated traffic scenes with varying cars, signals, and pedestrians.
\end{abstract}

\section{Introduction}

Behind a Rube Goldberg machine is a sequence of simple mechanisms.
A ball rolls down a ramp, tipping a weight, pulling a string, swinging a hammer, and striking a gong.
Predicting what happens next and why requires a model of how these bodies interact.
This is precisely what \textit{world models} aim to provide for AI systems to learn efficiently and generalize across environments \cite{ha:18, lecun:22, gurnee:24, richens:24, vafa:24}.
In this work, we consider two important problems that such a model must address.

The first problem is that of representing objects and composing them via relations \cite{battaglia:18, lake:17, tenenbaum:11, chollet:19}.
Downstream of such representations is the ability to answer questions about unseen combinations of objects, e.g., a new Rube Goldberg machine with an added ramp.
Such combinatorial structure arises in many domains.
Robots must reason about varying types of objects and their spatial relations to navigate and manipulate the world \cite{li:19, wang:25dyno, locatello:20}; language is generated from unbounded combinations of nouns related by verbs \cite{chomsky:65}; and biological systems are naturally described in terms of interacting proteins, metabolites, and cells \cite{barabasi:11, velickovic:23, regev:17}.
The generality of this problem has inspired active research into relational and object-centric machine learning \cite{koller:09, getoor:07,  velickovic:18, kipf:17, zambaldi:18} aimed at such combinatorial generalization. 

The second problem is that of answering causal questions: what if the weight were lighter, the string were cut, or the ramp angle were changed in our Rube Goldberg machine?
A common view is that such questions cannot be answered from observations of the environment alone, requiring either interventions, or \textit{causal} inductive biases, or often both \cite{pearl:09, pearl:18, bareinboim:22, scholkopf:21, bareinboim:25}.
In our case, evidence for this point of view comes from the weaknesses of relational machine learning.
Despite strong in-distribution predictive performance,  relational and non-relational methods alike can still exploit correlations that are unstable under interventions or distribution shift \cite{haan:19, park:21, fan:22, wu:22, vo:25}.
Relational structure alone does not guarantee answers to causal questions.

\begin{figure*}
    \centering
    \includegraphics[width=1\linewidth]{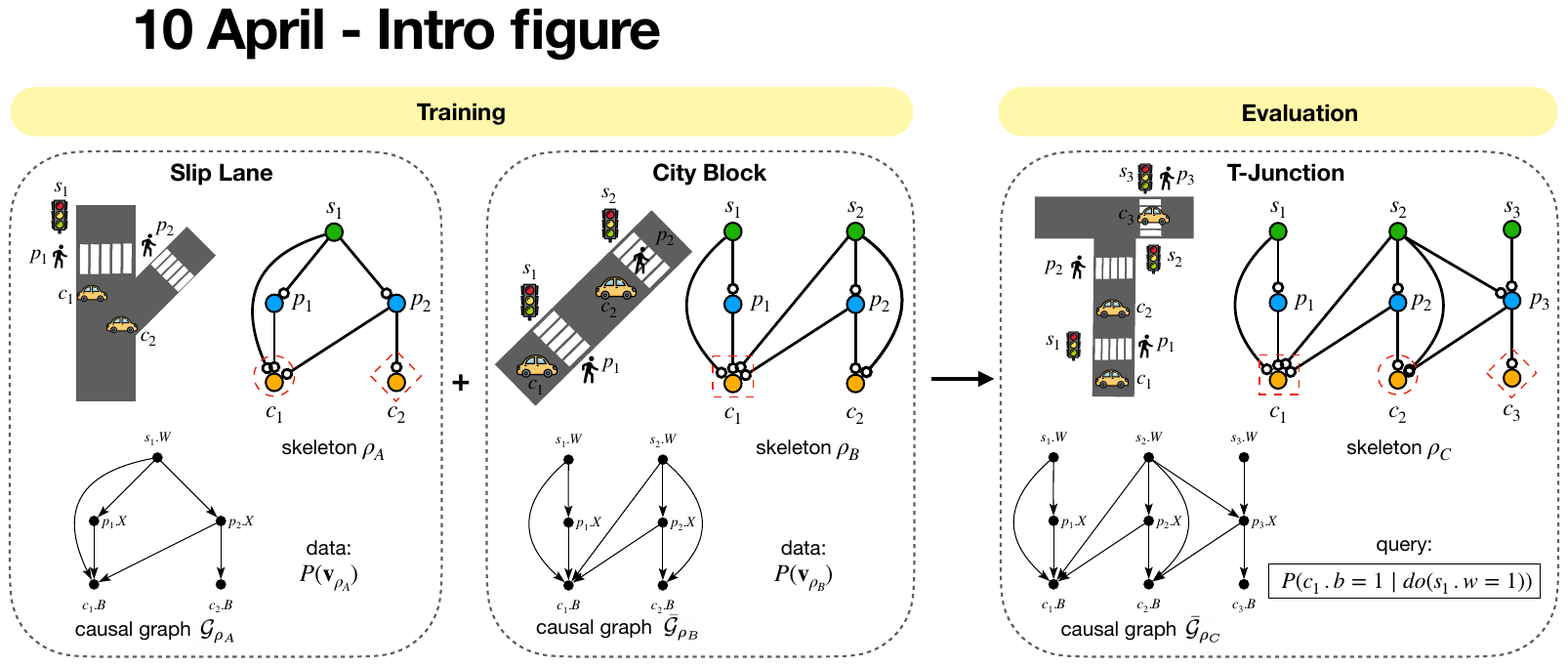}
    \caption{A schematic for the problem of relational identification across varying traffic scenes, following the schema in Ex.~\ref{ex:relintro}.
    Each panel shows: (i) a relational skeleton \(\rho\) representing a particular combination of signals (\(s\)), pedestrians (\(p\)), and cars (\(c\)), (ii) the corresponding causal graph \(\bar{\G}_{\rho}\), and (iii) the available data or the query of interest.
    The goal is to use data from the \textit{source} skeletons \(\rho_A\) and \(\rho_B\) to answer a query about the \textit{target} skeleton \(\rho_B\).
    In any \(\rho\), an edge \((s_i,p_j)\) indicates that signal \(s_i\) controls pedestrian \(p_j\); an edge \((s_i,c_j)\) indicates that signal \(s_i\) controls car \(c_j\); and an edge \((p_i,c_j)\) indicates that pedestrian \(p_i\) is in the path of car \(c_j\). A bubble marks the second tuple element.
    Cars with similar relational neighborhoods skeletons are circumscribed by similar shapes (red dotted line).}
    \label{fig:skels}
\end{figure*}

Causal machine learning methods aim to answer such questions in the contexts of decision-making, generative modeling, fairness, and more \cite{scholkopf:22, plecko:24, bareinboim:24:crl, pan:24, pan:25}.
However, many of these results do not easily lend themselves to combinatorial generalization because they rely on a fixed causal graph over a fixed set of variables.
In domains such as autonomous driving, where traffic scenes differ in how many signals, pedestrians and cars appear and how they relate (Fig.~\ref{fig:skels}), causal methods must make assumptions that simplify these relations.
They often assume that objects are unrelated (i.i.d.) or that the relational structure is fixed.
Causal relational learning \cite{lee:16, maier:10, salimi:20} and methods for causal inference from non-i.i.d. data \cite{rubin:90, sobel:06, hudgens:08, ogburn:14, weinstein:24} make progress towards relaxing this assumption, but assume that the objects and their relations are fixed.
For instance, they study what can be inferred about a given traffic scene given data from that same scene, but not data from a different scene.
As such, they do not address the problem of generalizing across combinations where both object counts and their relations can vary.

\textbf{Contributions.} We develop causal models for object-relational settings, enabling causal inference across varying combinations of objects. More specifically, our contributions are as follows.
\begin{enumerate}
    \item \textbf{Relational SCMs.} In Sec.~\ref{sec:rscm}, we formalize how different combinations of objects can be unified by the same data-generating process: a relational structural causal model (Def.~\ref{def:rscm-class}). 
    Based on this formalization, we prove the limits of learning distributions over seen and unseen combinations of objects, showing that even observational distributions of unseen combinations can not be learned without further assumptions.
    \item \textbf{Graphical identification.} In Sec.~\ref{sec:id}, given the previous impossibility results, we introduce a graphical language for encoding assumptions that enable \emph{relational identification} (Def.~\ref{def:rcid}) across combinations of objects. We show when and how existing causal inference tools can be used for this task.
    \item \textbf{Relational neural causal models.} In Sec.~\ref{sec:id:rncm}, we develop \emph{relational neural causal models} (Def.~\ref{def:rncm}) which form the basis of a sound and complete neural approach for relational identification in practice.
\end{enumerate}

Experiments with simulated traffic scenes (Sec.~\ref{sec:exp}, Sec.~\ref{adx:sec:exp}) support our findings.
We give an extended discussion of related literature in Sec.~\ref{adx:sec:relworks}; further definitions and examples, including a comparison with standard SCMs in Secs.~\ref{adx:sec:def} and~\ref{adx:sec:example}; as well as proofs and further results in Sec.~\ref{adx:sec:proofs}. 
Our code is available at \href{https://github.com/CausalAILab/RelationalCausalModels}{https://github.com/CausalAILab/RelationalCausalModels}.

\section{Preliminaries} \label{sec:prelim}

\textbf{Notation.} Capital letters $(X)$ denote variables, \(\dom(X)\) denotes their domains,  small letters \((x)\) denote values in their domains, and bold letters denote sets of variables $(\*{X})$ and their values \((\*{x})\). $P(\*{X})$ denotes the probability distribution over a set of variables $\*{X}$. We consistently use $P(\*x)$ to abbreviate probabilities $P(\*X = \*x)$.

\textbf{Structural causal models.} Our framework extends that of \emph{structural causal models} (SCMs) \citep{pearl:09, bareinboim:25}, a formalism for data-generating processes. An SCM $\mc{M}$ is a four-tuple $\mc{M} = \langle \*V, \*U, \mc{F}, P(\*U)\rangle$ where $\*V$ and $\*U$ are sets of endogenous (observed) and exogenous (unobserved) variables respectively. $\mc{F}$ is a set of mechanisms: each \(V \in \*V\) takes the value \(f_V(\*{pa}_V, \*{u}_V)\), a function of the values of its endogenous and exogenous parents, $\*{Pa}_V \subseteq \*V$ and $\*{U}_V \subseteq \*U$, respectively.
\(P(\*U)\) is a joint distribution over \(\*U\); as in prior work \cite{zhang:22, xia:21}, we assume the variables in \(\*U\) are jointly independent, although a given \(U\) may affect more than one \(V\).

Every SCM  induces a \textit{causal graph}, constructed as follows: (1) add a vertex for every \(V \in \*V\) (2) add an edge \(V_i \to V_j\) for every \(V_i, V_j \in \*V\) if \(V_i \in \*{Pa_{V_j}}\) (3) add a dashed bidirected edge between \(V_i, V_j\) if \(\*U_{V_i} \cap \*U_{V_j} \neq \emptyset\). See Sec.~\ref{adx:sec:def} for additional background.

\textbf{Objects and relations.} We build on the entity-relationship (ER) model \cite{ullman:02}. A \textit{relational schema} is a 3-tuple \(
 \mc{S}=\langle \mc{E}, \mc{R}, \mc{A} \rangle 
\)
where \(\mc{E}\) is a set of entity (or object) types; \(\mc{R}\) a set of relation types over \(\mc{E}\); and \(\mc{A}\) a set of observed attribute types \(O.A\) for types \(O \in \mc{E \cup R}\).
A \textit{relational skeleton} \(\rho\) of \(\mc{S}\) is a finite set of ground entities and relations \(o\) of the specified types \(O \in \mc{E \cup R}\).
We write \(\rho(O)\) for the set of instances \(o\) in \(\rho\) of type \(O\).

\begin{example}[Relational schema and skeleton for traffic scene] \label{ex:relintro}
A simple relational schema for traffic scenes would be
\begin{align*}
     \mc{E} &= \{\text{Signal } (\msf{Sig}), \text{ Car } (\msf{Car}), \text{ Pedestrian } (\msf{Ped})\}\\
     \mc{R} &= \{\msf{Ctrl(Sig,Ped)},\msf{Ctrl(Sig,Car)}, \msf{Path(Ped,Car)}\}\\
     \mc{A} &= \{\msf{Sig}.W, \msf{Ped}.X, \msf{Car}.B\},
\end{align*}
with all attributes binary-valued:
\(\msf{Sig}.W\in\{1,0\}\) denotes walk/drive;
\(\msf{Ped}.X\in\{1,0\}\) cross/wait; and 
\(\msf{Car}.B\in\{1,0\}\) brake/go.
\(\msf{Ctrl}(\msf{Sig}, \msf{Ped} )\) (resp.\ \(\msf{Ctrl}(\msf{Sig}, \msf{Car} )\)) indicates that a signal controls a pedestrian (resp.\ car),
and \(\msf{Path}(\msf{Ped}, \msf{Car} )\) indicates that the pedestrian is in the car's path.
Figure~\ref{fig:skels} shows three skeletons for three traffic scenes, e.g., in \(\rho_A\) (slip lane), one signal \(s_1\) controls pedestrians \(p_1,p_2\) and car \(c_1\) (but not \(c_2\)).
\hfill \(\square\)
\end{example}

\section{Defining and Characterizing Relational SCMs} \label{sec:rscm}

In this section, we introduce relational structural causal models (RSCMs) with the goal of specifying a data-generating process that underlies varying combinations of objects.

\subsection{Defining Relational SCMs}

An RSCM generalizes a standard SCM in two ways.
First, different types of objects carry different attributes, and hence different sets of variables in an RSCM.
Second, an attribute of one object may affect that of another only when the two objects stand in a particular relation.
Following previous work on relational modeling \cite{koller:09}, we capture such contingent dependencies using \emph{relational constraints} (Def.~\ref{adx:def:constraint}).

\begin{example}[Relational constraints for traffic scene]\label{ex:relconst}
In Ex.~\ref{ex:relintro},  consider a signal \(\msf{Sig}\), pedestrian \(\msf{Ped}\), and car \(\msf{Car}\).
In skeleton \(\rho_A\), the constraint \(\phi:\) \(\mathsf{Ctrl}(\msf{Sig}, \msf{Car} )\) is true for \(\msf{Sig}=s_1\) and \(\msf{Car}=c_1\) but not \(\msf{Car}=c_2\). In \(\rho_B\), the constraint \(\phi':\) \(\mathsf{Path}(\msf{Ped}, \msf{Car} )\) is true for \(\msf{Ped}=p_1\) and \(\msf{Car}=c_1\) but not \(\msf{Car}=c_2\).
\hfill \(\square\)
\end{example}

An RSCM specifies one mechanism per attribute (e.g., for whether a car brakes).
Its output can depend on the attributes of related objects (e.g., the crossing states  
of all pedestrians in the car's path), possibly via aggregation (Def.~\ref{adx:def:agg}).

\begin{definition}[Relational structural causal model (RSCM)] \label{def:rscm-class}
A \textit{relational structural causal model} (RSCM) is a 5-tuple \(\mc{M} = \langle \mc{S}, \*{V},\*{U}, \mc{F}, P(\*{U})\rangle\), where \(\mc{S}=\langle \mc{E}, \mc{R}, \mc{A} \rangle\) is a relational schema; \(\*V\) is a set of endogenous variables \(O.A\) for each attribute \(O.A\) in \(\mc{A}\); \(\mc{F}\) is a set of mechanisms \(f_{O.A}\) for each variable \(O.A\) in \(\*V\); \(\*U\) is a set of exogenous variables \(O.U\) tied to objects \(O \in \mc{E \cup R}\); and \(P(\*U)\) is a probability distribution over \(\*U\) factorizing as \(P(\*U) = \prod_{O.U \in \*U} P(O.U)\).
Each  mechanism has the form
    \[
    O.A \gets f_{O.A}(\*{Pa}_{O.A}, \*{U}_{O.A}, \*{Pa}^r_{O.A}, \*{U}^r_{O.A}).
    \]
Here, \(\*{Pa}_{O.A} \subseteq \*V\) and \(\*{U}_{O.A} \subseteq \*U\) are \textit{non-relational parents } comprising attributes of the same object instance.
On the other hand, \(\*{Pa}^r_{O.A}\) and \(\*{U}^r_{O.A}\) are \textit{relational parents}.
Each endogenous relational parent is a tuple \((\*W, \phi, \textrm{AGG})\), where \(\*W \subseteq \*V\) are variables belonging to some type \(T \in \mc{E \cup R}\); \(\phi\) is a relational constraint over entities associated with \(O\) and \(T\); and \(\textrm{AGG}\) is an optional list of aggregators for each \(T.W\in \*W\).
Exogenous relational parents are analogous.
\end{definition}

\begin{example}[RSCM for traffic scene] \label{ex:rscm-class}
Continuing Ex.~\ref{ex:relintro} and~\ref{ex:relconst}, we define an RSCM for traffic scenes.
The endogenous variables are \(\*V=\{\msf{Sig}.W,\ \msf{Ped}.X,\ \msf{Car}.B\}\).
The exogenous variables \(\*U\) are \(\msf{Sig}.{U_W} \sim \mc{B}(0.3), \msf{Ped}.{U_X} \sim \mc{B}(0.4)\) and \(\msf{Car}.{U_B} \sim \mc{B}(0.2)\), capturing unobserved factors such as a pedestrian's intent to cross or a driver's alertness.
The mechanisms are
\begin{align*}
    \msf{Sig}.W & \gets \msf{Sig}.U_W, \\
    \msf{Ped}.X & \gets \msf{Ped}.U_X \oplus \bigwedge_{\mathsf{Ctrl}(\msf{Sig}, \msf{Ped} )}\msf{Sig}.W, \text{ and} \\
    \msf{Car}.B &\gets \msf{Car}.U_B \oplus \bigg ( \bigvee_{\mathsf{Ctrl}(\msf{Sig}, \msf{Car} )} \msf{Sig}.W \\
    & \hspace{7em} \lor \bigvee_{\mathsf{Path}(\msf{Ped}, \msf{Car} )} \msf{Ped}.X \bigg ).
\end{align*}
For example, \(\msf{Ped}.X\) has the non-relational parent \(\msf{Ped}.U_X\) and relational parent \((\{\msf{Sig}.W\},\mathsf{Ctrl}(\msf{Sig}, \msf{Ped}),  \wedge)\).
Intuitively, \(f_{\msf{Ped}.X}\) makes a pedestrian cross when all controlling signals are in the `walk' state, up to noise (e.g., the pedestrian does not intend to cross).
Similarly, \(f_{\msf{Car}.B}\) makes a car brake when any controlling signal says `walk' or any pedestrian in its path is crossing, again up to noise (e.g., the driver is not alert).
See Ex.~\ref{adx:ex:rscm-class-py} for an extended example.
\hfill \(\square\)
\end{example}

Note how mechanisms in an RSCM differ from those in an SCM.\footnote{We give a side-by-side comparison of RSCMs with standard SCMs for the traffic example in Table~\ref{tab:scm-rscm-comparison}, as well as for an additional example in Sec.~\ref{adx:sec:student}.}
Since the number of objects satisfying a constraint (e.g., pedestrians in a car's path) can vary across skeletons, each \(f_{O.A}\) in an RSCM must accept multisets of varying size, while in an SCM, \(f_{O.A}\) accepts a fixed-size input.
In practice, relational learning often uses permutation-invariant \textit{aggregators} (Def.~\ref{adx:def:agg}) such as mean, sum, max, majority, attention pooling, etc. to implement functions on sets.

An RSCM \(\mc M\) may additionally be \emph{Markovian}.

\begin{definition}[RSCM Markovianity]
We say an RSCM \(\mc{M} = \langle \mc{S}, \*{V},\*{U}, \mc{F}, P(\*{U})\rangle\) is \(\rho\)-\emph{Markovian} if for each variable \(O.A\in \*V\), the set of exogenous relational parents \(\*U^r_{O.A}\) is empty.
We say \(\mc M\) is \emph{Markovian} if it is \(\rho\)-Markovian and no two variables \(O.A, T.B\in\*V\) share a non-relational exogenous parent.
\end{definition}

An RSCM can be instantiated for any skeleton \(\rho\).
It induces a standard \emph{ground} RSCM (Def.~\ref{adx:def:rscm-ground}) with a ground variable \(o.A\) for each attributes \(O.A\) and instance \(o \in \rho(O)\).
The function determining \(o.A\) substitutes the relational parents in \(f_{O.A}\) with ground variables \(t.W\) where \(o\) and \(t\) stand in the required relation.

\begin{example}[Ground RSCM for traffic scene]
\label{ex:rscm-ground}
For the RSCM \(\mc{M}\) in Ex.~\ref{ex:rscm-class} and skeleton \(\rho_A\) in Fig.~\ref{fig:skels}, the ground RSCM \(\mc{M}_{\rho_A} = \langle \*{V}_{\rho_A},\*{U}_{\rho_A}, \mc{F}_{\rho_A}, P(\*{U}_{\rho_A})\rangle\) is as follows.
\begin{align*}
    \*V_{\rho_A} &= \{s_1.W,\  p_1.X,\  p_2.X, \ c_1.B,\  c_2.B\} \\
    \*U_{\rho_A} &= \{s_1.U_W, \ p_1.U_X,\  p_2.U_X, c_1.U_B,\  c_2.U_B\} \\
    s_1.W & \gets s_1.U_{W} \\
    p_1.X &\gets p_1.U_X \oplus \bigwedge \{s_1.W\}\\
    p_2.X &\gets p_2.U_X \oplus \bigwedge \{s_1.W\}\\ 
    c_1.B & \gets c_1.U_B \oplus \left(\bigvee \{s_1.W\}\lor \bigvee \{p_1.X, p_2.X\} \right)\\
    c_2.B & \gets c_2.U_B \oplus \left(\bigvee \emptyset \lor \bigvee \{p_2.X\}\right)    
\end{align*}
with \(s_1.U_W \sim \mc{B}(0.3)\); \(p_1.U_X, \ p_2.U_X \sim_{\mathsf{iid}} \mc{B}(0.4)\); and \(c_1.U_B, \  c_2.U_B \sim_{\mathsf{iid}} \mc{B}(0.2)\). 
\(\mc{M}_{\rho_A}\) describes the generative process for various traffic scenes with the structure \(\rho_A\).
\hfill \(\square\)
\end{example}

We assume, throughout, that for any skeleton \(\rho\), the ground RSCM \(\mc M_\rho\) is recursive (or acyclic, Def.~\ref{adx:def:recursive}).\footnote{This is weaker than requiring the template RSCM itself to be acyclic.
For instance, an RSCM may specify that for cars \(C\) and \(C'\), \(\msf{Car}.B\) affects \(C'.B\) if \(C'\) is behind \(C\).
This appears cyclic at the template level; however, in any grounding \(\mc M_\rho\), two cars cannot both be behind each other, and so \(\mc M_\rho\) is acyclic.
We implement such an RSCM in Exp.~\ref{sec:exp:id}.}

\subsection{Limits of Learning Relational SCMs}

In most domains, the true data-generating process, or RSCM, is unknown \cite{pearl:09, bareinboim:22, bareinboim:25}.
What we observe instead is data from many skeletons, each with its own combination of objects and relations.
In this section, we consider what can be learned from such data about the true RSCM, and what this implies for unseen relational structures.

A ground RSCM \(\mc{M}_\rho\) induces observational, interventional, and counterfactual distributions over \(\*V_\rho\) (Def.~\ref{adx:def:rscmeval}).

\begin{example}[RSCM distributions for traffic scene]
Consider \(\mc{M}_\rho\) in Ex.~\ref{ex:rscm-ground}.
The observational query \(P(s_1.W=1)\) is the probability that signal \(s_1\) says `walk'.
The interventional query \(P(c_1.B=1 \mid do(p_1.X=1))\) is the probability that car \(c_1\) brakes when pedestrian \(p_1\) crosses under intervention, irrespective of the signal (e.g., by an officer).
The counterfactual
\(
P\!\big(c_1.B_{s_1.W=1}=1 \,\big|\, s_1.W=0,\ p_1.X=1,\ c_1.B=0\big)
\)
asks: in a scene where the signal was `drive', \(p_1\) crossed, and \(c_1\) did not brake, what is the probability that \(c_1\) \emph{would have} braked had \(s_1\) been set to `walk'?
\hfill \(\square\)
\end{example}

The classic challenge in causal inference is inferring causal effects from observational data for a fixed skeleton.
Generalizing this, say we have data from a given set of source skeletons, and we want to answer a query about an unseen target skeleton.
We show that even the \emph{observational} distribution for this target is not identifiable.

\begin{theorem}[Impossibility of observational inference across skeletons]\label{thm:obsimposs}
Consider a schema \(\mc{S}\), source skeletons \(\rho_1,\dots,\rho_l\), and target skeleton \(\rho_\star\).
Then, for any RSCM \(\mc{M}\) over  \(\mc{S}\), there exists another RSCM \(\mc{M}'\) over \(\mc{S}\) such that \(\mc{M}\) and \(\mc{M}'\) agree on observational distributions \(P(\*v_{\rho_k})\) for every source skeleton \(\rho_k\) but disagree on the observational distribution \(P(\*v_{\rho_\star})\) of the target skeleton. 
\end{theorem}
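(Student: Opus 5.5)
The plan is a ``local surgery'' argument. We keep every mechanism of \(\mc{M}\) except one, and change that one only on inputs that can occur when the model is grounded on \(\rho_\star\) and never occur when it is grounded on a source skeleton \(\rho_k\). Then \(\mc{M}_{\rho_k}\) and \(\mc{M}'_{\rho_k}\) are literally the same ground SCM, so they agree on every \(P(\*v_{\rho_k})\). I will use three assumptions, which I believe the statement needs anyway. First, \(\rho_\star\) is not isomorphic to any \(\rho_k\); otherwise the claim fails trivially. Second, \(\rho_\star\) contains an instance \(o\) of some type \(O\) carrying an attribute \(O.A\). Third, \(|\dom(O.A)|\ge 2\).

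\textbf{Step 1 (a fingerprint of the skeleton).} I want a quantity that a mechanism can compute, that depends only on the skeleton, and that takes one value on \(\rho_\star\) and a different value on every \(\rho_k\). I would build it from exogenous relational parents, which the definition of an RSCM allows and which cannot create cycles in the ground model. For each type \(T\in\mc{E}\cup\mc{R}\), add a fresh exogenous variable \(T.U_0\) to \(\mc{M}\), with a degenerate distribution. Make it a relational parent of \(O.A\) under a trivially true constraint \(\phi_T\), with the counting aggregator. Then \(f'_{O.A}\) sees the number of instances of each type in the skeleton. If these counts already separate \(\rho_\star\) from all the \(\rho_k\), Step 1 is done.

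If they do not, I would refine the counts. One option is to use constraints that join relations, for example aggregating over all pairs of relation instances that share an entity. Iterating this yields finer isomorphism invariants. The other option is to allow an arbitrary aggregator on the multiset of relational neighbourhoods. Either way, since there are finitely many sources, it suffices to find a finite set of aggregable invariants separating \(\rho_\star\) from each \(\rho_k\). \textbf{This is the step I expect to be the main obstacle.} Its difficulty depends on how expressive the relational constraints of Def.~\ref{adx:def:constraint} are. If constraints may be arbitrary first-order formulas over the skeleton, then a single formula characterising the isomorphism type of \(\rho_\star\) suffices. So I would first try exactly that.

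\textbf{Step 2 (the modified mechanism).} Write \(\sigma\) for the fingerprint from Step 1. Pick a value \(a'\in\dom(O.A)\) with \(P_{\mc{M}}(o.A=a')<1\) in \(\mc{M}_{\rho_\star}\); such a value exists because the domain has at least two values. Define
\[
f'_{O.A}=\begin{cases} a' & \text{if } \sigma=\sigma(\rho_\star),\\ f_{O.A} & \text{otherwise,}\end{cases}
\]
and leave all other mechanisms and exogenous distributions as in \(\mc{M}\). Because the new parents are exogenous, every ground model \(\mc{M}'_\rho\) has the same endogenous edges as, or fewer than, \(\mc{M}_\rho\), so it stays recursive. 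The new \(\*U\) variables are independent, so the factorisation of \(P(\*U)\) still holds.

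\textbf{Step 3 (verification).} For each source \(\rho_k\), the fingerprint differs from \(\sigma(\rho_\star)\). Hence \(f'_{O.A}\) coincides with \(f_{O.A}\) on every ground instance, and the two models induce the same observational distribution on \(\rho_k\). On \(\rho_\star\), every instance of \(O.A\), and in particular \(o.A\), equals \(a'\) with probability \(1\) under \(\mc{M}'\). Under \(\mc{M}\), \(P(o.A=a')<1\). So the marginals of \(o.A\) differ, and therefore so do the joint distributions \(P(\*v_{\rho_\star})\).
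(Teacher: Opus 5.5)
Your proposal is correct and is essentially the paper's proof: the paper also builds a first-order sentence \(\phi_\star\) that holds exactly on skeletons isomorphic to \(\rho_\star\), and it switches \(f_{O.A}\) to a different output only when \(\phi_\star\) holds (a fresh exogenous \(O.U\) with a suitably chosen distribution where you use a constant \(a'\)). What you add makes explicit several points the paper leaves informal: how \(f'_{O.A}\) actually reads \(\phi_\star\) (an exogenous relational parent under \(\phi_\star\) with a count or nonemptiness aggregator), why the ground models stay recursive, and the implicit hypotheses \(\rho_\star\not\cong\rho_k\) and \(|\dom(O.A)|\ge 2\).
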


\begin{example}[Impossibility of observational inference across skeletons] \label{ex:obsimposs}
Consider the source skeleton \(\rho_A\) and target skeleton \(\rho_C\) (Fig.~\ref{fig:skels}). 
Say we know \(P(\*v_{\rho_A})\) and want to learn \(P(\*v_{\rho_C})\). 
We can show this is not possible, following Thm.~\ref{thm:obsimposs} above.
Let the true RSCM be \(\mc{M}\) given in Ex.~\ref{ex:rscm-class},  where the mechanism determining whether a car brakes is 
\begin{align*}
    \msf{Car}.B &\gets \msf{Car}.U_B \oplus \bigg ( \bigvee_{\mathsf{Ctrl}(\msf{Sig}, \msf{Car} )} \msf{Sig}.W \\
    & \hspace{7em} \lor \bigvee_{\mathsf{Path}(\msf{Ped}, \msf{Car} )} \msf{Ped}.X \bigg ).
\end{align*}

Consider another RSCM \(\mc{M}'\) which is identical to \(\mc{M}\), except with a slightly different braking mechanism:    
\begin{align*}
    \msf{Car}.B &\gets \msf{Car}.U_B \oplus \bigg ( \bigoplus_{\mathsf{Ctrl}(\msf{Sig}, \msf{Car} )} \msf{Sig}.W \\
    & \hspace{7em} \lor \bigvee_{\mathsf{Path}(\msf{Ped}, \msf{Car} )} \msf{Ped}.X \bigg ).
\end{align*}
In skeleton \(\rho_A\), any car is controlled by at most one signal, so we can show that  \(\mc{M}\) and \(\mc{M}'\) yield identical distributions for \(\rho_A\).
In particular \(c_1\) is controlled only by \(s_1\), and we have \(\bigvee \{s_1.W\} =  \bigoplus \{s_1.W\}\); \(c_2\) is not controlled by any signals, and \(\bigvee \emptyset=  \bigoplus \emptyset\).
However, in the target skeleton \(\rho_C\), car \(c_1\) is controlled by two signals; here, \(\bigvee \{s_1.W, s_2.W\} \neq  \bigoplus \{s_1.W, s_2.W\}\) in general. Consider the conditionals:
\begin{align*}
    &P^{\mc{M}_{\rho_C}}(c_1.B=1\mid p_1.X=0, p_2.X=0, \\
    &\qquad \qquad \qquad \qquad s_1.W=1, s_2.W=1) \\
    & = P^{\mc{M}_{\rho_C}}(c_1.U_B \oplus ( (1 \lor 1) \lor (0 \lor 0)) = 1) \\
    &= P^{\mc{M}_{\rho_C}}(c_1.U_B = 0) = 0.8\\
    &P^{\mc{M'}_{\rho_C}}(c_1.B=1\mid p_1.X=0, p_2.X=0, \\
    & \qquad  \qquad \qquad \qquad s_1.W=1, s_2.W=1) \\
    &= P^{\mc{M'}_{\rho_C}}(c_1.U_B \oplus ( (1 \oplus 1) \lor (0 \lor 0)) = 1) \\
    & = P^{\mc{M'}_{\rho_C}}(c_1.U_B = 1) = 0.2  
\end{align*}
Thus, the two SCMs disagree on \(P(\*v_{\rho_C})\). \(\hfill \square\)

\end{example}

If we want to learn an interventional distribution for an unseen target skeleton \(\rho_\star\).
Thm.~\ref{thm:obsimposs} already limits our ability to do this.
It implies that even the observational distribution \(P(\*v_{\rho_\star})\), a prerequisite of existing causal inference methods, may not be identified by the source data.
An independently interesting question, however, is: when we \emph{do} know some distributions for the target skeleton, e.g., \(P(\*v_{\rho_\star})\), does this suffice to identify other interventional (or counterfactual) queries in the target? We give a negative answer.\footnote{
It may seem that a negative answer follows immediately from the causal hierarchy theorem (CHT) \cite{bareinboim:22}.
However, the proof of the CHT relies on being able to construct an \emph{arbitrary} SCM \(\mc{M}'\) that matches the true SCM  \(\mc{M}\) on the given distribution(s) but not on the query.
We are in a stricter setting where \(\mc{M}'\) must share exogenous distributions and functions across objects of the same type, i.e., be a grounding of an RSCM.}

\begin{theorem}[Impossibility of causal inference within a skeleton]
\label{thm:rcht}
Consider a schema \(\mc{S}\) where at least one entity or relation type has more than one observed attribute.
For any relational SCM \(\mc{M}\) over \(\mc{S}\) and skeleton \(\rho\), there exists another relational SCM \(\mc{M}'\) over \(\mc{S}\) such that \(\mc{M}\) and \(\mc{M}'\) agree on the observational distribution \(P(\*v_{\rho})\) but disagree on some interventional distribution over \(\*V_{\rho}\).
\end{theorem}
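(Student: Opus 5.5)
The plan is a "shadow copy" construction. Fix an entity or relation type \(O\) with two observed attributes \(O.A\) and \(O.B\), and an instance \(o\in\rho(O)\). Since the ground model \(\mc M_\rho\) is recursive, I can order the two so that \(o.B\) is not an ancestor of \(o.A\) in \(\mc M_\rho\); if necessary I swap the names. I assume \(|\dom(O.A)|,|\dom(O.B)|\geq 2\), since for trivial domains interventions on these variables are vacuous.

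Inside the mechanism of \(O.B\), I introduce the \emph{natural value}
\[
\hat{A} \;=\; f_{O.A}(\*{Pa}_{O.A}, \*{U}_{O.A}, \*{Pa}^r_{O.A}, \*{U}^r_{O.A}).
\]
This is a copy of \(O.A\)'s own mechanism, evaluated on \(O.A\)'s own parents, and it uses the same relational constraints and aggregators. Accordingly, \(\mc M'\) adds all of \(O.A\)'s parents, endogenous and exogenous, relational and non-relational, as parents of \(O.B\). Sharing the exogenous ones is allowed because Def.~\ref{def:rscm-class} does not require Markovianity. Fix a value \(b'\in\dom(O.B)\), to be chosen below, and define
\[
f'_{O.B} \;=\; \begin{cases} f_{O.B}(\ldots) & \text{if } O.A=\hat{A},\\ b' & \text{otherwise.}\end{cases}
\]
All other mechanisms and \(P(\*U)\) are unchanged. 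Because the change is made at the template level, every grounding is still a grounding of a valid RSCM over \(\mc S\), and parameters remain shared across objects of type \(O\). The one thing to check is recursiveness of \(\mc M'_\rho\): the new edges run from parents of \(o.A\) into \(o.B\). Since \(o.B\) is not an ancestor of \(o.A\), none of these edges closes a cycle. The same holds for every other instance \(o'\) of type \(O\) for which this ordering is valid.

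\emph{Observational agreement.} In the unintervened ground model \(\mc M'_\rho\), every unit \(\*u\) satisfies \(o'.A=\hat{A}\) for every instance \(o'\), because \(\hat A\) recomputes exactly the mechanism that produced \(o'.A\). So the first branch of \(f'_{O.B}\) always fires. Proceeding by induction along the recursive order, all variables therefore take the same values in \(\mc M_\rho\) and \(\mc M'_\rho\) for every \(\*u\), and hence \(P^{\mc M}(\*v_\rho)=P^{\mc M'}(\*v_\rho)\).

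\emph{Interventional disagreement.} Under \(do(o.A=a)\), the value \(\hat{A}\) for \(o\) is still computed from the unintervened parents of \(o.A\). These parents are not descendants of \(o.A\) and so are unaffected by the intervention. Choose \(a\) with \(P^{\mc M}(\hat{A}_o\neq a)>0\); such an \(a\) exists since \(|\dom(O.A)|\geq 2\). On that event \(\mc M'\) sets \(o.B=b'\). I then choose \(b'\) so that \(P^{\mc M}(o.B_{a}=b'\mid \hat A_o\neq a)<1\), which is possible since \(|\dom(O.B)|\geq 2\). With these choices, \(P(o.B_{a}=b')\) differs between the two models.

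\emph{Main obstacle.} The delicate step is making this comparison rigorous, for two reasons.

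\begin{itemize}
\item On the event \(\hat A_o\neq a\), the interventional value \(o.B_a\) depends only on the unit \(\*u\). Since both models share \(P(\*U)\), the distribution of \(\hat A_o\) is the same in both, and the two interventional laws of \(o.B\) differ on this event by construction.
\item On the complementary event \(\hat A_o=a\), the models agree.
\end{itemize}

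In addition, I must confirm that adding relational parents and aggregators to \(O.B\) is well-formed whenever \(O\) is a relation type. It is, because every constraint \(\phi\) used by \(O.A\) is over entities associated with \(O\), and \(O.B\) belongs to the same type \(O\).
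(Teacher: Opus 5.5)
\textbf{Same approach as the paper, but the recursiveness check has a genuine gap.} Your construction is the paper's: $O.B$ recomputes the natural value of $O.A$ from $O.A$'s own parents and switches to a new value when the two disagree. The paper switches to a fresh exogenous $O.U$ rather than your constant $b'$; that difference is immaterial. Your choice of $a$ and $b'$, and the agreement of the two models on $\{\hat A_o=a\}$, are spelled out more carefully than in the paper.

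The step that fails is the check that $\mathcal{M}'_\rho$ is recursive. The change is made at the template level, so every instance $o'\in\rho(O)$ receives the new edges $\{o'.A\}\cup\mathbf{Pa}(o'.A)\to o'.B$ at the same time. Your argument only excludes cycles built from the new edges of a single instance. In fact $\mathcal{M}'_\rho$ has a cycle exactly when $\mathcal{M}_\rho$ contains an alternating chain $o_1.B\rightsquigarrow o_2.A,\ o_2.B\rightsquigarrow o_3.A,\dots,o_k.B\rightsquigarrow o_1.A$. Your condition rules out only $k=1$, and only for $o$.

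Here is a concrete counterexample.
\begin{itemize}
\item Take one entity type $\mathsf{Car}$ with binary attributes $A,B$ and a relation $R(\mathsf{Car},\mathsf{Car})$.
\item Use the mechanisms $\mathsf{Car}.B\gets U_B$ and $\mathsf{Car}.A\gets U_A\oplus\bigvee_{R(C,C')}C'.B$.
\item Let $\rho$ have cars $c_1,c_2$ with $R(c_1,c_2)$ and $R(c_2,c_1)$.
\end{itemize}
Here $\mathcal{G}$ and $\mathcal{M}_\rho$ are acyclic, and no $c_i.B$ is an ancestor of $c_i.A$, so your rule lets you keep the names. But then $c_2.B\in\mathbf{Pa}(c_1.A)$ becomes a parent of $c_1.B$, and $c_1.B\in\mathbf{Pa}(c_2.A)$ becomes a parent of $c_2.B$. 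This gives the cycle $c_1.B\to c_2.B\to c_1.B$, and even a template-level cycle through a new relational node into $\mathsf{Car}.B$. So $\mathcal{M}'$ violates the standing recursiveness assumption, and the induction ``along the recursive order'' in your observational step has no order to run along.

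\textbf{Repair.} Choose the orientation in the relational causal graph $\mathcal{G}$, not per ground instance. If $\mathcal{G}$ is acyclic (the appendix's definition of a recursive RSCM), take $O.A$ to precede $O.B$ in a topological order of $\mathcal{G}$, so that $O.B$ is not an ancestor of $O.A$ in $\mathcal{G}$.
\begin{itemize}
\item Every added edge then runs into $O.B$ from $O.A$ or from a template-ancestor of $O.A$. The relational parents enter through new relational nodes whose sources already feed $O.A$.
\item A cycle in $\mathcal{G}'$ would therefore give $O.B\rightsquigarrow O.A$ in $\mathcal{G}$, which is impossible. So $\mathcal{G}'$ is acyclic, and hence so is every grounding, for every skeleton and not just $\rho$.
\end{itemize}
In the example this forces the swap: detect $\mathsf{Car}.B$ inside $\mathsf{Car}.A$, which works. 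With that choice the rest of your argument goes through.

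If only the groundings, and not $\mathcal{G}$, are assumed acyclic, you still need a separate argument excluding the alternating chains above.

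The paper's own proof does not handle this either. Its assumption $O.B\notin\mathbf{Pa}_{O.A}$ excludes neither cross-instance chains nor within-instance paths such as $O.B\to O.C\to O.A$.
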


Thms.~\ref{thm:obsimposs} and ~\ref{thm:rcht} hold even when the relational structure is known.
This suggests the need for assumptions about the causal structure, in addition to the relational structure.

\section{Relational Identification} \label{sec:id}

Previously, we showed that without further assumptions, even the observational distribution for unseen combinations of objects cannot be identified.
In this section, we develop a graphical model approach to overcome this impossibility.

\subsection{Defining Relational Causal Graphs} \label{sec:id:graph}

First, we extend causal graphs to include relational constraints \cite{pearl:09, koller:09}.

\begin{definition}[Relational causal graph]\label{def:rcg}
An RSCM \(\mc{M}=\langle \mc{S},\*V,\*U,\mc{F},P(\*U)\rangle\) induces a \emph{relational causal graph}  \(\G\) constructed as follows.
\begin{itemize}
    \item \textbf{Non-relational subgraph.}
    For each object type \(O\in \mc{E \cup R}\), let \(\G\) contain nodes for each variable \(O.A \in \*V\), a directed edge \(O.B \to O.A\) for any \(O.B \in \*{Pa}_{O.A}\), and a dashed bidirected edge \(O.A \leftrightarrow O.B\) for any \(O.B \in \*V\) such that \(\*U_{O.A} \cap \*U_{O.B} \neq \emptyset\), annotated with the constraint \(O=O'\).

    \item \textbf{Relational subgraph.}
    For each variable \(O.A \in \*V\) and each relational parent \(R = (\*W,\phi, \textrm{AGG})\in \*{Pa}^r_{O.A}\), let \(\G\) contain a \emph{relational node} \(O.R\) and an edge \(O.R \to O.A\).
    For each \(T.W \in \*W\), add an edge \(T.W \to O.R\) annotated with \(\phi\) and \(\textrm{AGG}\).
    Finally, for any \(T.B \in \*V\) such that \(O.A\) and \(T.B\) have exogenous relational parents \((\*W_1,\phi_1, \textrm{AGG}_1)\) and \((\*W_2,\phi_2, \textrm{AGG}_2)\) respectively such that  some \(Z.U \in \*W_1 \cap \*W_2\), add a dashed bidirected edge \(O.A \leftrightarrow T.B\) annotated with the constraint \(\exists Z : \phi_1 \wedge \phi_2\), or append the constraint to an existing \(O.A \leftrightarrow T.B\) edge.
\end{itemize}
\end{definition}

Fig.~\ref{fig:rcg} shows the graph for the traffic RSCM (Ex.~\ref{ex:rscm-class}).
Like an RSCM, a relational causal graph \(\G\) can be instantiated for any skeleton to yield a \emph{ground graph} \(\G_\rho\) (Def.~\ref{adx:def:ground-rcg}).
The relational nodes in \(\G_\rho\) can be \textit{marginalized} (Def.~\ref{adx:def:marg-ground-rcg}, Fig.~\ref{fig:skels}) to yield a standard causal graph \(\bar{\G}_\rho\).

Causal graphs can be used to encode assumptions about the space of possible RSCMs.
They help circumvent the impossibility results of Sec.~\ref{sec:rscm}, enabling \emph{relational identification}.

\begin{definition}[Relational counterfactual identification]\label{def:rcid}
Consider a schema \(\mc{S}\), relational causal graph \(\G\), source skeletons \(\rho_1,\dots,\rho_l\), source distributions \(\mathbb{P}=\{\{P(\*v_{\rho_k}\mid do(\*x_{k,j}))\}_{j=1}^{m_k}\}_{k=1}^l\), and target skeleton \(\rho_\star\).
Let \(P(\*{y_*} \mid \*{x_*})\) be a target query with \(\*{Y_*},\*{X_*} \subseteq \*V_{\rho_\star}\).

We say \(P(\*{y_*} \mid \*{x_*})\) is \emph{relationally identifiable} from \(\G\) and \(\mathbb{P}\) if for any RSCMs \(\mc{M}, \mc{M}'\) consistent with \(\G\) agreeing on the source data, so that for every \(\rho_k\) and \(j=1,\dots,m_k\),
\[
P^{\mc{M}_{\rho_k}}(\*v_{\rho_k}\mid do(\*x_{k,j})) 
=
P^{\mc{M}'_{\rho_k}}(\*v_{\rho_k}\mid do(\*x_{k,j})) \;>\; 0,
\]
they also agree on the query: 
\[
P^{\mc{M}_{\rho_\star}}(\*{y_*} \mid \*{x_*})
=
P^{\mc{M}'_{\rho_\star}}(\*{y_*} \mid \*{x_*}).
\]
Otherwise, the query is \emph{relationally non-identifiable}.
\end{definition}

A special case of the above definition is when all skeletons (source and target) are \emph{isomorphic} to each other (Def.~\ref{adx:def:iso}.
This is the task of same-skeleton identification.
When the target skeleton is non-isomorphic to every source skeleton, we refer to the task as cross-skeleton identification.

\begin{example}[Relational identification across skeletons]\label{ex:relid}
Continuing Ex.~\ref{ex:relintro}, suppose we are given the graph \(\G\) in Fig.~\ref{fig:rcg}.
Let the target skeleton be \(\rho_B\) (Fig.~\ref{fig:skels}) and consider the query \(
P^{\rho_B}(c_1.B=1 \mid do(s_1.W=1)),
\)
the effect of setting signal \(s_1\) to `walk' on whether car \(c_1\) brakes.
A non-relational identification task for this query would assume all available distributions are from the same skeleton \(\rho_B\).
On the other hand, relational identification can ask whether the same query is answerable from a distribution for a different skeleton, e.g., \(P(\*v_{\rho_A})\).

Crucially, the target query is not interchangeable with \(P^{\rho_A}(c_1.B=1 \mid do(s_1.W=1))\):
in \(\rho_A\), \(s_1\) controls two pedestrians in the path of \(c_1\), whereas in \(\rho_B\),  \(s_1\)  controls only one pedestrian in the path of \(c_1\).
So, the contribution of \(s_1.W\) need not be the same.
The target quantity is also not interchangeable with \(
P^{\rho_B}(c_2.B=1 \mid do(s_1.W=1))
\), since \(s_1\) has no effect on \(c_2\).
A non-relational query such as \(P(\msf{Car}.B=1 \mid do(\msf{Sig}.W=1))\), the usual target of causal inference, hides this variation.
\hfill \(\square\)
\end{example}

\subsection{Identification Machinery} \label{sec:id:symb}

We now introduce tools for relational identification.

\paragraph{Observational inference.}
We first ask when observational distributions can be transferred across skeletons, addressing the limitation in Thm.~\ref{thm:obsimposs}. Say a ground variable \(o.A\) is `unconfounded' if there are no bidirected edges incident to it in the ground graph \(\G_\rho\).

\begin{theorem}[Observational identification across skeletons] \label{thm:obstransfer}
Consider a schema \(\mc{S}\), relational causal graph \(\G\), source skeleton \(\rho\), and target skeleton \(\rho_\star\). 
Let \(o.A\) be an unconfounded variable in \(\*V_{\rho_\star}\).
The conditional \(P(o.a \mid \*{pa}_{o.A},  \*{pa}^{r}_{o.A})\) is relationally identifiable from \(\G\) and \(P(\*v_\rho)\) if there exists a source instance \(o' \in \rho\) such that \(o'.A\) is unconfounded and \(\dom(\*{Pa}^{r}_{o.A}) \subseteq \dom(\*{Pa}^{r}_{o'.A})\).
In this case, \(P(o.a \mid \*{pa}_{o.A},  \*{pa}^{r}_{o.A}) = P(o'.a \mid \*{pa}_{o'.A},  \*{pa}^{r}_{o'.A})\).
\end{theorem}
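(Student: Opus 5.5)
The plan is to show that both conditionals are determined by the same template object. That object is the mechanism \(f_{O.A}\) together with the exogenous distribution of \(O.A\)'s exogenous parents, and both are shared across all groundings of an RSCM. The first step is to write down what unconfoundedness buys in the ground model. In \(\mc M_{\rho_\star}\), \(o.A \gets f_{O.A}(\*{pa}_{o.A}, \*u_{o.A}, \*{pa}^r_{o.A}, \*u^r_{o.A})\). Because there are no bidirected edges incident to \(o.A\) in \(\G_{\rho_\star}\), the exogenous arguments \((\*U_{o.A},\*U^r_{o.A})\) share no component with the exogenous parents of any other ground variable. By recursiveness of \(\mc M_{\rho_\star}\) (Def.~\ref{adx:def:recursive}), every endogenous ancestor of \(o.A\), and in particular \(\*{Pa}_{o.A}\cup\*{Pa}^r_{o.A}\), is a function of exogenous variables disjoint from \((\*U_{o.A},\*U^r_{o.A})\). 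Since \(P(\*U)\) factorizes over \(O.U\), these exogenous variables are independent of the parents.

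This yields the standard truncated-factorization identity
\[
P^{\mc M_{\rho_\star}}(o.a \mid \*{pa}_{o.A}, \*{pa}^r_{o.A}) = P\big(f_{O.A}(\*{pa}_{o.A}, \*U_{O.A}, \*{pa}^r_{o.A}, \*U^r_{O.A}) = a\big),
\]
where the right-hand side is taken over the template exogenous distribution with the multisets of relational parent values fixed, and is defined whenever the conditioning event has positive probability. The second step repeats the argument verbatim for \(o'.A\) in \(\mc M_{\rho}\), using that \(o'.A\) is unconfounded in \(\G_\rho\). This gives the same expression evaluated at \((\*{pa}_{o'.A},\*{pa}^r_{o'.A})\). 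Non-relational parents are attributes of the same type \(O\), so their domains coincide. The hypothesis \(\dom(\*{Pa}^r_{o.A})\subseteq\dom(\*{Pa}^r_{o'.A})\) says that every configuration of relational parent inputs (the multisets fed into \(f_{O.A}\) and its aggregators) that can arise at \(o\) can also arise at \(o'\). Hence for each target value the two functionals coincide, giving the claimed equality \(P(o.a\mid\cdot)=P(o'.a\mid\cdot)\).

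The third step turns this into identifiability in the sense of Def.~\ref{def:rcid}. Take any \(\mc M,\mc M'\) consistent with \(\G\) that agree on \(P(\*v_\rho)\) with positivity. Then they agree on \(P(o'.a\mid \*{pa}_{o'.A},\*{pa}^r_{o'.A})\) for every parent value in \(\dom(\*{Pa}^r_{o'.A})\), because this conditional is a function of \(P(\*v_\rho)\) and positivity ensures the conditioning event has positive mass. By the identity above, both models therefore agree on the target conditional for every value in \(\dom(\*{Pa}^r_{o.A})\).

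The main obstacle is making precise the notion of ``domain'' of relational parents. One must argue that the input to \(f_{O.A}\) depends only on the (aggregated) multiset of parent values, not on the identities of the ground objects or on the skeleton. Without this, two instances with equal parent values could be treated differently. I would handle it by appeal to the definition of ground RSCMs (Def.~\ref{adx:def:rscm-ground}): grounding only substitutes ground variables into the shared \(f_{O.A}\), so \(f_{O.A}\) sees a skeleton-independent multiset. I would also check that consistency with \(\G\) guarantees the exogenous relational parents of an unconfounded variable have the same template distribution in both groundings.
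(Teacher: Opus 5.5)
Your route is the paper's. Unconfoundedness makes the exogenous inputs of \(o.A\) independent of its parents, so the conditional reduces to \(\sum_{\*u}\mathbf{1}[f_{O.A}(\*{pa}_{o.A},\*u,\*{pa}^r_{o.A})=a]\,P(\*u)\). Sharing of \(f_{O.A}\) and of the exogenous distributions across groundings then makes this coincide for \(o\) and \(o'\), and agreement on \(P(\*v_\rho)\) with positivity transfers it between models. Your justification of \(\*U_{o.A}\perp\!\!\!\perp(\*{Pa}_{o.A},\*{Pa}^r_{o.A})\) via recursiveness and the factorization of \(P(\*U)\) is more explicit than the paper's appeal to the ground-graph lemma.

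The step that does not go through is the one you flagged, and the check you propose cannot succeed. Consistency with \(\G\) does not make \(\*U^r_{o.A}\) and \(\*U^r_{o'.A}\) equal in distribution: their multiset sizes depend on the skeleton, and unconfoundedness only forbids \emph{sharing} these inputs, not having more of them. So there is no single ``template distribution'' of \(\*U^r_{O.A}\) to average over in your step-1 identity.

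Here is a concrete counterexample. Take types \(O,Z\), a relation \(R(O,Z)\), \(Z.U\sim\mathcal{B}(1/2)\), and \(O.A\gets\bigvee_{R(O,Z)}Z.U\). Let \(\rho\) contain one \(o'\) related to one \(z\), and let \(\rho_\star\) contain one \(o\) related to \(z_1,z_2\). No other variable uses these \(Z.U\), so \(o'.A\) and \(o.A\) are unconfounded, and with no endogenous parents the domain condition holds trivially. Yet \(P(o'.A=1)=1/2\) while \(P(o.A=1)=3/4\). Replacing \(\bigvee\) by \(\bigoplus\) gives a model with the same graph and the same (positive) source distribution but target value \(1/2\), so the target is not even identifiable.

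The paper's proof escapes this only tacitly. It writes \(o.A\gets f_{O.A}(\*{pa}_{o.A},\*u_{o.A},\*{pa}^r_{o.A})\) with no \(\*u^r\) term, so that \(\*U_{o.A}\) and \(\*U_{o'.A}\) are copies of the same template variables. To complete your proof you have two options:
\begin{itemize}
\item state that assumption explicitly, namely that \(O.A\) has no exogenous relational parents, which is automatic when \(\G\) is \(\rho\)-Markovian; or
\item strengthen the hypothesis so that \(o\) and \(o'\) have matching exogenous relational neighbourhoods (same counts and same overlap pattern).
\end{itemize}
With either fix, your argument goes through as written.
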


\begin{figure}[t]
    \centering
    \includegraphics[width=0.9\linewidth]{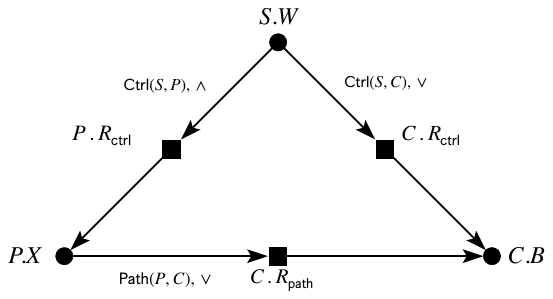}
\caption{A relational causal graph (Def.~\ref{def:rcg}) for the traffic RSCM in Ex.~\ref{ex:rscm-class}. 
\(\msf{Sig}.W\) denotes the state of a signal, \(\msf{Ped}.X\) whether a pedestrian crosses, and \(\msf{Car}.B\) whether a car brakes. 
A signal affects a pedestrian or car only if it controls them (\(\mathsf{Ctrl}\)), and a pedestrian affects a car only if they are in the car's path  (\(\mathsf{Path}\)). 
The relational nodes represent aggregated values (\(\mathsf{or, and}\)) of the related objects.}
\label{fig:rcg}
\end{figure}
If \(\G\) is Markovian, (1) holds automatically, allowing us to recover the full target observational distribution \(P(\*v_{\rho_\star})\) if the support condition is satisfied for each instance (Cor.~\ref{adx:cor:obstransfer}).

\begin{example}[Observational identification across skeletons]
Continuing Ex.~\ref{ex:relintro}, let \(\rho_A\) and \(\rho_C\) be the source and target skeletons respectively, and \(\G\) (Fig.~\ref{fig:rcg}) the relational causal graph.
The conditional \(P^{\rho_C}(c_2.b \mid s_2.w, p_2.x, p_3.x)\) in the target is identifiable from \(P(\*v_{\rho_A})\) and \(\G\).
This is because \(c_1.B\) in \(\*V_{\rho_A}\) and \(c_2.B\) in \(\*V_{\rho_C}\) are both unconfounded, and affected by exactly two pedestrians and one signal (and therefore have the same parent domains). 
In particular, writing multisets of values \(\bar{w}=\{s_2.w\}\) and \(\bar{x}=\{p_2.x, p_3.x\}\),
\begin{align*}
    &P^{\rho_C}(c_2.b \mid s_2.w, p_2.x, p_3.x) \\
    &= P^{\rho_C}(c_2.b \mid \{s_2.W\} = \bar{w}, \{p_2.X, p_3.X\} = \bar{x}) \\
    &= P^{\rho_A}(c_1.b \mid \{s_1.W\} = \bar{w}, \{p_1.X, p_2.X\} = \bar{x}).
\end{align*}
Consider a more complex query \(P^{\rho_C}(c_1.b \mid s_1.w, s_2.w, p_2.x, p_2.x)\) in the target.
There is no car in the source controlled by two signals.
If the parent domains are multisets of values, no car in the source meets the support condition for this query.
However, note the aggregation constraint in \(\G\): \(\msf{Car}.B\) only depends on its controlling signals via the aggregate \(\lor\). 
Then, letting \(\bar{w}=\lor(s_1.w,s_2.w)\) and \(\bar{x}=\lor(p_2.x, p_3.x)\), we have
\begin{align*}
& P^{\rho_C}(c_1.b \mid s_1.w, s_2.w, p_2.x, p_2.x) \\
& = P^{\rho_C}(c_1.b \mid\lor(s_1.W, s_2.W) = \bar{w}, \lor(p_1.X, p_2.X)=\bar{x}) \\
& = P^{\rho_A}(c_1.b \mid \lor(s_1.W) = \bar{w}, \lor(p_1.X, p_2.X)=\bar{x}),
\end{align*}
rendering the query identifiable.
\hfill \(\square\)
\end{example}

\paragraph{Causal inference.}
For the task of same-skeleton identification, we show how the \textit{ctf-calculus} \cite{correa:25}, a recent generalization of do-calculus \cite{pearl:09}, can be used to show identifiability.

\begin{proposition}[Sufficient condition for same-skeleton relational identification]\label{prop:sameskelid}
Consider a schema \(\mc{S}\), relational causal graph \(\G\), skeleton \(\rho\), and family of interventional distributions  \(\mathbb{P}\) over \(\*V_{\rho}\).
If \(P(\*{y_*} \mid \*{x_*})\) is identifiable via ctf-calculus from the marginalized ground graph \(\bar{\G}_\rho\) and \(\mathbb{P}\), then it is also relationally identifiable from \(\G\) and \(\mathbb{P}\).
\end{proposition}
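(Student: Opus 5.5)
The argument is a reduction: every RSCM compatible with \(\G\) grounds to a standard SCM compatible with \(\bar{\G}_\rho\), so the soundness of ctf-calculus for standard SCMs carries over. Concretely, we show that the set of ground SCMs \(\{\mc{M}_\rho : \mc{M} \text{ an RSCM consistent with } \G\}\) is contained in the set of standard SCMs consistent with \(\bar{\G}_\rho\). Relational identifiability quantifies over a smaller class of models, so any equality that holds for all SCMs compatible with \(\bar{\G}_\rho\) also holds for all groundings.

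\textbf{Step 1 (grounding yields a compatible SCM).} Fix an RSCM \(\mc{M}\) consistent with \(\G\), and consider its ground model \(\mc{M}_\rho\) (Def.~\ref{adx:def:rscm-ground}).
\begin{itemize}
\item \(\mc{M}_\rho\) is a standard SCM over \(\*V_\rho, \*U_\rho\).
\item It is recursive by the standing assumption.
\item Its exogenous variables are jointly independent, since \(P(\*U)\) factorizes over the \(O.U\) and each ground \(o.U\) is an independent copy.
\item The mechanism for \(o.A\) is \(f_{O.A}\) with its relational parents replaced by the ground variables \(t.W\) satisfying \(\phi\). Each endogenous argument is therefore either a same-instance parent \(o.B\), with \(O.B\to O.A\) in \(\G\), or a related \(t.W\). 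In the latter case \(\G_\rho\) contains the path \(t.W \to o.R \to o.A\), which becomes \(t.W\to o.A\) after marginalizing the relational node.
\item Two ground variables share an exogenous parent only if they belong to the same instance with overlapping \(\*U_{O.A}\cap\*U_{O.B}\), or both take some \(z.U\) as an exogenous relational parent with the constraint \(\exists Z:\phi_1\wedge\phi_2\) satisfied. In either case \(\G_\rho\), and hence \(\bar{\G}_\rho\), contains the corresponding bidirected edge.
\end{itemize}
Together these show that the induced causal graph of \(\mc{M}_\rho\) is a subgraph of \(\bar{\G}_\rho\). The interventional and counterfactual distributions of \(\mc{M}_\rho\) are by definition (Def.~\ref{adx:def:rscmeval}) those of this SCM.

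\textbf{Step 2 (invoke soundness).} Suppose \(\mc{M},\mc{M}'\) are consistent with \(\G\) and agree on \(\mathbb{P}\). By Step 1, \(\mc{M}_\rho,\mc{M}'_\rho\) are SCMs compatible with \(\bar{\G}_\rho\) and agree on \(\mathbb{P}\), positivity included. The ctf-calculus is sound (\cite{correa:25}): any derivation expressing \(P(\*{y_*}\mid\*{x_*})\) as a functional of \(\mathbb{P}\) is valid in every SCM compatible with the graph. Here "compatible" means the graph's missing edges encode valid restrictions, which a subgraph satisfies. Both ground models therefore evaluate the query to the same functional of the same inputs, so they agree, and the query is relationally identifiable by Def.~\ref{def:rcid}.

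\textbf{Main obstacle.} The delicate step is Step 1, in two respects.
\begin{itemize}
\item \emph{Marginalizing relational nodes.} We must check that this operation (Def.~\ref{adx:def:marg-ground-rcg}) does not lose any dependence. A relational node is a deterministic function of its parents, with no exogenous input of its own, so substituting it into \(f_{O.A}\) gives a legitimate SCM mechanism whose parents are exactly the marginalized graph's parents.
\item \emph{Confounding from shared relational exogenous variables.} The bidirected edges arising this way must be fully captured in \(\bar{\G}_\rho\), including exogenous variables attached to relation instances.
\end{itemize}
Once this graph-containment claim is established, the rest is a direct application of ctf-calculus soundness.
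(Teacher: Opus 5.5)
Your proposal is correct and is essentially the paper's argument: the paper cites Lemma~\ref{adx:lemma:marginal-con} (that $\mathcal{M}_\rho$ induces exactly $\bar{\G}_\rho$) and then applies soundness of the ctf-calculus, while you re-derive inline only the containment direction of that lemma, which is all soundness requires. One caveat, inherited from that lemma and from Def.~\ref{def:rcg}: your case split for shared exogenous parents omits a $z.U$ that is a non-relational exogenous parent of some $z.C$ and at the same time an exogenous relational parent of some $t.B$; the graph gets no bidirected edge in that case, so your containment claim, like the paper's, tacitly assumes such mixed confounding is either excluded or recorded in $\G$.
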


As a corollary, we recover the well-known backdoor adjustment formula \cite{pearl:09} in the relational setting (Cor.~\ref{adx:cor:backdoor}).

\begin{example}[Same-skeleton relational backdoor.]
Continuing Ex.~\ref{ex:relintro}, let \(\rho=\rho_A\) be the skeleton of interest, \(\G\) (Fig.~\ref{fig:rcg}) the graph, and  \(P(c_1.B \mid do(p_1.X))\) the query.
In the marginalized ground graph \(\bar{\G}_{\rho_A}\) (Fig.~\ref{fig:skels}), \(\*Z = \{s_1.W, p_2.X\}\) is a valid backdoor adjustment set for the query.
This gives
\begin{align*}
    &P(c_1.b \mid do(p_1.x)) =\\ &\sum_{p_2.x, s_1.w} P(c_1.b \mid p_1.x, p_2.x, s_1.2)\cdot P(p_2.x, s_1.w)
\end{align*}
\end{example}

Next, we turn to non-identifiability.
Many practical queries are within-instance: they ask how intervening on an individual’s treatment affects that same individual’s outcome when individuals interact (e.g., showing an online ad to one user in a social network and measuring that user’s purchases).
We show that if a within-instance query is standardly non-identifiable, then it is also relationally non-identifiable.

\begin{proposition}[Necessary condition for within-instance relational identification]\label{prop:relid:nec}
Consider a schema \(\mc{S}\), relational causal graph \(\G\), source skeletons \(\rho_1,\dots,\rho_l\) with available interventional distributions \(\mathbb{P}\), and a target skeleton \(\rho_\star\).
Let \(o\in \rho_\star\) be a target instance and consider a counterfactual query \(P(\*y_\star \mid \*x_\star)\) with
\(\*Y_\star,\*X_\star \subseteq \*V_o\), the attributes of \(o\).

Let the restriction  \(\mathbb{P}|_O\) be as follows.
For each source skeleton \(\rho_k\), each distribution \(P(\*{v}_{\rho_k} \mid do(\*x_{k, j})) \in \mathbb{P}\), and each object \(o' \in \rho_k(O)\), include \(P(\*{v}_{\rho_k, o'} \mid do(\*x_{k, j} \cap \*{v}_{\rho_k, o'}))\) in \(\mathbb{P}|_O\), with instance identifiers omitted.
Let  \(\G_o\) be the induced subgraph of the marginalized ground graph \(\bar{\G}_{\rho_\star}\) on \(\*V_o\) with instance identifiers omitted. 

If \(P(\*y_\star \mid \*x_\star)\) is non-identifiable via ctf-calculus from \(\mathbb{P}|_O\) and \(\G_o\), then it is relationally non-identifiable from \(\G\) and \(\mathbb{P}\).
\end{proposition}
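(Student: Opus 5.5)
The plan is to take the non-identifiability witness at the single-object level and lift it to two full RSCMs that agree on all source data but disagree on the target query. Since \(P(\*y_\star \mid \*x_\star)\) is non-identifiable via ctf-calculus from \(\mathbb{P}|_O\) and \(\G_o\), I would first invoke the completeness of ctf-calculus \cite{correa:25}. This yields two standard SCMs \(\mc{N}, \mc{N}'\) over the variables \(\*V_O\) (attributes of type \(O\), identifiers omitted), both compatible with \(\G_o\). They agree on every distribution in \(\mathbb{P}|_O\) with positivity, yet \(P^{\mc{N}}(\*y_\star\mid\*x_\star)\neq P^{\mc{N}'}(\*y_\star\mid\*x_\star)\).

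\textbf{Step 1: construct the RSCMs.} I would build \(\mc{M}\) and \(\mc{M}'\), consistent with \(\G\), that ignore every relational parent and differ only in type \(O\).
\begin{itemize}
\item Each mechanism \(f_{O.A}\) takes its relational inputs (endogenous and exogenous) as arguments but does not depend on them.
\item The mechanisms of \(O\)-attributes on non-relational parents are copied from \(\mc{N}\) (respectively \(\mc{N}'\)).
\item Within-object confounding is realized through the object-level exogenous variables \(O.U\), matching the bidirected edges of \(\G_o\).
\item All other types get arbitrary fixed mechanisms with full-support noise, identical in \(\mc{M}\) and \(\mc{M}'\).
\end{itemize}
Consistency with \(\G\) holds because a model may always omit a dependence that the graph permits. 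The one subtlety is that \(\G_o\) is taken from the marginalized ground graph, so it may carry bidirected edges induced by shared relational exogenous parents. I would realize those through the permitted relational exogenous parent \(\*U^r\), chosen so that each object of type \(O\) receives its own copy of the confounder.

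\textbf{Step 2: check agreement on the source data.} In every grounding, the objects are mutually independent blocks. The joint \(P(\*v_{\rho_k}\mid do(\*x_{k,j}))\) therefore factorizes into a product over objects of per-object interventional distributions. For \(o'\in\rho_k(O)\), each factor is exactly the element of \(\mathbb{P}|_O\) obtained by restricting the intervention to \(o'\), and \(\mc{N}\) and \(\mc{N}'\) agree on these. The other factors are identical by construction. Hence the source distributions coincide, and positivity follows from positivity of the factors.

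\textbf{Step 3: check disagreement on the target.} In \(\rho_\star\), the query involves only \(\*V_o\). By the same independence, \(P^{\mc{M}_{\rho_\star}}(\*y_\star\mid\*x_\star)\) equals the corresponding quantity computed in \(\mc{N}\) or \(\mc{N}'\), so \(\mc{M}\) and \(\mc{M}'\) disagree. By Def.~\ref{def:rcid}, the query is relationally non-identifiable.

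\textbf{Expected main obstacle.} The hardest part is Step 1: making the relationally induced within-object confounding in \(\G_o\) realizable while keeping objects independent. A shared relational exogenous variable (for example, a \(Z.U\) linked to several \(O\) objects) could correlate different objects and break the factorization in Step 2. My first attempt would let each \(O\)-mechanism read only a designated coordinate of the relational exogenous input that is private to its own object, for instance via a constraint pairing it with a dedicated relation instance. If that fails, I would instead show that a dependence across objects which is symmetric does not affect the per-object marginals used in \(\mathbb{P}|_O\), though it may affect the joint source distributions, which would require a more careful matching argument.
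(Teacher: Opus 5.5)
Your main line is the same as the paper's proof. Completeness of ctf-calculus yields \(\mathcal{N},\mathcal{N}'\). The lifted models \(\mathcal{M},\mathcal{M}'\) have no relational effects, copy \(\mathcal{N}\) (resp.\ \(\mathcal{N}'\)) on type \(O\), and use identical independent noise elsewhere (the paper uses fair coins). Each source distribution then factorizes over instances into elements of \(\mathbb{P}|_O\) and identical factors, and the target query reduces to its value in \(\mathcal{N}\) versus \(\mathcal{N}'\).

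The obstacle you flag is one the paper's proof does not address. It puts every bidirected edge of \(\G_o\) into non-relational \(O\)-level noise and asserts that \(\G_{o'}=\G_o\) for every instance \(o'\) of every skeleton. That holds only if the within-object edges of \(\bar{\G}_\rho\) come from the non-relational subgraph of \(\G\) (e.g.\ \(\G\) is \(\rho\)-Markovian). So you are more careful than the paper here, but your treatment is not yet a proof, and one of your two routes cannot work.

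\textbf{Why the fallback fails.} Def.~\ref{def:rcid} compares full joint source distributions, and a genuinely shared \(z.U\) can carry identifying information. Consider the following setup:
\begin{itemize}
\item types \(O,Z\), a relation \(R(O,Z)\), and \(\G\) with \(O.A\to O.B\), where both attributes have the relational exogenous parent \((\{Z.U\},R)\) and there is no non-relational bidirected edge;
\item \(\rho=\rho_\star\) containing \(o_1,o_2\), both related to a single \(z\);
\item \(\mathbb{P}\) consisting of \(P(\mathbf{v}_\rho)\) together with all \(P(\mathbf{v}_\rho\mid do(o_1.A=a))\);
\item the query \(P(o_1.B_a=b\mid o_1.A=a')\) with \(a\neq a'\) (effect of treatment on the treated), where \(A\) takes at least three values.
\end{itemize}
Then \(\G_o\) is the bow graph, \(\mathbb{P}|_O=\{P(A,B),P(A,B\mid do(a))\}\), and the query is not identifiable from these. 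But if the confounding is carried by the shared \(z.U\), then given \(z.U\) the pairs \((o_1.B_a,o_1.A)\) and \((o_1.B_a,o_2.A)\) are built from independent noises with identical laws. Hence
\[
P(o_1.B_a=b,\ o_1.A=a')=P(o_1.B=b,\ o_2.A=a'\mid do(o_1.A=a)),
\]
which is read off the source data. No shared-\(z.U\) lift can therefore produce a witness.

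\textbf{How to make your first idea work.} Privacy must come from the constraint, because a single permutation-invariant \(f_{O.A}\) cannot pick out ``its own'' element of a multiset. For example, give \(O\) a fresh exogenous variable used as a relational exogenous parent under
\[
\psi(O,O'):=(O=O')\wedge\exists Z\,(\phi_1(O,Z)\wedge\phi_2(O,Z)).
\]
Let the mechanisms act as \(\mathcal{N}\) (resp.\ \(\mathcal{N}'\)) only when all required multisets are nonempty, and otherwise as a coin flip common to \(\mathcal{M},\mathcal{M}'\). This branching is needed because source objects lacking an edge of \(\G_o\) cannot host a copy of \(\mathcal{N}\).

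The same device is needed for within-object directed edges coming from relational parents whose constraint holds at \(\phi(o,o)\). Your Step 1 cannot copy these because it ignores relational inputs.

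With these changes, instances stay independent and every grounded graph lies inside \(\bar{\G}_\rho\), so your Steps 2--3 go through. This works only if ``consistent with \(\G\)'' means grounded-subgraph containment, which the paper never pins down. If it instead means using only the confounders that \(\G\) names, the example above refutes the proposition itself.
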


See Ex.~\ref{adx:ex:nonid} for an application of the above result.

This section developed symbolic criteria for relational identification across a range of settings. 
We next introduce a neural approach that generalizes these criteria and provides a practical route to identification.
\begin{algorithm}[tb]
  \caption{\(\mathsf{RelationalNeuralID}\)}
  \label{alg:rnid}
  \begin{algorithmic}
    \STATE {\bfseries Input:} schema \(\mc{S}\),  relational causal graph \(\G\), source data  \(\mc{D}=\big\{\big(\rho_k,\ \{P(\*v_{\rho_k}\mid do(\*x_{k,j}))\}_{j=1}^{m_k}\big)\big\}_{k=1}^{l}\), target skeleton \(\rho_\star\), query \(P(\*{y_*}\mid \*{x_*})\)
    \STATE \(\hat{M} \gets \G\text{-RNCM}\)

    \STATE \(
        \begin{aligned}
        \hat{\theta}_{l} \gets &\arg\min_{\theta \in \Theta(\hat{M})}
        P^{\hat{M}_{\rho_\star}(\theta)}(\*{y_*}\mid \*{x_*}) \text{ subject to } \forall k, j\\
        &  \ 
        P^{\hat{M}_{\rho_k}(\theta)}(\*v_{\rho_k}\mid do(\*x_{k,j})) = P(\*v_{\rho_k} \mid do(\*x_{k,j}))\ 
        \end{aligned}
        \)
    \STATE \(q_l \gets P^{\hat{M}_{\rho_\star}(\hat{\theta}_{l})}(\*{y_*}\mid \*{x_*})\)

    \STATE \(
        \begin{aligned}
        \hat{\theta}_{r} \gets &\arg\max_{\theta \in \Theta(\hat{M})}
        P^{\hat{M}_{\rho_\star}(\theta)}(\*{y_*}\mid \*{x_*}) \text{ subject to }\forall k, j \\
        & \ 
        P^{\hat{M}_{\rho_k}(\theta)}(\*v_{\rho_k}\mid do(\*x_{k,j})) = P(\*v_{\rho_k}\mid do(\*x_{k,j}))\ 
        \end{aligned}
        \)
    \STATE \(q_r \gets P^{\hat{M}_{\rho_\star}(\hat{\theta}_{r})}(\*{y_*}\mid \*{x_*})\)

    \IF{\(q_l = q_r\)}
        \STATE {\bfseries return} \(q_l\)
    \ELSE
        \STATE {\bfseries return} \(\tt{FAIL}\) 
    \ENDIF
  \end{algorithmic}
\end{algorithm}

\section{Relational Neural Causal Models} \label{sec:id:rncm}

Neural causal models (NCMs) \cite{xia:21, xia:23} parameterize an SCM with neural networks.
We adopt the same idea in the relational setting, developing a neural RSCM constrained by a given graph to enable relational identification from graph and data.

A neural RSCM contains one neural network for each variable \(O.A\); this network is reused for all ground \(o.A\).
The given graph \(\G\) constrains which inputs each network may use.
In the traffic example, this means one network for \(\msf{Car}.B\) is applied to every car's \(c.B\), taking as input that car's non-relational parents (e.g., \(c.U_B\)) and relational parents (e.g., the counts of \(s.W\) for controlling signals and \(\msf{Ped}.X\) for in-path pedestrians).

\paragraph{Assumptions.}
We assume throughout this section that observed attributes are discrete and finite, that \(\G\) is \(\rho\)-Markovian (no unobserved confounding between different instances), and that each relational-parent multiset has bounded size (see Sec.~\ref{adx:sec:proofs:rncm}).

\begin{figure*}
    \centering
    \begin{subfigure}[c]{0.335\linewidth}
        \includegraphics[width=\linewidth]{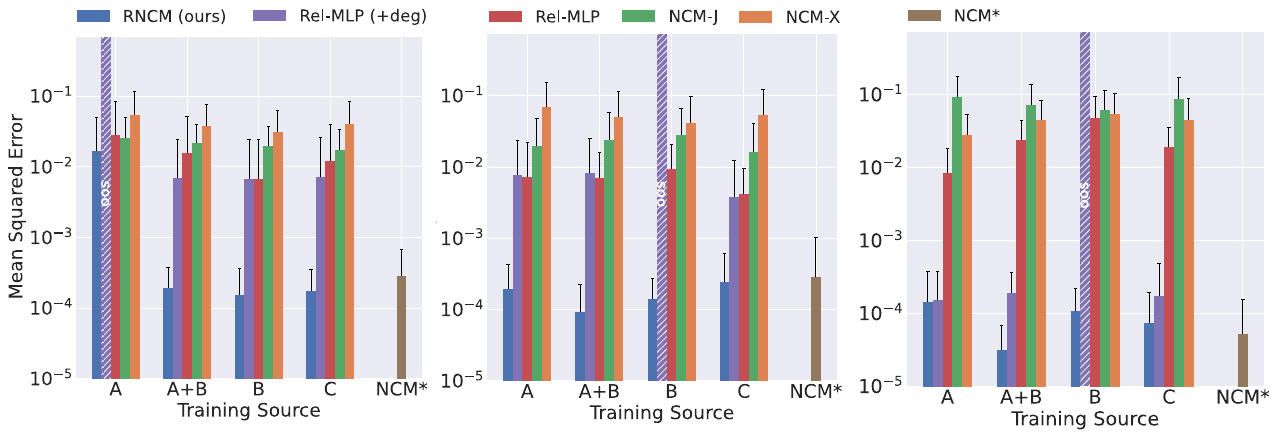}
        \caption{Target \(c_1\) (2 signals, 2 pedestrians)}
        \label{fig:est:mse:c-c0}
    \end{subfigure}
    \begin{subfigure}[c]{0.32\linewidth}
        \includegraphics[width=\linewidth]{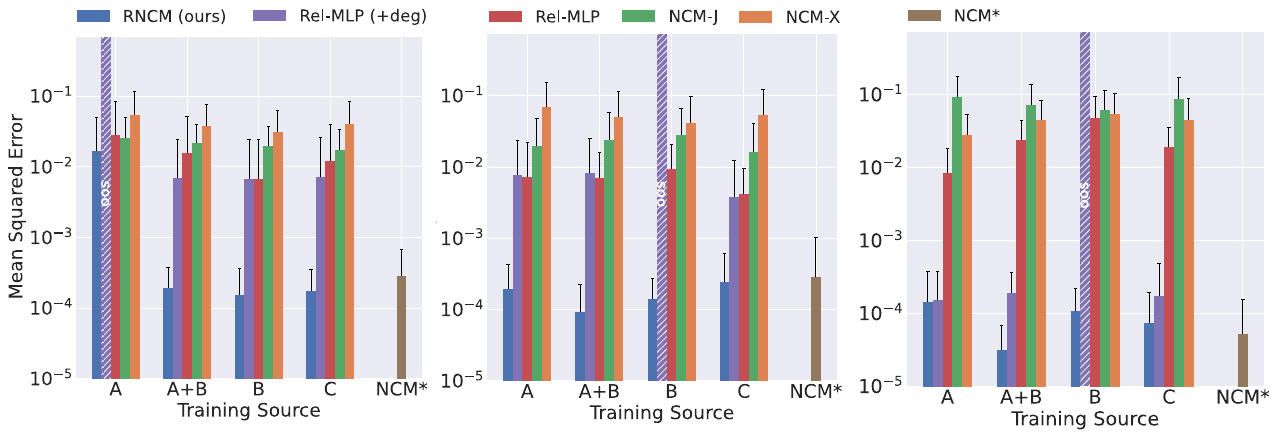}
        \caption{Target \(c_2\) (1 signal, 2 pedestrians)}
        \label{fig:est:mse:c-c1}
    \end{subfigure}
    \begin{subfigure}[c]{0.32\linewidth}
        \includegraphics[width=\linewidth]{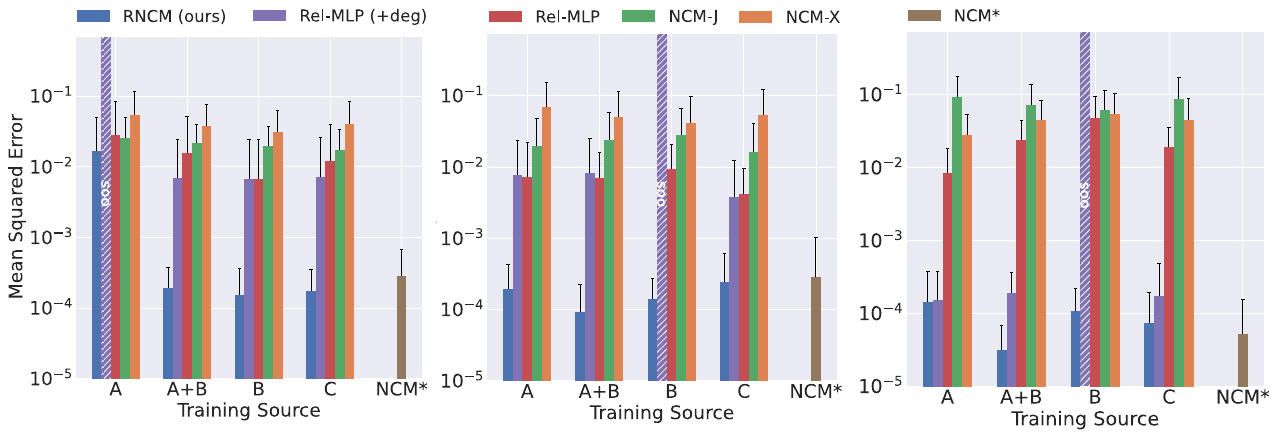}
        \caption{Target \(c_3\) (0 signals, 1 pedestrian)}
        \label{fig:est:mse:c-c2}
    \end{subfigure}
    \caption{Estimation accuracy of RNCMs for causal effects across the traffic scenes in Fig.~\ref{fig:skels} (Exp.~\ref{sec:exp:transfer}).
    The x-axis denotes the source skeleton used for training.
    The y-axis denotes \(\log(\mathrm{MSE})\) to ground truth of various methods in estimating an interventional query in the target skeleton \(\rho_C\), averaged over 10 seeds (\textbf{lower is better}). Each panel represents a different car for which the query is formulated. 
    RNCMs consistently outperform  baselines \textemdash even when the baselines are trained directly on the target skeleton while the RNCM is not.
    In identifiable cases, RNCMs often match the gold-standard NCM\(^*\) trained directly on the target.
    Performance degrades only in non-identifiable settings (training on \(\rho_A\) and evaluating on \(c_1\)).
    `OOS' denotes out-of-support cases for which the estimate is undefined.}
    \label{fig:exp:transfer}
\end{figure*}

\begin{definition}[\(\G\)-Constrained Relational Neural Causal Model (\(\G\)-RNCM)] \label{def:rncm}
Consider a schema \(\mc{S}\) and a relational causal graph \(\G\).
A  \(\G\)-RNCM \(\mc{N} = \langle \mc{S}, \*{V}, \*{U}, \mc{F}, P(\*U)\rangle\) is an RSCM constructed as follows. 
\begin{enumerate}
    \item \(\*V\) contains a variable \(O.A\) for every non-relational node \(O.A\) in \(\G\).
    \item For every object type \(O \in \mc{E \cup R}\), and every  maximal bidirected clique \(\*C\) in \(\G\) over variables belonging to type \(O\), \(\*U\) contains a variable \(O.U_\*C \sim \mathcal{U}([0,1])\).\footnote{See Sec.~\ref{sec:adx:def:relid} for the definition of a maximal bidirected clique.}
    \item For each \({O.A} \in \*V\), the mechanism \(f_{O.A}\) is a feed-forward neural network
\[
O.A \gets f_{O.A}(\*{pa}_{O.A}, \*{u}_{O.A}, \*{pa}^r_{O.A}).
\]
Above, \(\*{Pa}_{O.A}\) consists of variables \(O.B \in \*V\) with a directed edge \(O.B \to O.A\) in \(\G\);
\(\*{U}_{O.A}\) consists of variables \(O.U_\*C\) such that \(O.A\) is in the clique \(\*C\);
and \(\*{Pa}^r_{O.A}\) consists of multisets (or aggregates as specified in \(\G\)) of variables \(\*W\)  such that there is an edge \( O.R \to O.A\) in \(\G\) where \(O.R\) is a relational node with \(R=(\*W,\phi, \textrm{AGG})\). 

\end{enumerate}
\end{definition}

Note that since \(\G\) is \(\rho\)-Markovian, the set \(\*{U}^r_{O.A}\) is empty for each \(O.A\) in the definition above. We show that \(\G\)-RNCMs are expressive enough to represent any RSCM consistent with \(\G\), under our stated assumptions.

\begin{theorem}[Expressivity of RNCMs]\label{thm:rncm-expr}
Consider a relational schema \(\mc{S}\).
For every RSCM \(\mc{M}\) over \(\mc{S}\) inducing relational causal graph \(\G\), there exists a \(\G\)-RNCM \(\mc{N}\) such that for every skeleton \(\rho\), the ground RSCMs \(\mc{M}_\rho\) and \(\mc{N}_\rho\) induce the same counterfactual distributions over \(\*V_\rho\).
\end{theorem}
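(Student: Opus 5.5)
We follow the expressivity argument for NCMs \cite{xia:21}, lifted to the relational setting. The strategy is to build, type by type, a \(\G\)-RNCM whose mechanisms reproduce the \emph{canonical} (response-function) behaviour of \(\mc{M}\). Because the construction is done at the template level, it carries over to every grounding.

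\textbf{Step 1: response functions.} Fix an object type \(O\) and a maximal bidirected clique \(\*C\) of \(O\)-variables in \(\G\). Under our assumptions, each \(O.A\) has finite parent domains: its non-relational parents are discrete and finite, and each relational-parent multiset has bounded size over a finite domain, or is an aggregate with finite range. Hence the set \(H_{O.A}\) of functions from \(\dom(\*{Pa}_{O.A})\times\dom(\*{Pa}^r_{O.A})\) to \(\dom(O.A)\) is finite.

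For each fixed value \(\*u\) of the exogenous parents of the \(\*C\)-variables in \(\mc{M}\), the mechanisms \(f_{O.A}(\cdot,\*u,\cdot)\) for \(O.A\in\*C\) determine a tuple of response functions \((h_{O.A})_{O.A\in\*C}\). Pushing \(P(\*U)\) forward gives a distribution \(Q_{O,\*C}\) over \(\prod_{O.A\in\*C}H_{O.A}\).

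This step relies on \(\rho\)-Markovianity. Since there are no exogenous relational parents, the exogenous parents of \(o.A\) belong to \(o\) itself. It follows that, in \(\mc{M}_\rho\), the response-function tuples of distinct instances are independent and each is distributed as \(Q_{O,\*C}\), independently of \(\rho\). Within an instance, variables sharing exogenous noise are joined by bidirected edges. We group them by the (maximal) cliques, taking \(\*C\) as the connected bidirected components refined into cliques as in \cite{xia:21}.

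\textbf{Step 2: neural realization.} Let \(O.U_\*C\sim\mc{U}([0,1])\). Partition \([0,1]\) into finitely many intervals, with lengths given by \(Q_{O,\*C}\) and indexed by the response tuples. Define \(f_{O.A}(\*{pa},\*u_{O.A},\*{pa}^r)\) to output \(h_{O.A}(\*{pa},\*{pa}^r)\), where \(h_{O.A}\) is the component selected by the interval containing each relevant \(u_\*C\). A variable lying in several cliques selects jointly across them; this is handled exactly as in the NCM proof, by a product of interval indices. The resulting function is piecewise constant on finitely many boxes crossed with a finite discrete input domain. We invoke the standard result that such functions are exactly representable by feed-forward ReLU networks with step/threshold units, as used by \cite{xia:21}. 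The multiset inputs are encoded as bounded count vectors, which is possible because sizes are bounded and domains are finite.

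\textbf{Step 3: equality of counterfactuals.} For any \(\rho\), the groundings \(\mc{M}_\rho\) and \(\mc{N}_\rho\) are recursive SCMs on \(\*V_\rho\). Each induces the same joint distribution over the collection of ground response functions \(\{h_{o.A}\}\): by Step 1 the instances are i.i.d. copies of \(Q_{O,\*C}\), and by Step 2 \(\mc{N}\) samples exactly \(Q_{O,\*C}\) per instance. Every counterfactual quantity \(P(\*y_{\*x},\dots)\) is a function only of the joint law of the response functions, since interventions replace mechanisms and the remaining mechanisms act through their response functions. Therefore all counterfactual distributions coincide.

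\textbf{Main obstacle.} The delicate point is Step 1's reduction of the relational-parent input to a finite domain uniformly across skeletons. A single template network must work for every \(\rho\), so the response function of \(f_{O.A}\) must be defined on all multisets that can ever occur. This is where the bounded-size assumption is essential. Care is also needed to ensure that the canonicalization respects the aggregation annotations in \(\G\), so that \(\mc{N}\) is genuinely \(\G\)-constrained. We would handle this by defining \(H_{O.A}\) on the aggregated inputs whenever \(\G\) specifies \(\textrm{AGG}\), and on count vectors otherwise.
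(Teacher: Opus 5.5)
Your overall plan matches the paper's: reduce to finitely many response functions, drive them with per-clique uniform noise, share each network across instances, and use $\rho$-Markovianity for independence across instances. However, Steps 1--3 fail as soon as two maximal bidirected cliques of one type overlap.

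You let $Q_{O,\mathbf{C}}$ be the law of the response tuple of \emph{all} members of $\mathbf{C}$, and you have $\mathcal{N}$ draw one such tuple per clique from independent $O.U_{\mathbf{C}}$. A variable in $\mathbf{C}_1\cap\mathbf{C}_2$ then receives two possibly different response functions. More fundamentally, the family $\{Q_{O,\mathbf{C}}\}$ does not determine the within-instance joint law of the response functions, so the inference in Step 3 (``$\mathcal{N}$ samples exactly $Q_{O,\mathbf{C}}$ per instance, hence the same joint law'') is invalid.

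For instance, give $O$ parentless attributes $A,B,C$ and independent fair coins $U_1$ (shared by $A,B$), $U_2$ (shared by $B,C$) and $U_3$ (private to $B$). Compare the model $A=U_1$, $C=U_2$, $B=U_1\oplus U_2$ with the model that instead sets $B=U_1\oplus U_2\oplus U_3$.
\begin{itemize}
\item Both induce $A\leftrightarrow B\leftrightarrow C$ with maximal cliques $\{A,B\}$ and $\{B,C\}$.
\item In both, $(A,B)$ and $(B,C)$ are uniform on $\{0,1\}^2$, so every $Q_{O,\mathbf{C}}$ agrees.
\item Yet $A\oplus B\oplus C=0$ almost surely in the first model and is a fair coin in the second.
\end{itemize}
No rule built from the $Q_{O,\mathbf{C}}$ alone, a ``product of interval indices'' included, can reproduce both models. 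The discrepancy is already observational.

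The missing idea is that each clique must carry a \emph{piece of exogenous randomness}, not a marginal response tuple. Because $\mathcal{M}$ induces $\G$, the children of any exogenous variable form a bidirected clique. So every exogenous variable of type $O$ can be assigned to one maximal clique containing its children. The groups assigned to different cliques are then independent, and the response function of $O.A$ is a deterministic function of the groups of the cliques containing $O.A$.

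This is how the paper proceeds:
\begin{itemize}
\item It first passes, via the canonical partition of \cite{zhang:22}, to a discrete RSCM $\mathcal{M}'$ that still induces $\G$.
\item It uses an inverse-CDF network $g_{\mathbf{C}}$ so that $g_{\mathbf{C}}(\hat U_{\mathbf{C}})$ has the law of the discrete tuple $\mathbf{U}'_{\mathbf{C}}$ of exogenous parents attached to $\mathbf{C}$.
\item It builds $\hat f_{O.A}$ by composing an exact finite-domain network for $f'_{O.A}$ with the $g_{\mathbf{C}}$ for all $\mathbf{C}\ni O.A$.
\end{itemize}
Jumping straight to response functions gives you finiteness for free but discards how the randomness is split across overlapping cliques. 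The diagram-preserving discretization of the exogenous variables themselves is exactly what keeps that split.

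Your argument is correct as written when every bidirected component within a type is itself a clique. Replacing cliques by whole components would not rescue it in general, since $\mathcal{N}$ would then have bidirected edges absent from $\G$ and would not be a $\G$-RNCM.
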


The expressivity of RNCMs lays the groundwork for causal identification via neural methods.
In particular, we can now train the parameters of an RNCM to fit the source data while minimizing or maximizing the query on our target skeleton; this procedure is given in Alg.~\ref{alg:rnid}.
As a corollary of Thm.~\ref{thm:rncm-expr}, we can derive that Alg.~\ref{alg:rnid} is sound and complete for the task of relational identification (Cor.~\ref{adx:cor:rnid:corr}).

While Alg.~\ref{alg:rnid} is stated as taking distributions for inputs, in practice, and in our implementation, each distribution is observed only through finitely many samples.
The equality constraints are replaced by approximate constraints induced by a maximum-likelihood objective, adapting standard NCM training procedures to allow for parameter sharing across instances \cite{xia:21,xia:23} (Appendix~\ref{adx:sec:exp}).

\section{Experiments} \label{sec:exp}

\subsection{Estimation accuracy across traffic scenes}\label{sec:exp:transfer}

We evaluate how well RNCMs can estimate identifiable queries on both seen and unseen skeletons using Alg.~\ref{adx:alg:rnest}.
We use the traffic  schema in Ex.~\ref{ex:relintro} and graph \(\G\) in Fig.~\ref{fig:rcg}, but omit the aggregators, creating a more challenging task.\footnote{We use a histogram of counts of the different values in the domain of the relational parents. For discrete variables, this is a sufficient statistic for the multiset of relational parent values.}

\paragraph{Setup.}
We train on four source settings from Fig.~\ref{fig:skels}: three use a single skeleton for training (\(\rho_A, \rho_B\) or \(\rho_C\))  and one uses a pair of skeletons (\(\rho_A\) and \(\rho_B\)).
For each source skeleton, we generate \(n=10^4\) observational samples and train five models:
(i) a \(\G\)-RNCM;
(ii) \textsc{NCM-X}, a non-relational causal baseline that trains a standard NCM on data consisting of a Cartesian product of all \((s.W,p.X,c.B)\) triples;
(iii) \textsc{NCM-J}, a causal baseline with partial relational information, similar to  \textsc{NCM-X} but restricting to `joined' triples where \(s,p,c\) are pairwise related;
(iv) \textsc{Rel-MLP}, a relational non-causal baseline that predicts \(c.B\) from the number and states of related cars and pedestrians; and
(v) \textsc{Rel-MLP + deg}, a variant of \textsc{Rel-MLP} that predicts \(c.B\) only from data on cars with the same degree as \(c\) (see  Appendix~\ref{adx:sec:exp:details} for details).
We also train a gold-standard \textsc{NCM}\(^*\) directly on the target data and ground graph (without parameter sharing).
Then, we evaluate each trained model on the target skeleton \(\rho_\star = \rho_C\), estimating the query, for various cars \(c\): what is the probability that \(c\) will brake given that all the pedestrians in its path are set to `cross'?
For the three cars appearing in \(\rho_C\), this results in three queries total (Table~\ref{adx:tab:queries}); each query requires backdoor adjustment on \(s.W\) for signals \(s\) that both control \(c\) and control pedestrians \(p\) in the path of \(c\).

\paragraph{Comparison with baselines.} 
RNCMs consistently outperform flat NCMs and \textsc{Rel-MLP} variants, often by \(\approx 100\)x in identifiable source-target settings (Fig.~\ref{fig:exp:transfer}).
Notably, even when the RNCM is trained on a source skeleton distinct from the target, it outperforms baselines trained directly on the target.
For instance, an RNCM trained on source skeleton \(\rho_A\) outperforms \textsc{NCM-X} and \textsc{NCM-J} trained on \(\rho_C\) when estimating the query for car \(c_2\) in \(\rho_C\) (Fig.~\ref{fig:est:mse:c-c1}).
This highlights the importance of relational structure in determining causal effects.
A notable strength of \textsc{Rel-MLP + deg} is on car \(c_3\) (Fig.~\ref{fig:est:mse:c-c2}),.
For car \(c_3\), it happens to be that, due to the absence of confounding, \(P(c_3.b\mid do(p_3.x)) =P(c_3.b\mid p_3.x)\), explaining the success of this non-causal method. 
Finally, RNCMs trained on sources distinct from the target are typically strong enough to match or even exceed the gold-standard NCM\(^*\) trained directly on the target.
The main exception is training on \(\rho_A\) and evaluating on \(c_1\) (Fig.~\ref{fig:est:mse:c-c0}), a non-identifiable case which we discuss next.

\paragraph{Role of training source.} 
Our source-target combinations cover three cases: generalization to exactly matched neighbourhoods (e.g., source \(\rho_A\) to car \(c_2\) in \(\rho_C\)); generalization to smaller neighbourhoods (e.g., source \(\rho_B\) to car \(c_2\) in \(\rho_C\)); and generalization to larger neighbourhoods (e.g., source \(\rho_A\) to car \(c_1\) in \(\rho_C\)). 
We find that RNCMs succeed at the first two, while failing at the third, as predicted by the identifiability theory of Thm.~\ref{thm:obstransfer} and Prop.~\ref{prop:sameskelid}.
For instance, training on \(\rho_A\) underperforms on  \(c_1\) (Fig.~\ref{fig:est:mse:c-c0}): in \(\rho_A\), every car is controlled by at most one signal, so it fails the support condition (Thm.~\ref{thm:obstransfer}) for \(c_1\) (two signals).
Still, combining \(\rho_A\) and \(\rho_B\) in training recovers performance, illustrating how RNCMs integrate sources to improve accuracy.

\subsection{Identification accuracy across traffic scenes}\label{sec:exp:id}

We evaluate how well RNCMs are able to decide when a causal effect is identifiable, following Alg.~\ref{alg:rnid}.

\begin{figure}[ht]
    \centering
    \includegraphics[width=1\linewidth]{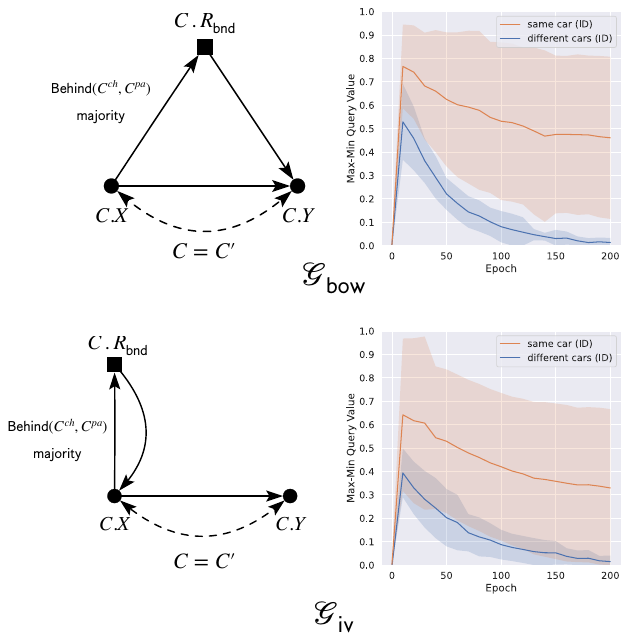}
    \caption{Identification accuracy of RNCMs on unseen traffic scenes
    (Exp.~\ref{sec:exp:id}). 
\textbf{(left)} Relational causal graphs
\(\G_{\mathsf{bow}}\) (top) and \(\G_{\mathsf{iv}}\) (bottom). Within each car \(C\), \(\msf{Car}.X\) affects and is confounded with \(\msf{Car}.Y\) in both graphs.
\textbf{(right)} The average max--min gap produced by \(\mathsf{RelationalNeuralID}\) (Alg.~\ref{alg:rnid}) collapses toward zero for the identifiable different-car query (blue) but remains large for the non-identifiable same-car query (orange). Shaded regions denote the 25th and 75th percentile across 10 random seeds.}
\label{fig:exp:id}
\vspace{-1em}
\end{figure}
\vspace{-1em}
\paragraph{Setup.}
We use a schema with one entity type \(\mathsf{Car}\) (\(C\)) and one relation \(\mathsf{Behind}\)\((\mathsf{Car}_1,\mathsf{Car}_2)\).
Each car has two observed attributes \(\msf{Car}.X\) and \(\msf{Car}.Y\).
The source skeleton
\(\rho\) has two cars \(c_1,c_2\) with \(\mathsf{Behind}(c_1,c_2)\).
The target skeleton \(\rho_\star\) has three cars \(c_1,c_2,c_3\) with
\(\mathsf{Behind}(c_1,c_2)\), \(\mathsf{Behind}(c_2,c_3)\), and \(\mathsf{Behind}(c_1,c_3)\).
We consider  two causal graphs, the relational bow \(\G_{\mathsf{bow}}\) and IV \(\G_{\mathsf{iv}}\) (Figs.~\ref{fig:exp:id}, \ref{adx:fig:exp-id-ground}).
Data-generation follows Exp.~\ref{sec:exp:transfer} but with majority aggregation. 
For each graph, we consider two target queries on \(\rho_\star\)" \(Q_d := P^{\rho_\star}(c_3.Y \mid do(c_2.X))\), an identifiable effect across cars; and \( 
Q_s := P^{\rho_\star}(c_3.Y \mid do(c_3.X))\), a non-identifiable effect within the same car.
Results of Alg.~\ref{alg:rnid} are in Fig.~\ref{fig:exp:id}.

\paragraph{Results.}
Though the target car \(c_3\) in \(\rho_\star\) has a larger relational neighborhood (two cars behind it) than any car in the source \(\rho\), \(\G\)-RNCMs correctly assess identifiability in the target, matching the theory.
In the non-relational IV and bow graphs, the causal effect \(P(y\mid do(x))\) is non-identifiable from purely observational data \cite{pearl:09}.
By Prop.~\ref{prop:relid:nec}, the same conclusion carries over to the within-car query \(Q_s\).
Consistent with this, RNCMs trained to minimize vs.\ maximize \(Q_s\) remain far apart (orange curve in Fig.~\ref{fig:exp:id}).
In contrast, for both  graphs we show that the cross-car query \(Q_d\) on \(\rho_\star\) \emph{is} identifiable from source data \(P(\*v_{\rho})\) (Prop.~\ref{adx:prop:exp-id-int}).
Accordingly, the RNCM max--min gap for \(Q_d\) collapses to (approximately) zero (solid blue curve in Fig.~\ref{fig:exp:id}), indicating that Alg.~\ref{alg:rnid} correctly certifies identifiability in this setting.

In Sec.~\ref{adx:sec:exp}, we evaluate RNCMs on larger relational structures, model misspecification, and front-door estimation.

\section{Conclusions}
In this paper, we introduced \emph{relational structural causal models} (RSCMs), a generalization of SCMs to object-relational domains.
We characterized the limits of learning in this setting (Thm.~\ref{thm:obsimposs}, Thm.~\ref{thm:rcht}), including for observational distributions of unseen object combinations.
We provided graphical conditions for identification within and across object combinations
(Thm.~\ref{thm:obstransfer}, Cor.~\ref{adx:cor:obstransfer}, Props.~\ref{prop:sameskelid}, \ref{prop:relid:nec}).
Finally, we developed \emph{relational neural causal models} that are provably correct for identification (Alg.~\ref{alg:rnid}, Alg.~\ref{adx:alg:rnest}) Thm.~\ref{thm:rncm-expr}, Cor.~\ref{adx:cor:rnid:corr})
and empirically outperform  existing neural-causal baselines (Sec.~\ref{sec:exp}). 
We hope our work informs causal reasoning across combinations of interacting objects.

\section*{Acknowledgements}
This research was supported in part by the NSF, ONR, AFOSR, DoE, Amazon, JP Morgan, and The Alfred P. Sloan Foundation. We thank Kevin Xia and Yushu Pan for their helpful comments, as well as the anonymous reviewers for their thoughtful feedback.

\section*{Impact Statement}
This paper presents results that advance the field of machine
learning,  bringing together the areas of causal inference and relational learning.
In particular, we contribute methodological foundations for causal inference in object-relational settings and validate our methods on simulated traffic scenes.
If adopted in practice, our work could enable better generalization in settings such as autonomous driving and robotics.

\bibliographystyle{icml2026}
\bibliography{references}

\newpage
\onecolumn

\appendix

\section*{Appendices}
\phantomsection
\addcontentsline{toc}{section}{Appendices}
\PrintAppendixToC
\StartAppendixToC

\theoremstyle{plain}
\newtheorem{adxtheorem}{Theorem}
\newtheorem{adxlemma}{Lemma}
\newtheorem{adxprop}{Proposition}
\newtheorem{adxcorollary}{Corollary}
\theoremstyle{definition}
\newtheorem{adxdefinition}{Definition}
\newtheorem{adxassumption}{Assumption}
\newtheorem{adxexample}{Example}
\theoremstyle{remark}
\newtheorem{adxremark}[theorem]{Remark}

\newtheorem{innercustomlemma}{Lemma}
\newenvironment{customlemma}[1]
  {\renewcommand\theinnercustomlemma{#1}\innercustomlemma}
  {\endinnercustomlemma}
\newtheorem{innercustomthm}{Theorem}
\newenvironment{customthm}[1]
  {\renewcommand\theinnercustomthm{#1}\innercustomthm}
  {\endinnercustomthm}
\newtheorem{innercustomprop}{Proposition}
\newenvironment{customprop}[1]
  {\renewcommand\theinnercustomprop{#1}\innercustomprop}
  {\endinnercustomprop}
  \newtheorem{innercustomcor}{Corollary}
\newenvironment{customcor}[1]
  {\renewcommand\theinnercustomcor{#1}\innercustomcor}
  {\endinnercustomcor}

\counterwithin{adxdefinition}{section}
\counterwithin{adxtheorem}{section}
\counterwithin{adxlemma}{section}
\counterwithin{adxprop}{section}
\counterwithin{adxcorollary}{section}
\counterwithin{adxexample}{section}
\counterwithin{figure}{subsection}
\counterwithin{table}{subsection}

\section{Background and Related Works} \label{adx:sec:relworks}

Much of machine learning is `Euclidean' \cite{papillon:25}. 
It operates on data represented as coordinates in a high-dimensional space.
Such representations are also described as `flat', `variable-based', `feature-vector', `attribute-value,' and `propositional' \cite{getoor:07, koller:09}.
Statistical learning guarantees assume that the data are values of independent and identically distributed (i.i.d.) random variables.
In this section, we discuss in which settings it is useful to relax this assumption, and present methods for such settings, causal and otherwise.

\subsection{Relational Data}

\subsubsection{What is Relational Data?} 
There is no single definition of what counts as relational data.
Broadly construed, it is data that is not flat.
We describe, below, what various literatures mean by the term `relational.'

\emph{In database theory}, it refers to data organized under the relational model in a relational database \cite{codd:70,ullman:02}.  
A vast amount of enterprise, government, and academic data is stored in relational databases.

\emph{In statistical relational learning}, `relational' concerns domains modeled not as a fixed set of random variables, but rather structured spaces consisting of many types of objects (entities) related in different ways \cite{getoor:03, koller:09}.
Both entities and relations may have attributes.
For example, to model inheritance across family trees, each tree consists of its own set of individuals, each with their own attributes \cite{koller:09}.
These individuals have varying relations: Mother-of, Father-of, Married-to, Sibling-of.
Both the number of individuals and their relations vary across family trees.
Since individuals are related, inferences about one individual may be made based on observations of another, e.g., a child's risk of a disease based on their parents'.

\emph{In deep learning}, `relational' refers not only to a type of input data but also to a type of learning problem that involves reasoning about objects in various relations with one another.
For example, while images can be encoded in a fixed Euclidean space, the task of answering questions like `are there any rubber things that have the same size as the yellow metallic cylinder?' for a given image is considered relational \cite{santoro:17}; intermediate representations of the data for this task might be structured as objects and relations.
Relational reasoning involves composing together different objects (entities) via relations following certain rules \cite{battaglia:18}.

\subsubsection{How is Relational Data Represented?} 
We give some standard representations in increasing order of generality.

\paragraph{Sequences and sets (without explicit relations)}
At the least structured end, an instance is represented as a finite set of entities (objects) with attributes.
Their relations are not specified explicitly, covering cases where relations are absent, unobserved, or deliberately abstracted away.
A common modeling assumption is that entity indices are arbitrary, so the distribution (and the model) should be invariant to permutations of entities. 
For such exchangeable sequences, de Finetti’s theorem implies they can be viewed as i.i.d. draws conditional on a latent variable \cite{finetti:31, orbanz:15}. 
For sets, modern architectures \cite{zaheer:17, lee:19, locatello:20} enforce permutation invariance (or equivariance) directly while constructing representations. 
Examples include object-centric representations of scenes (a set of detected objects), multi-agent systems (a set of agents with state vectors), and point clouds (a set of points in space).
A limitation of set-based representations is they do not take relational structure as an explicit input. 
However, they may be used to learn the relations between objects, e.g., by attention between all pairs.

\paragraph{Matrices and tensors (special cases of graphs).}
Matrices arise when there is a single relation between two types of entities, one indexed by rows and the other by columns, e.g., user--item interactions in recommender system.
This can be viewed as a \emph{bipartite graph}. The matrix entries are edge attributes (ratings, clicks, links), and missing entries correspond to absent/unobserved edges. 
Tensors extend this to multi-way relations (e.g., user \(\times\) item \(\times\) context) or to multiple relation types.
The gain relative to sets is that some interaction structure is now explicit (who is connected to whom), but the limitation is that the representation presupposes a small number of entity types and a small number of relations that can be aligned to axes; many relational domains do not naturally factor into a single rectangular array.

\paragraph{Relational databases (tables).}
A relational database contains one or more tables (`relations') with attributes as columns and records as rows.
Each attribute has a data type, and each record specifies a value of that type.
The primary key is an attribute that uniquely identifies each row, whereas the foreign keys are attributes that connect a given row to other rows in other tables.
Relational databases can be encoded as heterogenous graphs \cite{fey:24}, which we explain next.

\paragraph{Graphs and heterogeneous graphs (explicit interaction structure).}
A graph input consists of explicit entities (nodes) and relations (edges), possibly with attributes on both and possibly with multiple node/edge types. 
This representation is natural for domains  such as social networks (people connected by friendship/follow), transportation networks (intersections connected by road segments), molecular graphs (atoms connected by bonds), or citation/knowledge graphs (papers or entities connected by typed relations). 
It is often used in relational neural methods that compute by passing information along edges \cite{gilmer:17, battaglia:18}.
Compared to set representations, graphs commit to a notion of locality (neighbors) and hence constrain which entities can directly influence which others; compared to matrices, graphs permit arbitrary sparsity patterns, multiple relation types, and heterogeneous entities without forcing them into a single array.

\paragraph{More general non-Euclidean spaces.} 
Finally, some domains require modeling more than just edges or pairwise relations between objects.
Higher-order relations (hyper-edges) and interactions between edges, triangles, and cliques are often useful in domains spanning physical systems, traffic forecasting, drug discovery, and more \cite{papillon:25, papamarkou:24, hajij:23}.
Structures such as hypergraphs, simplicial complexes, sheaves, and combinatorial complexes are used to represent such data in the active field of topological deep learning.

\subsection{Probabilistic Models for Relational Data}

Much of the probabilistic modeling literature in the 20th century operated on flat, i.i.d. data.
For instance, Bayesian networks and related graphical models \cite{pearl:09, koller:09}captured dependencies among attributes of a fixed-dimensional random vector. 
An important exception was the tradition of  hierarchical modeling---including multi-level and cross-classified models---for non-i.i.d. data \cite{gelman:06,orbanz:15}.
However, these models typically involve two or fewer entity and relation types and simple, homogeneous relational neighborhoods (i.e., sequence- or matrix-structured data).

A new literature emerged in the 1990s, often grouped under \emph{statistical relational learning} (SRL), that shifted from attribute-value representations of data to object-relational ones \cite{koller:09,getoor:07}.
The goal of these methods was to enable probabilistic inference in object-relational settings; for e.g., infer a patient's risk of a disease based on that of their family members', or a user's rating for a movie based on that of their friends.
To do so, these methods define a template-level model for a particular schema, laying out probabilistic independencies that could be grounded for any relational instances.

\paragraph{Plate models.}
Plate models represent objects as plates, and relations as intersections between these plates \cite{buntine:94,spiegelhalter:02}.
Their primary use is in modeling domains with repeated measurements, encoding parameter sharing for objects in the same plate.
For example, in a recommender system, one may have one plate for users, and another for movies; the intersection of these plates may define a variable `rating'. 
This variable exists for every (user, movie) pair.
Plate models are widely used in applied Bayesian statistics \cite{blei:12}, but do not express dependencies that are contingent on relational constraints, and assume simple, pre-defined relational neighbourhoods.\footnote{While this is true of plate models as typically used in the literature, \cite{heckerman:04} define a generalized plate model that can express constraints.}
For example, a plate model can express that a user's rating for a movie depends on their preferences and the movie's quality.
However, they can not express that this rating additionally depends on the user's friends' ratings for that movie.

\paragraph{Directed relational models.}
More general directed relational graphical models such as probabilistic relational models (PRMs, also known as relational Bayesian networks) \cite{friedman:99,getoor:03,koller:09} and the directed acyclic probabilistic entity-relationship model (DAPER) \cite{heckerman:04,getoor:07} address the limitation of plate models by defining dependencies using first-order constraints and allowing variable-sized neighbourhoods.
Application areas include citation networks and web hyperlinks (including link prediction and topic classification) \cite{getoor:07}, medical diagnosis \cite{xu:05}, and IT security risk analysis \cite{teodor:10}.
Nevertheless, like Bayesian networks, they specify factorizations of observational distributions but do not provide a causal semantics for interventions and counterfactuals.

\paragraph{Other approaches.} In our work, we build on the directed graphical model approach for object-relational domains.
However, we note there are a number of other prominent approaches to SRL.
Undirected models address the limitation of directed models in capturing cyclic or symmetric dependencies (e.g., feedback loops, dynamical systems, dependencies such as "these individuals are likely to enjoy the same movies").
They include relational Markov networks \cite{taskar:02}, Markov logic networks \cite{richardson:06}, and conditional random fields \cite{sutton:07}.
Probabilistic inference tends to be more difficult in undirected models than directed models; still, undirected models have proved useful for entity resolution in citation networks (e.g., is author \(o\) the same as author \(o'\)?) \cite{singla:06}; for classifying webpages \cite{taskar:02}; and for extracting knowledge from unstructured text \cite{bunescu:04}.
In addition, given that causation is directed and asymmetric, such undirected models do not suffice to represent causality in relational settings \cite{maier:10}.
Besides graphical models, probabilistic logic programs constitute another (and in some cases, equivalent) approach, and have been applied to link prediction in heterogenous biological networks \cite{kersting:01, cussens:00, bach:17, deraedt:07}.

\subsection{Relational Deep Learning}

Relational deep learning combines and extends the tradition of graphical models with deep neural networks for scalable inference.
Unlike SRL, it is not necessarily probabilistic; like SRL, it operates on non-Euclidean data, most commonly graphs and databases.
Its goal is to design architectures with `relational inductive biases' that leverage relational information for prediction and promote \textit{combinatorial generalization}, understood as generalization across combinations of objects \cite{battaglia:18}.

\paragraph{Graph neural networks.}
Graph neural networks (GNNs) are perhaps the most popular architecture for relational deep learning  \cite{gori:05,scarselli:09a,scarselli:09b,hamilton:17,kipf:17,velickovic:18,velickovic:23}. 
Given a graph with node (entity) features and edge (relation) features, message passing architectures iteratively update each node representation by aggregating messages from its neighbors.
GNNs have been applied to molecular property prediction \cite{duvenaud:15,gilmer:17,sypetkowski:24}, physical reasoning and interaction networks in vision and control \cite{raposo:17,zambaldi:18,hamrick:18}, knowledge graphs \cite{bordes:13,onoro:17,hamaguchi:17}, and spatiotemporal forecasting on transportation networks \cite{li:18,cui:19,pinion:21}. 
GNNs also provide a computational interface between SRL and modern representation learning, since many SRL domains can be compiled into heterogeneous graphs.
A number of works have integrated undirected SRL methods into GNN architectures \cite{dai:16, gao:19, spalevic:20, qu:15, zhang:20}.

\paragraph{Causality and GNNs.} An important distinction to make is that the graphs used as input to GNNs are not causal graphs.
They are relational graphs, with nodes depicting entities and edges depicting relations between then.
Causal graphs, on the other hand, are naturally formulated over features (of nodes or edges).
As such GNNs, like the previously discussed SRL approaches, lack the architecture and guaranties for predicting the effects of interventions and counterfactuals, as distinct from observations.
For instance, changing a node feature (a movie's log line) or adding an edge (recommending a movie to a user) are not modeled as do-interventions.
\cite{cotta:23} take an important step towards causal prediction using graph embeddings for the task of link prediction.
\cite{zecevic:21} propose GNNs for causal inference; their method, restricted to settings without unobserved confounding, is equivalent to neural causal models \cite{xia:21} under this restriction.
However, it applies only to i.i.d. attribute-value data, thus leaving open the relational setting.

\paragraph{Relational deep learning on databases.}
A distinct and increasingly active direction studies deep learning directly on relational databases, with the goal of avoiding manual, error-prone feature engineering to flatten relational data \cite{fey:24, robinson:24}.
These works model typically model databases as heterogenous graphs to leverage the power of graph neural networks.
They fall into roughly two categories: models are trained on and applicable to a fixed schema \cite{wu:25, chen:25, dwivedi:26} versus `foundation models' that are trained on and applicable to diverse schemas \cite{wang:25griffin, ranjan:25}.

\subsection{Combinatorial and Compositional Generalization}

The focus of our work echoes the problem of \textit{combinatorial generalization} studied in artificial intelligence.
There is no agreed-upon definition of combinatorial generalization.
Sometimes, it is used interchangeably with the term \textit{compositional generalization} (e.g., in \cite{liu:22}).
However, compositional generalization may also involve the composition of functions or tasks (e.g., subtask reinforcement learning \cite{mendez:22, jothimurugan:23} and compositional instruction following or skill composition in LLMs \cite{yang:24, zhao:25, sakai:25, zhou:23}).

\paragraph{Combining features vs combining objects.}
We distinguish between two types of combinatorial generalization.
First, \emph{feature-combination generalization} holds the underlying object-relational structure fixed but varies the combinations of features of these objects.
A canonical example is attribute binding: training on blue circles and red squares and testing on blue squares. 
Second, \emph{object-combination generalization} varies the underlying set of objects and relations itself (often including the number of objects), e.g., in going from 2 stacked blocks to 3 stacked blocks, or from small interaction graphs to larger ones.
In this work, we study the latter type of combinatorial generalization.

\paragraph{Computer vision.}
In computer vision, combinatorial generalization is often studied for visual scenes with varying objects, attributes, and relations \cite{okawa:23, hwang:23}. 
CLEVR is a paradigmatic dataset for combinatorial generalization in visual-question answering \cite{johnson:16}. 
Combinatorial generalization is also a goal of image generation.
\cite{liu:22, du:23} develop an approach that composes pre-trained diffusion models, explicitly enforce compositionality during inference using logical operators, instead of relying on implicit learning.
This approach, alongside several other works \cite{schott:20,  montero:21, liang:25}, challenges a common view that disentangled representation learning is sufficient for combinatorial generalization.

\paragraph{Decision-making and world models.}
For tasks such as robotic manipulation and autonomous driving, combinatorial generalization is unavoidable since scenes naturally vary in the number of objects and their relations \cite{cui:19,pinion:21, lin:22}.
\cite{zambaldi:18} introduce an approach for relational deep reinforcement learning that uses attention over entity representations, showing improved generalization to more complex instances than those seen during training.
\cite{duan:25} introduce a formal definition of `out-of-combination' generalization in the decision-making context that assumes a fixed number of objects and requires generalization to regions out of the \textit{support} of the state space training distribution.
While their diffusion-based approach shows promising zero-shot generalization for this task, both the task definition and their approach assume that all objects (or `base elements') are seen during training \citep[Sec 3.3]{duan:25}, thus ruling out varying numbers of objects.
\cite{song:24} solve a similar problem as \cite{duan:25} using an approach that maps unseen states to the closest state seen during training.
An active recent literature on object-oriented (as opposed to monolithic) \textit{world models} aims to learn object-centric representations of pixel data \cite{nakano:23, wu:23, baek:25, ferraro:23, veerapaneni:20, wang:25dyno, mosbach:25, feng:25, zhao:22}, for instance, by decomposing the latent space into `slots' for different objects and sometimes explicitly modeling relations between objects.
While these methods are evaluated on out-of-distribution generalization tasks, their performance on unseen combinations of objects remains relatively understudied.

\subsection{Relational Causal Models}
The intersection of causal and relational modeling is relatively understudied. 
In the graphical framework of causality, most works focus on relational causal discovery--the task of learning a relational causal graph from data.
Works on relational causal inference--answering causal queries from graph and data--are few in number, and deal with special types of relations, as we describe below.

\paragraph{Relational causal discovery.}
\cite{maier:10} provide the first algorithm for causal discovery over data stored in relational databases.
They use the DAPER model to encode conditional independencies in relational data, and extend the PC algorithm \cite{spirtes:etal00} to the relational setting.
\cite{lee:16} build on this DAPER framework, coining the term `relational causal model' for the DAPER model, and providing more efficient and informative algorithms for relational causal discovery; \cite{ahsan:22} extend this to include cyclic dependencies.
However, the DAPER model is a probabilistic model, offering a compact encoding of conditional independencies in relational data (as Bayesian networks do for flat data).
It lacks a causal semantics for interventions and counterfactuals, just as Bayesian networks do. 
This is precisely what motivated the formulation of structural causal models (SCMs) and causal/counterfactual Bayesian networks \cite{pearl:09, bareinboim:22, bareinboim:25, correa:25}.
Therefore, these works leave open the grounding of causality in relational domains, and, therefore, the definition and inference of causal queries.
Additionally, all of them address only those conditional independence structures that can be represented using graphs with directed edges, leaving out settings with unobserved confounding (often represented via bidirected edges) \cite{bareinboim:22, jeong:25}.

\paragraph{Causal inference under interference.}
Causal inference under interference can be seen as a special case of relational causal inference where all objects and relations are of the same type \cite{sobel:06, rosenbaum:07, ogburn:14, bhattacharya:20, hudgens:08, zhang-mohan:22, sherman:18}.
The interference literature assumes a fixed number of `units' (or objects) and a fixed interaction structure between them: this captures settings such as people in a given neighborhood, students in a given school, or patients in a given hospital.
The query of interest is typically an aggregated causal effect across all units (e.g., an average direct effect, an average spillover effect, or a global average treatment affect).
Unlike DAPER, however, these models do not necessarily enforce parameter-sharing across units, e.g., \cite{zhang-mohan:22} study interference for linear models without enforcing mechanism sharing across instances of the same type, and allowing different coefficients for different neighbors (violating permutation-invariance).
Additionally, works in graphical causality under interference assume the absence of unobserved confounding, an assumption violated in many real-world settings.

\paragraph{Relational causal inference.} Relational causal inference generalizes inference under interference, allowing heterogeneous objects and relations \cite{arbour:16, jensen:20, salimi:20, weinstein:24}.
\cite{jensen:20}, \cite{guo:24}, and \cite{weinstein:24} provide sound methods for causal inference in plate models, showing how `object-conditioning' can lead to greater identifiability even in the presence of unobserved confounding. 
However, these models capture only a subset of the restricted types of relations (not including first-order constraints) expressible in plate models. 
\cite{arbour:16} are the first to generalize causal inference to the entity-relationship model, giving a backdoor criterion for identifying interventional queries from abstract ground graphs \cite{maier:10}.
They assume a ground relational network for a particular relational skeleton as input.
As such, they do not define a template-level causal model which can be instantiated on different skeletons, tying them together.
\cite{salimi:20} define a template-level formalism with `causal rules' that define first-order constraints for when one variable affects another, and give a similar backdoor adjustment criterion. 
However, these causal rules do not specify the \textit{mechanism} by which this effect unfolds, and thus resemble the coarse-grained information encoded in a causal graph.

Firstly, neither \cite{arbour:16} nor \cite{salimi:20} provide a mechanism-level definition of relational causal models, defining interventions only using the truncated Markov factorization \cite{pearl:09}.
As such, their set-up is limited to modeling interventions in settings without unobserved confounding, and does not provide semantics or identification criteria for counterfactuals.
Secondly, both works assume that for causal identification in a given skeleton, observational data from that skeleton is available; they do not address the problem of cross-skeleton inference.
Finally, while \cite{arbour:16} allows the effect of one variable on another to depend on the exact relation satisfied, \cite{salimi:20} does not; for example, if both friends' and family members' vaccination statuses affect a given individual's infection risk, they are assumed to affect it in the same way.
While possibly beneficial for estimation in practice, this assumption is quite restrictive in real-world settings.

\section{Further Definitions and Summary of Concepts} \label{adx:sec:def}

\paragraph{Graphical terminology.} Consider a graph \(\G = (\*{V_\G, E_\G})\) with directed and bidirected edges.
For a node \(O\) in \(\G\), the \emph{graphical parents} of \(O\) are the set of nodes \(\{Y : Y \to X \in \*E_\G\}\).
The descendants of \(O\) are nodes \(Y\) such that there is a directed path \(X \rightsquigarrow Y\) in \(\G\).
For a set of nodes \(\*X \subseteq \*V\), the \emph{induced subgraph} \(\G_\*X\) is the graph containing nodes \(\*X\) with an edge \(V,W\) between \(V,W \in \*X\) if and only if this edge is present in \(\G\).
A \emph{bidirected clique} in \(\G\) is a set of nodes every pair of which is connected by a bidirected edge in \(\G\).
Such a clique is \emph{maximal} if there is no larger bidirected clique in which it is strictly contained.

\subsection{Relational Structural Causal Models}
We use a modified definition of relational constraints from \citep[Def. 6]{heckerman:04}.

\begin{adxdefinition}[Relational constraint]
\label{adx:def:constraint}
Consider a relational schema \( \mc{S}=\langle \mc{E}, \mc{R}, \mc{A} \rangle\). 
Let \(\mc{X} \subseteq \mc{E}\) be a set of entity types. 
A relational constraint \(\phi(\mc{X})\) is a first-order expression whose atoms are relationship symbols in \(\mc{R}\) (alongside pre-defined constraints such as equality) and whose only free variables range over the space of ground entities of the types in \(\mc{X}\). 
\end{adxdefinition}

RSCMs are defined using permutation-invariant functions on multiset inputs \cite{zaheer:17}. 
Often, in relational learning, such mutlisets are summarized using \emph{aggregators} \cite{koller:09, getoor:07}.
We define such aggregators below.

\begin{adxdefinition}[Aggregator]\label{adx:def:agg}
Let \(\mc{X}\) and \(\mc{Y}\) be finite sets, and let \(\mc{X}^{\mathsf{multiset}}\) be the set of all finite multisets over \(\mc{X}\).
An \emph{aggregator} is a function
\[
\mathrm{AGG} : \mc{X}^{\mathsf{multiset}} \to \mc{Y}
\]
that maps a finite multiset of elements from \(\mc{X}\) to an element of \(\mc{Y}\).
\end{adxdefinition}

Note that since aggregators take (multi-)sets as input, they do not depend on any ordering over the elements of their input. 

We also define what it means for two relational skeletons to be considered the `same'.
\begin{definition}[Skeleton isomorphism] \label{adx:def:iso}
An isomorphism between skeletons \(\rho\) and \(\rho'\) over a given schema \(\mc{S}\) is a bijection \(\pi: \rho \to \rho'\) on entities and relations that (a) preserves types and (b) preserves relations between entities.
We write \(\rho \cong \rho'\) if such a \(\pi\) exists.
\end{definition}
In Prop.~\ref{adx:prop:isoinv}, we show how RSCM-induced distributions are invariant to skeleton isomorphism.

Next, we define what it means for an RSCM to be acyclic, or recursive.
\begin{adxdefinition}[Recursive RSCM]\label{adx:def:recursive}
An RSCM \(\mc{M}=\langle \mc{S},\*V,\*U,\mc{F},P(\*U)\rangle\) is said to be
\emph{recursive} (or \emph{acyclic}) if its relational causal graph \(\G\)
contains no directed cycles.
Equivalently, there exists a topological ordering of the variables
\(\*V\) such that for every attribute \(O.A\in \*V\), the structural function
\(f_{O.A}\) only takes as input (relational or non-relational) variables that precede \(O.A\) in this ordering.
\end{adxdefinition}

For a given skeleton \(\rho\), an RSCM \(\mc{M}\) induces a \emph{ground RSCM}, a standard SCM with functions and exogenous distributions shared across variables of the same type.
\begin{adxdefinition}[Ground relational structural causal model]\label{adx:def:rscm-ground}
Given an RSCM \(\mc{M} = \langle \mc{S}, \*{V},\*{U}, \mc{F}, P(\*{U})\rangle\) and a relational skeleton \(\rho\) following the schema \(\mc{S}\), a ground relational structural causal model of \(\mc{M}\) for \(\rho\) is an SCM \(\mc{M_\rho} = \langle \*{V}_\rho,\*{U}_\rho, \mc{F}_\rho, P(\*{U}_\rho)\rangle\) with 
\begin{enumerate}
    \item ground endogenous variables \(\*{V}_\rho = \{o.A \mid O.A \in \*V, o \in \rho(O)\}\),
    \item ground exogenous variables \(\*{U}_\rho = \{o.U \mid O.U \in \*U, o \in \rho(O)\}\) with distributions \(o.U \sim P(O.U)\) given by \(P(\*U)\), and
    \item ground mechanisms \(\mc{F}_\rho\), with \(f_{o.A}\) obtained by substituting relational parents \((\*W, \phi, \textrm{AGG})\) in \(f_{O.A}\) with (the specified aggregate of) the set of ground variables \(\{t.W \mid T.W \in \*W, t \in \rho(T), \phi(o,t) \text{ holds in } \rho\}\). This yields a function \(f_{o.A}(\*{pa}_{o.A}, \*{u}_{o.A},\*{pa}^r_{o.A}, \*{u}^r_{o.A})\) with \(\*{Pa}_{O.A} \subseteq \*V_\rho\) and \(\*{U}_{O.A} \subseteq \*U_\rho\).
\end{enumerate}
\end{adxdefinition}

Note that a ground RSCM can be viewed as a standard SCM  with typed variables and function/noise sharing across variables of the same type. 
The standard mechanism for each relational mechanism \(f_{o.A}(\*{pa}_{o.A}, \*{u}_{o.A},\*{pa}^r_{o.A}, \*{u}^r_{o.A})\) is the same, but with the argument signature \(f_{o.A}(\*{pa}_{o.A} \cup \*{pa}^r_{o.A}, \*{u}_{o.A} \cup  \*{u}^r_{o.A})\).
A ground RSCM \(\mc{M}_\rho\) is said to be recursive if it is recursive viewed as a standard RSCM \cite{pearl:09}.

Next, we define the various observational, interventional, and counterfactual distributions that a ground RSCM induces. 
Note the similarity to standard SCMs \cite{pearl:09, bareinboim:25}, since ground RSCMs are simply standard SCMs. 

\begin{adxdefinition}[RSCM-induced distributions]
\label{adx:def:rscmeval}
Fix an RSCM \(\mc{M} = \langle \mc{S}, \*{V},\*{U}, \mc{F}, P(\*{U})\rangle\), a relational skeleton \(\rho\) following the schema \(\mc{S}\), and a corresponding ground RSCM \(\mc{M_\rho} = \langle \*{V}_\rho,\*{U}_\rho, \mc{F}_\rho, P(\*{U}_\rho)\rangle\).
For any counterfactual events \(\*{Y_x}, \dots, \*{Z_w}\) over \(\*V_\rho\),
\(\mc M_\rho\) induces the  distribution
\begin{align*}
    &P^{\mc{M}_{\rho}}(\*{y_x}, \dots, \*{z_w}) \\
    &:= \sum_{\*{u}_\rho} \*{1}[\*{Y_x}(\*{u}_\rho)=\*{y}, \dots, \*{Z_w}(\*{u}_\rho)=\*{z}]P(\*u_\rho)
\end{align*}
where the value \(\*{Y_x}(\*{u})\) is obtained by the standard SCM semantics over \(\mc{F}_\rho\).
\end{adxdefinition}

A distribution over ground variables \(\*V_\rho\) is said to be \emph{observational}, \emph{interventional}, or \emph{counterfactual} depending on the form of the counterfactual event(s) it assigns probability to.

\begin{enumerate}
    \item  If none of the events involve interventions, i.e., each event is of the form \(\*{Y}\) (equivalently \(\*{Y_\emptyset}\)), then \(P^{\mc{M}_\rho}\) is \emph{observational}. In this case we write \(P^{\mc{M}_\rho}(\*y)\) for \(\*{Y}\subseteq \*V_\rho\) or \(P^{\mc{M}_\rho}(\*v_\rho)\).

    \item If all events share the same intervention and contain no nested interventions, i.e., all events are of the form \(\*{Y_x}\) for a single intervention \(\*X{=}\*x\), then \(P^{\mc{M}_\rho}\) is \emph{interventional}. In this case we write
    \(P^{\mc{M}_\rho}(\*y \mid do(\*x))\) or \(P^{\mc{M}_\rho}(\*{y_x})\), and similarly \(P^{\mc{M}_\rho}(\*v_\rho \mid do(\*x))\).
    \item If the distribution involves at least two events \(\*{Y_x}, \*{Z_w}\) such that \(\*X\) and \(\*W\) are distinct variables or \(\*x\) and \(\*w\) are distinct values, then it is \emph{counterfactual}. In this case we write \(P^{\mc{M}_\rho}(\*{y_x}, \dots, \*{z_w})\). 
\end{enumerate}

\subsection{Relational Identification} \label{sec:adx:def:relid}

\begin{adxdefinition}[Ground relational causal graph]\label{adx:def:ground-rcg}
Consider a relational causal graph \(\G\) over schema \(\mc{S}\) and a skeleton \(\rho\).
The \emph{ground relational causal graph} \(\G_\rho\) is constructed as follows.
\begin{itemize}
    \item For each node \(O.V\) in \(\G\) (relational or otherwise), and each instance \(o \in \rho(O)\), include a node \(o.V\) in \(\G_\rho\).
    \item For each edge \(O.B \to O.A\) in \(\G\) where \(O.A\) is a non-relational node, include an edge \(o.B\to o.A\) in \(\G_\rho\).
    \item For each edge \(O.B \leftrightarrow T.B\) in \(\G\) annotated with constraint \(\phi\), include an edge \(o.A \leftrightarrow t.B\)  in \(\G_\rho\) for each \(o \in \rho(O), t \in \rho(T)\) such that \(\phi(o,t)\) holds.
    \item For each edge \(T.W \to O.R\) in \(\G\) where \(O.R\) is a relational node with \(R = (\*{W}, \phi, \textrm{AGG})\), include edges \(t.W \to o.R\) for every \(y\in\rho(T)\) such that \(\phi(o,t)\) holds.
\end{itemize}
\end{adxdefinition}

\begin{adxdefinition}[Marginalized ground relational causal graph]\label{adx:def:marg-ground-rcg}
Fix \(\G\) and \(\rho\), and let \(\G_\rho\) be the ground relational causal graph (Def.~\ref{adx:def:ground-rcg}).
The \emph{marginalized ground relational causal graph} \(\bar{\G}_\rho\) is the graph on node set \(\*V_\rho\) obtained by marginalizing out all ground relational role nodes.
Equivalently, start from \(\G_\rho\), delete every relational node \(o.R\), and for each deleted node add directed edges
\(t.W \to o.A\) whenever \(\G_\rho\) contained \(t.W \to o.R \to o.A\).
Bidirected edges among nodes in \(\*V_\rho\) are inherited unchanged.
\end{adxdefinition}

\subsection{Summarizing RSCMs versus SCMs}

To help relate our terminology to standard Pearl-style SCMs, Table~\ref{tab:scm-rscm-comparison} compares the main objects in the standard and relational settings. Informally, the schema provides the types of objects and relations common to various domains, the skeleton provides the concrete objects and relations in one domain, the RSCM specifies reusable template-level mechanisms, and grounding an RSCM on a skeleton produces an ordinary SCM over the instantiated object attributes.

\begin{table}[H]
\centering
\small
\begin{tabular}{p{0.1\linewidth} p{0.22\linewidth} p{0.25\linewidth} p{0.33\linewidth}}
\toprule
\textbf{Concept} 
& \textbf{Standard setting} 
& \textbf{Relational setting} 
& \textbf{Traffic example (Ex.~\ref{ex:relintro})} \\
\midrule

Conceptual role
&
Models causal relationships among a fixed set of variables.
&
Models causal relationships among attributes of objects whose number and relations may vary across domains.
&
The traffic RSCM expresses reusable rules such as:
cars brake for relevant signals and pedestrians in their path, regardless of how many cars, pedestrians, or signals are present in a particular scene.
\\
\midrule

Vocabulary
&
Variables fixed in advance,  e.g.
\(W\) (signal state), \(X\) (crossing state), \(B\) (braking state).
&
Object, relation, and attribute types fixed in advance in a  \emph{relational schema} \(\mathcal S\).
&
Object types:
\(\mathsf{Sig}, \mathsf{Ped}, \mathsf{Car}\).
Relation types:
\(\mathsf{Ctrl}(\mathsf{Sig},\mathsf{Ped})\),
\(\mathsf{Ctrl}(\mathsf{Sig},\mathsf{Car})\),
\(\mathsf{Path}(\mathsf{Ped},\mathsf{Car})\).
Attributes:
\(\mathsf{Sig}.W,\mathsf{Ped}.X,\mathsf{Car}.B\).
\\
\midrule

Domain&
A set of \(n\) `units', assumed to be i.i.d., following a fixed distribution: joined triplets of signals, pedestrians, and cars
\(
(W_i,X_i,B_i)_{i=1}^n \sim P(W,X,B)
\)
&
A \emph{relational skeleton} \(\rho\) specifies the actual object instances and relation instances in one domain.
&
A particular traffic scene with concrete signals, pedestrians, and cars, e.g.
\(s_1,p_1,p_2,c_1,c_2\), together with relations such as
\(\mathsf{Ctrl}(s_1,p_1)\) and
\(\mathsf{Path}(p_1,c_1)\).
\\
\midrule

Template-level causal model
&
An SCM
specifies variables, exogenous noise, structural functions (one for each variable), and a noise distribution. Functions are contained within a unit; e.g., for a given triple \((W_i,X_i,B_i)\), we have \(X_i \gets f_X(W_i)\).
&
An RSCM
specifies variables, exogenous noise, structural functions, and a noise distribution at the level of object and relation types. Mechanisms and distributions can be reused across different instances of these types.
&
A generic car's braking decision depends on signals controlling the car and pedestrians in its path.
\begin{align*}
    C.B
\gets
f_B(S.W \text{ for every  } S \textrm{ s.t. } \mathsf{Ctrl}(S,C), \\
      P.X \text{ for every } P \textrm{ s.t. } \mathsf{Path}(P,C)
\end{align*}

\\
\midrule

Ground-level causal model
&
Usually not emphasized separately. For \(n\) i.i.d. units, one can view the model as \(n\) independent copies of the same SCM.
&
Given an RSCM \(M\) and a skeleton \(\rho\), the \emph{ground RSCM} \(M_\rho\) is an ordinary SCM over all instantiated attributes of all objects in \(\rho\).
&
The ground variables include
\(s_1.W,p_1.X,p_2.X,c_1.B, c_2.B\).
The mechanism for \(c_1.B\) is obtained by plugging in the actual signals controlling \(c_1\) and pedestrians in \(c_1\)'s path.
\[
c_1.B \gets f_{C.B}(s_1.W, p_1.X, p_2.X, c_1.U_B)
\]
\\
\midrule

Template-level graph
&
A graph over fixed variables, without any relationally constrained edges: \(W \to X\), \(W \to B\), \(X \to B\).
&
A \emph{relational causal graph} (RCG) over attribute types and relational nodes. It summarizes template-level dependencies, including dependencies through relations.
&
A path 
\(
C.B
\overset{\mathsf{Ctrl}(S,C)}{\to}
C.R_{\mathsf{Ctrl}}
\to
C.B
\)
denotes that a car's crossing status is affected by the status of all signals controlling it, in any traffic scene.
\\
\midrule

Ground-level graph
&
For repeated units, the graph contains one copy of the causal graph per unit, with no cross-unit edges under i.i.d. assumptions.
&
Given a skeleton \(\rho\), the RCG induces a \emph{ground graph} over instantiated variables. This graph may contain cross-object edges whenever objects are related in \(\rho\).
&
If \(\mathsf{Ctrl}(s_1,c_1)\), then the ground graph contains a path
\[
s_1.W \to c_1.R_{\mathsf{Ctrl}} \to c_1.B.
\]
After marginalizing the relational node, this corresponds to a direct ground-level dependence
\(
s_1.W \to c_1.B.
\)
\\
\midrule
Queries
&
From a given set of distributions (e.g., observational, \(do(W=w)\)) and graph, identify another distribution (e.g., \(do(X=x)\)) for the \textit{same} variables.
&
From a given set of distributions over skeletons \(\rho_1,\dots,\rho_k\) and graph, identify an unseen distribution for a seen or unseen skeleton.
&
For example,
another traffic scene scene could have
\(
s_1,s_2,p_1,p_2,c_1,c_2
\)
with a different set of \(\mathsf{Ctrl}\) and \(\mathsf{Path}\) relations. 
We can query the distribution of \(do(s_2.W=w)\) in this scene.

\\

\bottomrule
\end{tabular}
\caption{
Side-by-side comparison of standard SCM concepts and relational SCM concepts, illustrated using the traffic example.
}
\label{tab:scm-rscm-comparison}
\end{table}

\section{Further Examples and Comparison with Standard SCMs}
\label{adx:sec:example}

\subsection{RSCMs versus SCMs: Student Example} \label{adx:sec:student}

A standard SCM may be viewed as a special case of a relational SCM
with a single entity type and no relation types. The purpose of this
example is to make this comparison explicit, and to show how assuming i.i.d.-ness can lead to incorrect causal effect estimates when the relational effects exist.

Consider measuring the effect of student tutoring on GPA. For each
student, let \(T \in \{0,1\}\) indicate tutoring enrollment, where
\(T=1\) means enrolled, and let \(G \in \{0,1\}\) indicate whether the
student's GPA exceeds \(3.5\), where \(G=1\) means GPA \(>3.5\).

\subsubsection*{Relational schema}

\begin{itemize}
    \item \textbf{Standard SCM.}
    The units are students. Equivalently, the implicit relational schema is
    \(
        \mathcal S_{\mathrm{std}}
        =
        \langle
        \mathcal E,\mathcal R,\mathcal A
        \rangle,
        \qquad
        \mathcal E = \{\mathsf{Student}\},
        \quad
        \mathcal R = \emptyset,
    \)
    with observed attributes
    \(
        \mathcal A =
        \{\mathsf{Student}.T,\mathsf{Student}.G\}.
    \)

    \item \textbf{Relational SCM.}
    The entity type is again students, but now students may be related by
    friendship:
    \(
        \mathcal S_{\mathrm{rel}}
        =
        \langle
        \mathcal E,\mathcal R,\mathcal A
        \rangle,
        \qquad
        \mathcal E = \{\mathsf{Student}\},
        \quad
        \mathcal R =
        \{\mathsf{Friend}(\mathsf{Student},\mathsf{Student})\},
    \)
    with the same observed attributes
    \(
        \mathcal A =
        \{\mathsf{Student}.T,\mathsf{Student}.G\}.
    \)
\end{itemize}

\subsubsection*{Relational skeletons}

Let the observed skeleton contain \(100\) students
\(s_1,\dots,s_{100}\). We pair the students into friendship pairs so that, symmetrically,
\[
    \mathsf{Friend}(s_{2i-1},s_{2i})
    \quad\text{and}\quad
    \mathsf{Friend}(s_{2i},s_{2i-1})
    \qquad
    \text{for } i=1,\dots,50.
\]
Each student has exactly one friend.

\begin{itemize}
    \item \textbf{Standard SCM.}
    The implicit skeleton consists of \(100\) student instances and no
    relations between them.

    \item \textbf{Relational SCM.}
    The skeleton consists of \(100\) student instances and the friendship
    relations above.
\end{itemize}

\subsubsection*{Structural causal model: template level}

\begin{itemize}
    \item \textbf{Standard SCM.}
    The standard SCM is
    \(
        M_{\mathrm{std}}
        =
        \langle
        \mathbf V,\mathbf U,\mathcal F,P(\mathbf U)
        \rangle,
    \)
    where \(\*V = \{T,G\}\) and \(\*U = \{U_T,U_G\}\).
    The mechanisms are
    \[
        T \gets f_T(U_T) = U_T,
   \qquad
        G \gets f_G(T,U_G) = T \oplus U_G,
    \]
    where \(\oplus\) denotes XOR. Thus \(T\) affects \(G\) for the same
    student only. 
    The exogenous variables are independent, with
    \(
        U_T \sim \mathrm{Bernoulli}(0.2), \ 
        U_G \sim \mathrm{Bernoulli}(0.3).
    \)

    \item \textbf{Relational SCM.}
    The relational SCM is
    \(
        M_{\mathrm{rel}}
        =
        \langle
        \mathcal S_{\mathrm{rel}},
        \mathbf V,
        \mathbf U,
        \mathcal F,
        P(\mathbf U)
        \rangle,
    \)
    where
    \(
        \mathbf V
        =
        \{\mathsf{Student}.T,\mathsf{Student}.G\},
        \) and \(
        \mathbf U
        =
        \{\mathsf{Student}.U_T,\mathsf{Student}.U_G\}.
    \)
    For a student \(S\), the tutoring mechanism is
    \[
        S.T
        \gets
        f_{S.T}(S.U_T)
        =
        S.U_T.
    \]
    Here, \(S.T\) has one exogenous non-relational parent--\(S.U_T\) for the same student--and no non-relational endogenous parents or relational endogenous or exogenous parents.

    The GPA mechanism has one non-relational endogenous parent, the
    student's own tutoring status \(S.T\), and one relational endogenous
    parent, the tutoring status of \(S\)'s friends:
    \(
        \*{Pa}^{r}_{S.G}
        =
        \left\{
        \left(
        \{\mathsf{Student}.T\},
        \mathsf{Friend}(S,S'),
        \lor
        \right)
        \right\}.
    \)
    The corresponding mechanism is
    \[
        S.G
        \gets
        f_{S.G}
        \left(
        S.T,
        S.U_G,
        \bigvee_{S':\mathsf{Friend}(S,S')} S'.T
        \right),
    \]
    with
    \[
        S.G
        \gets
        \left(
        S.T
        \wedge
        \bigvee_{S':\mathsf{Friend}(S,S')} S'.T
        \right)
        \oplus
        S.U_G.
    \]
    Here, tutoring improves a student's GPA only
    when both the student and at least one of the student's friends are
    tutored, up to noise.

    The exogenous variables are independent across attributes and across
    student instances, with
    \(
        S.U_T \sim \mathrm{Bernoulli}(0.2), \ 
        S.U_G \sim \mathrm{Bernoulli}(0.3).
    \)
\end{itemize}

\subsubsection*{Structural causal models: ground level}

\begin{itemize}
    \item \textbf{Ground standard SCM.}
    Although the standard SCM formalism does not usually introduce a
    separate notion of a ground model, the i.i.d.\ sampling assumption
    implicitly induces one copy of the SCM for each student. For the
    skeleton with students \(s_1,\dots,s_{100}\), the ground variables are
    \[
        \mathbf V_\rho
        =
        \{T^{(s_i)},G^{(s_i)} : i=1,\dots,100\},
    \qquad
        \mathbf U_\rho
        =
        \{U_T^{(s_i)},U_G^{(s_i)} : i=1,\dots,100\}.
    \]
    The ground mechanisms  are
    \[
        T^{(s_i)}
        \gets
        U_T^{(s_i)},
    \qquad
        G^{(s_i)}
        \gets
        T^{(s_i)}
        \oplus
        U_G^{(s_i)}.
    \]
    The exogenous variables are mutually independent, with
    \(
        U_T^{(s_i)} \sim \mathrm{Bernoulli}(0.2),
        \qquad
        U_G^{(s_i)} \sim \mathrm{Bernoulli}(0.3).
    \)

    \item \textbf{Ground relational SCM.}
    The ground relational SCM for the friendship skeleton is
    \(
        M_{\rho}
        =
        \langle
        \mathbf V_\rho,
        \mathbf U_\rho,
        \mathcal F_\rho,
        P(\mathbf U_\rho)
        \rangle,
    \)
    with
    \[
        \mathbf V_\rho
        =
        \{s_i.T,s_i.G : i=1,\dots,100\},
    \qquad
        \mathbf U_\rho
        =
        \{s_i.U_T,s_i.U_G : i=1,\dots,100\}.
    \]
    For every student \(s_i\), the tutoring mechanism is 
    \(
        s_i.T \gets s_i.U_T.
    \)
    Since each student has exactly one friend, let
    \(
        \mathrm{fr}(s_{2i-1}) = s_{2i},
        \ 
        \mathrm{fr}(s_{2i}) = s_{2i-1}.
    \)
    Then, the GPA mechanisms are
    \[
        s_j.G
        \gets
        \left(
        s_j.T
        \wedge
        \mathrm{fr}(s_j).T
        \right)
        \oplus
        s_j.U_G.
    \]
    The exogenous variables are mutually independent, with
    \(
        s_i.U_T \sim \mathrm{Bernoulli}(0.2),
        \ 
        s_i.U_G \sim \mathrm{Bernoulli}(0.3).
    \)
\end{itemize}

\subsubsection*{Causal graph: template level}

\begin{itemize}
    \item \textbf{Standard causal graph.}
    The standard SCM induces the graph
    \(
        \G_{\textrm{std}}: \  T \to G.
    \)

    \item \textbf{Relational causal graph.}
    The relational SCM induces a relational causal graph \(\G_{\textrm{rel}}\) with:
    \begin{enumerate}
        \item non-relational nodes \( S.T\) and \(S.G\), 
        \item a non-relational edge \(
        S.T \to S.G\), 
        \item a relational node \(S.R_{\mathsf{Friend}}\), and
        \item  relational edges \(S.T
        \overset{\mathsf{Friend}(S,S'), \lor}{\to}
        S.R_{\mathsf{Friend}} 
        \to
        S.G\)
    \end{enumerate}
    Intuitively, \(S.G\) depends on \(S.T\) and on the logical OR of
    \(S'.T\) among students \(S'\) such that \(\mathsf{Friend}(S,S')\).
\end{itemize}

\subsubsection*{Causal graph: ground and marginalized ground}

\begin{itemize}
    \item \textbf{Ground standard graph.}
    The implicit ground graph contains one disconnected copy of
    \(T \to G\) for each student:
    \(
        \G_{\textrm{std}, \rho} : T^{(s_i)} \to G^{(s_i)},
        \qquad
        i=1,\dots,100.
    \)
    There are no edges between
    distinct students.

    \item \textbf{Ground relational graph.}
    The ground relational graph  \(\G_{\textrm{rel}, \rho}\) contains, for each student \(s_j\), the
    non-relational edge
    \(
        s_j.T \to s_j.G,
    \)
    and a relational node
    \(
        s_j.R_{\mathsf{Friend}}
    \)
    collecting the tutoring statuses of \(s_j\)'s friends:
    \(
        \mathrm{fr}(s_j).T
        \to
        s_j.R_{\mathsf{Friend}}
        \to
        s_j.G.
    \)

    \item \textbf{Marginalized ground relational graph.}
    After marginalizing the relational node
    \(s_j.R_{\mathsf{Friend}}\), the resulting marginalized ground graph \(\bar{\G}_{\textrm{rel}, \rho}\)has a direct edge into a given student's GPA from the tutoring status of that student's friend:
    \[
        s_j.T \to s_j.G,
        \qquad
        \mathrm{fr}(s_j).T \to s_j.G.
    \]
\end{itemize}

\subsubsection*{Causal effects}

We now compare two within-skeleton interventional queries, for the given set of students. The first query compares
tutoring all students to tutoring no students. The second query compares
tutoring only the even-indexed students to tutoring no students.

For both queries, the outcome is the average GPA indicator
\(
    \overline G
    =
    \frac{1}{100}
    \sum_{i=1}^{100} G^{(s_i)}
\)
in the standard SCM, and
\(
    \overline G
    =
    \frac{1}{100}
    \sum_{i=1}^{100} s_i.G
\)
in the relational SCM.

\subparagraph{Query 1: tutoring all students versus tutoring no students.}

The first intervention contrast is
\(
    do(\mathbf T=\mathbf 1)
    \ \text{versus} \ 
    do(\mathbf T=\mathbf 0).
\)
where \(\*T = (T^{(s_i)})_{i=1,\dots,100}\) in the standard SCM and \(\*T = (s_i.T)_{i=1,\dots,100}\) in the relational SCM. 

\begin{itemize}
    \item \textbf{Standard SCM.} Becuase students are i.i.d. in the standard setting, we have
    \begin{align*}
        P(g^{(s_1)}, \dots g^{(s_{100})} \mid do(t^{(s_{1})}, \dots, t^{(s_{100})})) &= 
        \prod_{i=1}^{100} P(g^{(s_i)} \mid do(t^{(s_{1})}, \dots t^{(s_{100})})) \tag{Rule 1 of do-calculus, students' grades are conditionally independent in \(\G_{\textrm{std}, \rho}\)} \\
         &= 
        \prod_{i=1}^{100} P(g^{(s_i)} \mid do(t^{(s_{i})})) \tag{Rule 2 of do-calculus, no directed path from one student's tutoring status to another's GPA in \(\G_{\textrm{std}, \rho}\)} 
    \end{align*}
    Since the students are identically distributed, we additionally have that \(P(g^{(s_i)} \mid do(t^{(s_{i})}))=P(g^{(s_j)} \mid do(t^{(s_{j})}))\) for any \(i,j\).
    By linearity of expectation,
    \[
        \mathbb E[\overline G \mid do(\mathbf T=\mathbf t)]
        =
        \frac{1}{100}
        \sum_{i=1}^{100}
        \mathbb E[
            G^{(s_i)}
            \mid
            do(T^{(s_i)}=t_i)
        ].
    \]

    Under \(do(\mathbf T=\mathbf 1)\), every student has \(T^{(s_i)}=1\),
    so
    \(
        G^{(s_i)}
        \gets
        1 \oplus U_G^{(s_i)}
    \)
    implies
    \(
        P(G^{(s_i)}=1 \mid do(T^{(s_i)}=1))
        =
        P(U_G^{(s_i)}=0)
        =
        0.7.
    \)
    Therefore,
    \[
        \mathbb E[\overline G \mid do(\mathbf T=\mathbf 1)]
        =
        \frac{1}{100}\sum_{i=1}^{100}0.7
        =
        0.7.
    \]

    Under \(do(\mathbf T=\mathbf 0)\), a similar calculation yields 
    \(
        \mathbb E[\overline G \mid do(\mathbf T=\mathbf 0)] = 0.3.
    \)

    Thus,
    \(
        \mathrm{ATE}^{\mathrm{all}}_{\mathrm{std}}
        =
        \mathbb E[\overline G \mid do(\mathbf T=\mathbf 1)]
        -
        \mathbb E[\overline G \mid do(\mathbf T=\mathbf 0)]
        =
        0.7 - 0.3
        =
        0.4.
    \)

    \item \textbf{Relational SCM.}

    In the relational SCM, recall that a student's outcome depends both on the
    student's own tutoring and on the tutoring status of the student's
    friend:
    \[
        s_j.G
        \gets
        \left(
        s_j.T
        \wedge
        \mathrm{fr}(s_j).T
        \right)
        \oplus
        s_j.U_G.
    \]

    Under \(do(\mathbf T=\mathbf 1)\), every student and every student's
    friend are tutored. Hence, for each \(s_j\),
    \(
        s_j.G
        \gets
        (1 \wedge 1) \oplus s_j.U_G
        =
        1 \oplus s_j.U_G.
    \)
    Therefore,
    \(
        \Pr(s_j.G=1 \mid do(\mathbf T=\mathbf 1))
        =
        \Pr(s_j.U_G=0)
        =
        0.7.
    \)
    Thus,
    \(
        \mathbb E[\overline G \mid do(\mathbf T=\mathbf 1)]
        =
        0.7.
    \)

    Under \(do(\mathbf T=\mathbf 0)\), neither a student nor the student's
    friend is tutored. A similar calculation gives
    \(
        \mathbb E[\overline G \mid do(\mathbf T=\mathbf 0)]
        =
        0.3.
    \)
    Therefore,
    \(
        \mathrm{ATE}^{\mathrm{all}}_{\mathrm{rel}}
        =
        \mathbb E[\overline G \mid do(\mathbf T=\mathbf 1)]
        -
        \mathbb E[\overline G \mid do(\mathbf T=\mathbf 0)]
        =
        0.7 - 0.3
        =
        0.4.
    \)
\end{itemize}

Therefore, for the all-treated versus none-treated query,
\(
    \mathrm{ATE}^{\mathrm{all}}_{\mathrm{std}}
    =
    \mathrm{ATE}^{\mathrm{all}}_{\mathrm{rel}}
    =
    0.4.
\)
By construction, the two models agree for this query numerically.
Next, we consider a query for which they differ.

\subparagraph{Query 2: tutoring even-indexed students versus tutoring no students.}

Now consider the intervention that tutors exactly the even-indexed
students:
\(
    do(T^{(s_i)}=1 \text{ for even } i,
       \; T^{(s_i)}=0 \text{ for odd } i),
\)
compared to the baseline intervention
\(
    do(\mathbf T=\mathbf 0).
\)
This policy tutors exactly one student in each friendship pair.

\begin{itemize}
    \item \textbf{Standard SCM.}
    Again, since each student's GPA depends only on that
    student's own tutoring, we recall the derivation in the previous query:
    \begin{align*}
        P(g^{(s_1)}, \dots g^{(s_{100})} \mid do(t^{(s_{1})}, \dots, t^{(s_{100})})) &= 
        \prod_{i=1}^{100} P(g^{(s_i)} \mid do(t^{(s_{i})})) 
    \end{align*}
    Since the even-indexed students have
    \(
        \Pr(G^{(s_i)}=1 \mid do(T^{(s_i)}=1))
        =
        0.7
    \) and 
    and the odd-indexed students have
    \(
        \Pr(G^{(s_i)}=1 \mid do(T^{(s_i)}=0))
        =
        0.3
    \),
    we get
    \[
        \mathbb E[
            \overline G
            \mid
            do(T^{(s_i)}=1 \text{ for even } i,
               \; T^{(s_i)}=0 \text{ for odd } i)
        ]
        =
        \frac{50}{100}(0.7)
        +
        \frac{50}{100}(0.3)
        =
        0.5.
    \]
    
    As before, \(
        \mathbb E[\overline G \mid do(\mathbf T=\mathbf 0)]
        =
        0.3
    \), giving
    \(
        \mathrm{ATE}^{\mathrm{even}}_{\mathrm{std}}
        =
        0.5 - 0.3
        =
        0.2.
    \)

    \item \textbf{Relational SCM.}

    Recall that in the the relational SCM,
    \(
        s_j.G
        =
        \left(
        s_j.T
        \wedge
        \mathrm{fr}(s_j).T
        \right)
        \oplus
        s_j.U_G.
    \)
    Since each friendship pair contains
    exactly one tutored student, we have
    \(
        s_j.T \wedge \mathrm{fr}(s_j).T = 0.
    \)
    Therefore
    \(
        s_j.G
        =
        0 \oplus s_j.U_G
        =
        s_j.U_G.
    \)
    Therefore,
    \(
        \Pr(
            s_j.G=1
            \mid
            do(T^{(s_i)}=1 \text{ for even } i,
               \; T^{(s_i)}=0 \text{ for odd } i)
        )
        =
        0.3,
    \)
    and so
    \[
        \mathbb E[
            \overline G
            \mid
            do(T^{(s_i)}=1 \text{ for even } i,
               \; T^{(s_i)}=0 \text{ for odd } i)
        ]
        =
        0.3.
    \]

    As before, \(
        \mathbb E[\overline G \mid do(\mathbf T=\mathbf 0)]
        =
        0.3.
    \) This implies that 
    \(
        \mathrm{ATE}^{\mathrm{even}}_{\mathrm{rel}}
        =
        0.3 - 0.3
        =
        0.
    \)
\end{itemize}

Therefore, for the even-treated versus none-treated query,
\(
    \mathrm{ATE}^{\mathrm{even}}_{\mathrm{std}}
    =
    0.2\) whereas \(
    \mathrm{ATE}^{\mathrm{even}}_{\mathrm{rel}}
    =
    0.
\).
This second query highlights a key difference between the two
models. In the standard SCM, tutoring an even-indexed student improves
that student's GPA  regardless of the treatment status of other
students. 
In the relational SCM, however, tutoring a student improves GPA
only when that student's friend is also tutored. Since the even-treated
policy tutors exactly one student in each friendship pair, the GPA
mechanism receives no effective tutoring signal for any student, and the
ATE is \(0\).
Thus, if the true data-generating process is this relational SCM, then the standard SCM methodology yields an incorrect estimate of the causal effect of this policy.

\subsection{Extended Traffic Example}
In this section, we consider an extended version of our running example (Ex.~\ref{ex:relintro}) with unobserved confounding across objects. 

\begin{adxexample}[Extended relational schema] \label{adx:ex:relintro-py}
We extend the schema in Ex.~\ref{ex:relintro} with an attribute \(\msf{Ped}.V\) for whether they a pedestrian is visible and \(\msf{Ped}.A\) for whether they look alert. 
An extended relational schema for traffic scenes would be
\begin{align*}
     \mc{E} &= \{\text{Signal } (\msf{Sig}), \text{ Car } (\msf{Car}), \text{ Pedestrian } (\msf{Ped})\}\\
     \mc{R} &= \{\mathsf{Ctrl}(\msf{Sig}, \msf{Ped} ),\mathsf{Ctrl}(\msf{Sig}, \msf{Car} ), \mathsf{Path}(\msf{Ped}, \msf{Car} )\}\\
     \mc{A} &= \{\msf{Sig}.W, \msf{Ped}.V, \msf{Ped}.A, \msf{Ped}.X, \msf{Car}.B\},
\end{align*}
with all attributes binary-valued:
\(\msf{Sig}.W\in\{1,0\}\) denotes walk/drive; \(\msf{Ped}.V \in \{1,0\}\) visible/not;  \(\msf{Ped}.A \in \{1,0\}\) alert/not;\(\msf{Ped}.X\in\{1,0\}\) cross/wait; and 
\(\msf{Car}.B\in\{1,0\}\) brake/go.
\(\mathsf{Ctrl}(\msf{Sig}, \msf{Ped} )\) (resp.\ \(\mathsf{Ctrl}(\msf{Sig}, \msf{Car} )\)) indicates that a signal controls a pedestrian (resp. car),
and \(\mathsf{Path}(\msf{Ped}, \msf{Car} )\) indicates that the pedestrian is in the car's path.
\end{adxexample}

Since the object and relation types are the same as in Ex.~\ref{ex:relintro}, the three relational skeletons shown in Fig.~\ref{fig:skels} can also be viewed as skeletons of this new schema. 
We give an example RSCM for this schema that departs from Ex.~\ref{ex:rscm-class} in two ways: (i) it includes unobserved confounding, and (ii) it includes a relational parent consisting of more than one variable, illustrating why \(\*W\) in Def.~\ref{def:rscm-class} is a set and not a single variable.

\begin{example}[RSCM for traffic scene] \label{adx:ex:rscm-class-py}
The endogenous variables are \(\*V=\{\msf{Sig}.W,\ \msf{Ped}.V,\ \msf{Ped}.,\ \msf{Ped}.X,\ \msf{Car}.B\}\).
The exogenous variables \(\*U\) capturing unobserved factors (e.g., a pedestrian's intent to cross or a driver's alertness) are \(\msf{Sig}.{U_W} \sim \mc{B}(0.3), \msf{Ped}.{U_{XA}} \sim \mc{B}(0.4)\), \(P,U_V \sim \mc{B}(0.8)\), and \(\msf{Car}.{U_B} \sim \mc{B}(0.2)\).
The mechanisms are
\begin{align*}
    \msf{Sig}.W & \gets \msf{Sig}.U_W, \\
    \msf{Ped}.V & \gets \msf{Ped}.U_{V} \\
    \msf{Ped}.A & \gets \msf{Ped}.U_{XA} \\
    \msf{Ped}.X & \gets \msf{Ped}.U_{XA} \oplus \bigwedge_{\mathsf{Ctrl}(\msf{Sig}, \msf{Ped} )}\msf{Sig}.W, \\
    \msf{Car}.B &\gets \msf{Car}.U_B \oplus \left( \bigvee_{\mathsf{Ctrl}(\msf{Sig}, \msf{Car} )} \msf{Sig}.W \lor \bigvee_{\mathsf{Path}(\msf{Ped}, \msf{Car} )} (\msf{Ped}.X \lor \msf{Ped}.A) \wedge \msf{Ped}.V\right).
\end{align*}
For each pedestrian, \(\msf{Ped}.A\) and \(\msf{Ped}.X\) are confounded by the pedestrian's unobserved intent \(\msf{Ped}.U_{XA}\). 
\(\msf{Car}.B\) has a relational parent \((\{\msf{Sig}.W\},\mathsf{Ctrl}(\msf{Sig}, \msf{Car} ), \lor)\)  as in Ex.~\ref{ex:rscm-class}.
It also has the relational parent \(
(\{\msf{Ped}.V, \msf{Ped}.A, \msf{Ped}.X\},\mathsf{Path}(\msf{Ped}, \msf{Car} ), \perp)
\)
which means: for each pedestrian \(p\) in the path of the car, the mechanism for \(\msf{Car}.B\) may jointly use that pedestrian's triple \((\msf{Ped}.V, \msf{Ped}.A, \msf{Ped}.X)\).
The relational causal graph for this RSCM is shown in Fig.~\ref{adx:fig:rcg-py}.

\begin{figure}
    \centering
    
    \begin{subfigure}[c]{0.3\linewidth}
        \centering
        \includegraphics[width=\linewidth]{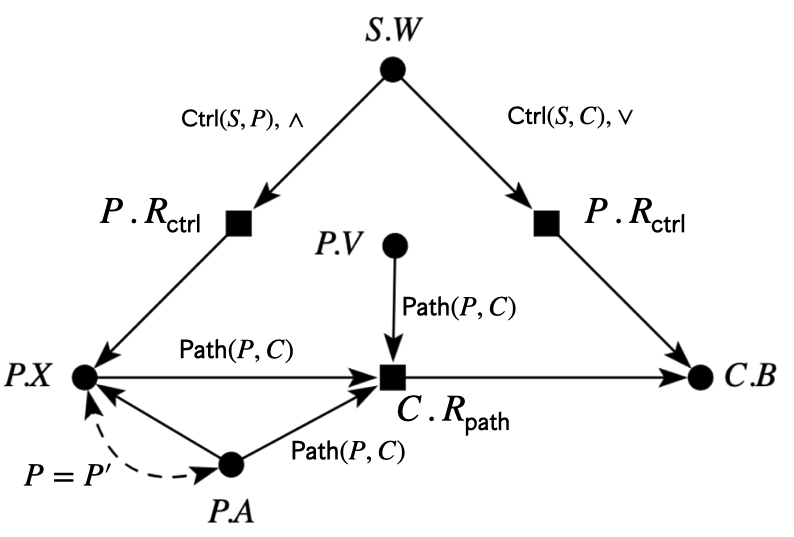}
        \caption{Relational causal graph \(\G\) for extended traffic RSCM in Ex~\ref{adx:ex:rscm-class-py}.}
        \label{adx:fig:rcg-py}
    \end{subfigure}
    \hfill
    \begin{subfigure}[c]{0.4\linewidth}
        \centering
        \includegraphics[width=\linewidth]{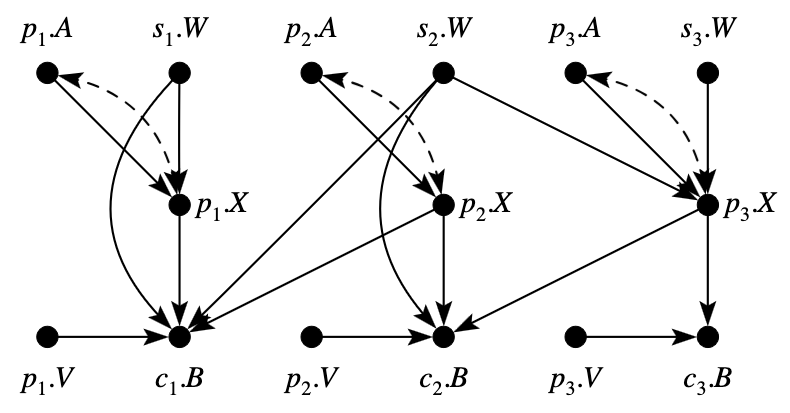}
        \caption{Marginalized ground relational causal graph \(\bar{\G}_{\rho_C}\) for skeleton \(\rho_C\) in Fig.~\ref{fig:skels}.}
        \label{adx:fig:ground-rcg-py}
    \end{subfigure}
    \hfill
    \begin{subfigure}[c]{0.15\linewidth}
        \centering
        \includegraphics[width=\linewidth]{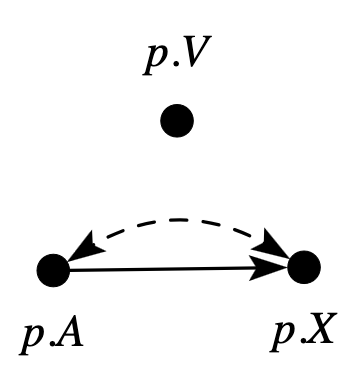}
        \caption{Induced subgraph of \(\bar{\G}_{\rho_C}\) on attributes of pedestrian \(p_1\) (with instance identifiers omitted) for Ex.~\ref{adx:ex:nonid}.}
        \label{adx:fig:subgraph-py}
    \end{subfigure}
    \caption{Relational causal graphs for the extended traffic examples.}
    \label{adx:fig:rcg-py-combined}
\end{figure}

It is important that \(\{\msf{Ped}.V, \msf{Ped}.A, \msf{Ped}.X\}\) appears as a single relational parent rather than as three separate relational parents \((\{\msf{Ped}.V\},\mathsf{Path}(\msf{Ped}, \msf{Car} ))\), \((\{\msf{Ped}.A\},\mathsf{Path}(\msf{Ped}, \msf{Car} ))\), and \((\{\msf{Ped}.X\},\mathsf{Path}(\msf{Ped}, \msf{Car} ))\).
This is because the braking condition depends on a within-pedestrian interaction: \(\exists p \textrm{ in path } : \msf{Ped}.V \wedge (\msf{Ped}.A \lor \msf{Ped}.X)\), i.e., a given pedestrian must be visible and either crossing or alert to trigger braking.
If we aggregate \(\msf{Ped}.V, \msf{Ped}.A\), and \(\msf{Ped}.X\) separately across pedestrians, we lose information about whether these properties are true of the same pedestrian.
In general,
\[
\bigvee_{\mathsf{Path}(\msf{Ped}, \msf{Car} )} (\msf{Ped}.X \lor \msf{Ped}.A) \wedge \msf{Ped}.V \neq \left( \bigvee_{\mathsf{Path}(\msf{Ped}, \msf{Car} )} \msf{Ped}.X  \lor \bigvee_{\mathsf{Path}(\msf{Ped}, \msf{Car} )} \msf{Ped}.A \right) \wedge \bigvee_{\mathsf{Path}(\msf{Ped}, \msf{Car} )} \msf{Ped}.V 
\]
The right-hand side can be true even when no single pedestrian satisfies both conditions—for instance, one pedestrian is visible but not crossing/alert, while another is crossing/alert but not visible. The left-hand side is false in that situation.

This illustrates why Def.~\ref{def:rscm-class} allows a \textbf{relational parent to contain a set of variables}: it lets the mechanism represent interactions among attributes of the same related object.
\hfill \(\square\)
\end{example}

Next, we illustrate an application of Prop.~\ref{prop:relid:nec} to show non-identifiability in this example.

\begin{adxexample}[Relational non-identifiability using Prop.~\ref{prop:relid:nec}] \label{adx:ex:nonid}
Continuing Ex.~\ref{adx:ex:rscm-class-py}, consider the causal diagram \(\G\) in Fig.~\ref{adx:fig:rcg-py}, the source skeleton \(\rho=\rho_A\), and the target skeleton \(\rho_\star = \rho_C\) from Fig.~\ref{fig:skels}.
Say we have as input source distributions \(\mathbb{P}=\{P(\*v_{\rho}), P(\*v_{\rho} \mid do(p_1.x)),P(\*v_{\rho} \mid do(p_1.x, p_1.a))\}\), and we are interested in the query \(P^{\rho_\star}(p_1.x \mid do(p_1.a))\) in \(\rho_\star\), the causal effect of pedestrian \(p_1\)'s alertness on whether or not they cross.
Notice how in the marginalized ground graph \(\bar{\G}_{\rho_C}\) (Fig.~\ref{adx:fig:ground-rcg-py}) we see a bow-graph \cite{pearl:09} structure over \(p_1.A\) and \(p_1.X\).
Standard identification theory usually suggests that in this case, \(P^{\rho_\star}(p_1.x \mid do(p_1.a))\) is not identifiable from  \(\G_{\rho_C}\) and \(P(\*v_{\rho_\star})\).
We show, using Prop.~\ref{prop:relid:nec}, that it is also not relationally identifiable from \(\mathbb{P}\) and \(\G\).

Following the notation of Prop.~\ref{prop:relid:nec}, our query concerns attributes of the instance \(x = p_1\) in target \(\rho_\star\), where \(\*V_{p_1} = \{p_1.V, p_1.X, p_1.A\}\).

The induced subgraph of \(\bar{\G}_{\rho_C}\) on \(\*V_{p_1}\) (with instance identifiers omitted) is given in Fig.~\ref{adx:fig:subgraph-py}.
From \(\mathbb{P}\), we construct the restriction \(\mathbb{P}|_P\) for instances \(p_1, p_2\) in \(\rho\) of type \(P\) (Pedestrian) as follows. Note that \(\*V_{p_1} = \{p_1.V, p_1.X, p_1.A\}\) and \(\*V_{p_2} = \{p_2.V, p_2.X, p_2.A\}\).
\begin{enumerate}
    \item  \(P(\*v_{\rho}) \in \mathbb{P}\)
    \begin{itemize}
        \item  \(p_1\) gives  \(P(\*v_{\rho} \cap \*v_{p_1}) = P(\*{v}_{p_1})\)
        \item \(p_2\) gives  \(P(\*v_{\rho} \cap \*v_{p_2}) = P(\*{v}_{p_2})\)
    \end{itemize}
    \item \(P(\*v_{\rho} \mid do(p_1.x))\)
    \begin{itemize}
        \item  \(p_1\) gives  \(P(\*v_{\rho} \cap \*v_{p_1} \mid do(\{p_1.x\} \cap \*{v}_{p_1})) = P(\*{v}_{p_1} \mid do(p_1.x))\)
        \item \(p_2\) gives  \(P(\*v_{\rho} \cap \*v_{p_2} \mid do(\{p_1.x\} \cap \*{v}_{p_2})) = P(\*{v}_{p_2})\)
    \end{itemize}
    \item \(P(\*v_{\rho} \mid do(p_1.x, p_1.a))\)
    \begin{itemize}
        \item  \(p_1\) gives  \(P(\*v_{\rho} \cap \*v_{p_1} \mid do(\{p_1.x,p_1.a\} \cap \*{v}_{p_1})) = P(\*{v}_{p_1} \mid do(p_1.x, p_1.a))\)
        \item \(p_2\) gives  \(P(\*v_{\rho} \cap \*v_{p_2} \mid do(\{p_1.x,p_1.a\} \cap \*{v}_{p_2})) = P(\*{v}_{p_2})\)
    \end{itemize}
\end{enumerate}
Omitting identifiers, we get the restriction \(\mathbb{P}|_P = \{P(\msf{Ped}.v, \msf{Ped}.X, \msf{Ped}.a), P(\msf{Ped}.v, \msf{Ped}.X, \msf{Ped}.a \mid do(\msf{Ped}.X)), P(\msf{Ped}.v, \msf{Ped}.X, \msf{Ped}.a \mid do(\msf{Ped}.X, \msf{Ped}.a)\}\) and the query \(P(\msf{Ped}.X \mid do(\msf{Ped}.a))\). 
By counterfactual calculus, since the subgraph \(\G_P\) in Fig.~\ref{adx:fig:subgraph-py} contains a bow-structure over \(\msf{Ped}.A\) and \(\msf{Ped}.X\), the query \(P(\msf{Ped}.X \mid do(\msf{Ped}.a))\) is non-identifiable from \(\G_P\)  and \(\mathbb{P}|_P\). 
Then, by Prop.~\ref{prop:relid:nec}, the original query  \(P^{\rho_\star}(p_1.x \mid do(p_1.a))\) is non-identifiable from \(\mathbb{P}\) and \(\G\).
\end{adxexample}

\section{Further Results and Proofs} \label{adx:sec:proofs}

\subsection{Proofs for Sec.~\ref{sec:rscm}}

The following proposition justifies how two isomorphic skeletons induce the `same' counterfactual distributions over variables.

\begin{adxprop}[Isomorphism-invariance of RSCM distributions] \label{adx:prop:isoinv}
Consider an RSCM \(\mc{M} = \langle \mc{S}, \*{V},\*{U}, \mc{F}, P(\*{U})\rangle\) and relational skeletons \(\rho, \rho'\) isomorphic under a mapping \(\pi\).
Then, for any counterfactual events \(\*{Y_x}, \dots, \*{Z_w}\) over \(\*V_{\rho}\),
\[
 P^{\mc{M}_\rho}(\*{y_x}, \dots, \*{z_w}) = P^{\mc{M}_{\rho'}}(\*{\pi(y)_{\pi(x)}}, \dots, \*{\pi(z)_{\pi(w)}})
\]
where \(\pi(o.A)=\pi(o).A\) extends \(\pi\) to ground variables \(o.A \in \*V_\rho\).
\end{adxprop}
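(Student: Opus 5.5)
The plan is to show that \(\pi\) induces an isomorphism of the two ground SCMs \(\mc{M}_\rho\) and \(\mc{M}_{\rho'}\). Once this is established, the equality of all counterfactual probabilities follows by a change of variables in the sum defining \(P^{\mc{M}_\rho}\) in Def.~\ref{adx:def:rscmeval}.

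First, extend \(\pi\) to ground variables by \(\pi(o.A)=\pi(o).A\) and \(\pi(o.U)=\pi(o).U\). Because \(\pi\) is a type-preserving bijection on instances, this gives bijections \(\*V_\rho\to\*V_{\rho'}\) and \(\*U_\rho\to\*U_{\rho'}\) (Def.~\ref{adx:def:rscm-ground}). For any exogenous assignment \(\*u_\rho\), write \(\pi(\*u_\rho)\) for the pushed-forward assignment \(\pi(o).U=o.U\). Since each \(o.U\sim P(O.U)\) depends only on the type \(O\), and the ground exogenous variables are jointly independent, we have \(P(\*u_\rho)=P(\pi(\*u_\rho))\).

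Second, check that mechanisms correspond. The key fact is that relational constraints are preserved: \(\phi(o,t)\) holds in \(\rho\) iff \(\phi(\pi(o),\pi(t))\) holds in \(\rho'\). I would prove this by induction on the structure of the first-order formula \(\phi\) (Def.~\ref{adx:def:constraint}). Atoms are relation symbols and equality, which \(\pi\) preserves by the definition of isomorphism and by bijectivity. Connectives are immediate. For quantifiers, the claim follows because \(\pi\) is a type-preserving bijection, so \(\exists z\) over \(\rho(Z)\) corresponds to \(\exists z'\) over \(\rho'(Z)\). Hence the relational-parent multiset \(\{t.W:\phi(o,t)\}\) is mapped by \(\pi\) exactly onto \(\{t'.W:\phi(\pi(o),t')\}\). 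The same holds for non-relational parents, which are attributes of the same instance. Since \(f_{o.A}\) and \(f_{\pi(o).A}\) both equal the template \(f_{O.A}\), and aggregators and multiset functions are permutation-invariant, we obtain
\[f_{\pi(o).A}(\pi(\cdot))=f_{o.A}(\cdot)\]
as functions, once arguments are identified via \(\pi\). I expect this constraint-preservation step to be the main (if mild) obstacle. The subtle points are quantified constraints and the requirement that the aggregated inputs be multisets, not ordered tuples.

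Third, lift to potential responses. For any intervention \(\*x\), the submodel \(\mc{M}_{\rho,\*x}\) replaces \(f_X\) by constants, and \(\mc{M}_{\rho',\pi(\*x)}\) makes the same replacement at \(\pi(X)\). The mechanism correspondence therefore persists. By recursiveness, take a topological order of \(\mc{M}_{\rho,\*x}\); its image under \(\pi\) is a topological order of \(\mc{M}_{\rho',\pi(\*x)}\). Induction along this order gives \(Y_{\*x}(\*u_\rho)=\pi(Y)_{\pi(\*x)}(\pi(\*u_\rho))\) for every \(Y\). Consequently
\[\*1[\*Y_{\*x}(\*u_\rho)=\*y,\dots]=\*1[\pi(\*Y)_{\pi(\*x)}(\pi(\*u_\rho))=\pi(\*y),\dots].\]
Summing against \(P(\*u_\rho)=P(\pi(\*u_\rho))\), and reindexing the sum through the bijection \(\*u_\rho\mapsto\pi(\*u_\rho)\), yields the claimed equality. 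If the exogenous domains are continuous, the sum is replaced by an integral and the reindexing by a measure-preserving pushforward.
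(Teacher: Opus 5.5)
Your proposal is correct and follows essentially the same route as the paper's proof. Both proofs reindex the defining sum through the bijection \(\mathbf{u}_\rho \mapsto \pi(\mathbf{u}_\rho)\), show the indicators agree because \(\pi\) preserves constraint satisfaction and so maps each ground mechanism (or intervened constant) to its \(\pi\)-renaming, and get \(P(\mathbf{u}_\rho)=P(\pi(\mathbf{u}_\rho))\) from type preservation; the difference is that you supply the inductions (on the structure of \(\phi\), and along a topological order of the intervened submodel) that the paper asserts in a single line each.
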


\begin{proof}
Recall from Def.~\ref{adx:def:rscmeval} that
\begin{align*}
P^{\mc M_\rho}(\*y_{\*x},\dots,\*z_{\*w})
&= \sum_{\*u_\rho} 
\*1[\*Y_{\*x}(\*u_\rho)=\*y,\dots,\*Z_{\*w}(\*u_\rho)=\*z]\; P(\*u_\rho),
\end{align*}
and
\begin{align*}
P^{\mc M_{\rho'}}(\pi(\*y)_{\pi(\*x)}, \dots, \pi(\*z)_{\pi(\*w)})
&= \sum_{\*u_{\rho'}} 
\*1[\pi(\*Y)_{\pi(\*x)}(\*u_{\rho'})=\pi(\*y),\dots,\pi(\*Z)_{\pi(\*w)}(\*u_{\rho'})=\pi(\*z)]\; P(\*u_{\rho'}).
\end{align*}
We prove the desired equality by a change of variables \(\*u_{\rho'}=\pi(\*u_\rho)\).
The isomorphism \(\pi:\rho\to\rho'\) induces a bijection on the exogenous assignments \(\*u_\rho \mapsto \pi(\*u_\rho)\).
Therefore, we re-index the second sum by writing \(\*u_{\rho'}=\pi(\*u_\rho)\).

\begin{align*}
P^{\mc M_{\rho'}}(\pi(\*y)_{\pi(\*x)}, \dots, \pi(\*z)_{\pi(\*w)})
&= \sum_{\pi(\*u_{\rho})} 
\*1[\pi(\*Y)_{\pi(\*x)}(\pi(\*u_{\rho}))=\pi(\*y),\dots,\pi(\*Z)_{\pi(\*w)}(\pi(\*u_{\rho}))=\pi(\*z)]\; P(\pi(\*u_{\rho})).
\end{align*}

We claim that for any intervention assignment \(\*x\) and any exogenous assignment \(\*u_\rho\),
\begin{align*}
\pi(\*Y)_{\pi(\*x)}(\pi(\*u_\rho))=\pi(\*y) \ \text{ in } \*{M}_{\rho'}
\ \Longleftrightarrow\
\*Y_{\*x}(\*u_\rho)=\*y \ \text{ in } \*{M}_{\rho}
\end{align*}
and similarly for other counterfactual events \(\*Z_{\*w}\).

To see this, fix a ground variable \(o.A\in \*V_\rho\).
By Def.~\ref{adx:def:rscm-ground}, each relational parent \((\*W,\phi,\textrm{AGG})\) in the template mechanism \(f_{O.A}\) is instantiated in \(\mc M_\rho\) as the multiset
\[
\{\,t.W \mid T.W\in \*W,\ t\in \rho(T),\ \phi(o,t)\text{ holds in }\rho\,\},
\]
and analogously in \(\mc M_{\rho'}\).
Since \(\pi\) is a skeleton isomorphism, it preserves relations and hence satisfaction of constraints:
\[
\phi(o,t)\text{ holds in }\rho
\ \Longleftrightarrow\
\phi(\pi(o),\pi(t))\text{ holds in }\rho'.
\]
Hence the structural function (or intervened constant) for \(\pi(o).A\) in \(\mc M_{\rho'}\) is exactly the \(\pi\)-renaming of the structural function for \(o.A\) in \(\mc M_\rho\). 
This proves the claim, so that for every \(\*u_\rho\),
\begin{align*}
&\*1[\pi(\*Y)_{\pi(\*x)}(\pi(\*u_\rho))=\pi(\*y),\dots,\pi(\*Z)_{\pi(\*w)}(\pi(\*u_\rho))=\pi(\*z)] \\
&\qquad=\*1[\*Y_{\*x}(\*u_\rho)=\*y,\dots,\*Z_{\*w}(\*u_\rho)=\*z].
\end{align*}

It remains to show that for every assignment of values \(\*u_\rho\), 
\[
P(\*u_\rho) \ \text{ in } \mc{M}_\rho = P(\pi(\*u_\rho)) \ \text{ in } \mc{M}_{\rho'}
\]

For each entity/relation type \(O\), recall that the RSCM \(\mc{M}\) specifies for each exogenous variable \(O.U \in \*U\) a distribution \(O.U \sim P(O.U)\).
By definition of the ground RSCM \(\mc{M}_\rho\), for each \(o \in \rho(O)\), \(o.U\sim P(O.U)\).
Since \(\pi\) preserves types, we also have \(\pi(o).U\sim P(O.U)\) by definition of the ground RSCM \(\mc{M}_{\rho'}\). Therefore, the above equality follows.
\end{proof}

\begin{customthm}{\ref{thm:obsimposs}}[Impossibility of observational inference across skeletons]\label{adx:thm:obsimposs}
Consider a schema \(\mc{S}\), source skeletons \(\rho_1,\dots,\rho_l\), and target skeleton \(\rho_\star\).
Then, for any RSCM \(\mc{M}\) over  \(\mc{S}\), there exists another RSCM \(\mc{M}'\) over \(\mc{S}\) such that \(\mc{M}\) and \(\mc{M}'\) agree on observational distributions \(P(\*v_{\rho_k})\) for every source skeleton \(\rho_k\) but disagree on the observational distribution \(P(\*v_{\rho_\star})\) of the target skeleton.  
\end{customthm}

\textit{Proof idea.}
We will prove this by constructing a relational constraint \(\phi_\star\) that evaluates to true only on skeletons isomorphic to the given \(\rho_\star\).
So, given \(\mc{M}\), we will construct another SCM \(\mc{M}'\) that is almost identical to \(\mc{M}\), except that it has different behaviour when \(\phi_\star\) is true.
For example, such a constraint for \(\rho_A\) given in Ex.~\ref{ex:relintro} would be:
\begin{align*}
    \phi_A: & \ \exists \text{ Signal } S_1, \text{ Pedestrian } P_1, P_2, \text{ Car } C_1, C_2  \text{ such that }  (P_1 \neq P_2) \wedge (C_1 \neq C_2)\\
    &\wedge (\forall \text{ Signal } S, S= S_1 )  \\
    &\wedge (\forall \text{ Pedestrian } P, P= P_1  \lor P= P_2) \\
    &\wedge (\forall \text{ Car } C, C= C_1  \lor C= C_2) \\
    &\wedge \mathsf{Ctrl}(S_1,P_1) \wedge \mathsf{Ctrl}(S_1,P_2)  \\
    &\wedge \mathsf{Ctrl}(S_1,C_1) \wedge  \neg \mathsf{Ctrl}(S_1,C_2) \\ 
    & \wedge \mathsf{Path} (P_1,C_1) \wedge \mathsf{Path}(P_1,C_2) \\
    & \wedge \mathsf{Path}(P_2,C_1) \wedge \neg \mathsf{Path}(P_2,C_2)
\end{align*}

\begin{proof}

Since \(\rho_\star\) is a finite relational skeleton, there exists a first-order formula \(\phi_{\star}\) that is true for a given skeleton \(\rho\) if and only if \(\rho \cong \rho_\star\).
Such a \(\phi_{\star}\) is constructed as follows.
For each entity/relation type \(O\) and instance \(o\) of \(O\) in \(\rho_\star\), introduce one existentially quantified variable.
Check that each of these variables (for a given type) are distinct.
Introduce a universally quantified variable of type \(O\), and check that it is equal to atleast one of these \(o\)-variables.
Finally, check that every relation \(R\) in the schema \(\mc{S}\) holds on exactly those instance variables for which it is true in \(\rho_\star\), and no others.
By construction, \(\phi_\star\) is true only on skeletons isomorphic to \(\rho_\star\).

Having constructed \(\phi_\star\), consider the given RSCM \(\mc{M}\) and skeletons \(\rho_1,\dots,\rho_l\).
Let \(\mc{M'}\) be the same as \(\mc{M}\), with the following changes.
First, \(\mc{M'}\) contains, for some arbitrary entity or relation type \(O\) with an observed attribute \(O.A \in \*V\), an additional exogenous variable \(O.U\) with the same domain as \(O.A\), so that \(\*U' = \*U \cup \{O.U\}\).
Second, \(\mc{M'}\) has a function \(f'_{O.A}\) as follows:
\begin{align*}
    f'_{O.A}(\*{pa}_{O.A}, \*{u}'_{O.A}, \*{pa}^r_{O.A}, \*{u}^r_{O.A}) = 
    \begin{cases}
    u_{O.A} & \phi_\star(X) \\
     f_{O.A}(\*{pa}_{O.A}, \*{u}'_{O.A} \setminus \{O.u\}, \*{pa}^r_{O.A}, \*{u}^r_{O.A}) & \neg \phi_\star(X)
    \end{cases}
\end{align*}

As a result, since \(f'_{O.A}\) in \(\mc{M}'\) is equal to \(f_{O.A}\) in \(\mc{M}\) whenever \(\phi_\star\) is false, \(\mc{M}'\) and \(\mc{M}\) will induce the same observational distributions on skeletons \(\rho_1, \dots, \rho_l \not \cong \rho_\star\).
On the skeleton \(\rho_\star\), however, the distribution \(P(O.U)\) in \(\mc{M}'\) can be chosen depending on \(P^{\mc{M}}(\*v_{\rho_\star})\) so that the different behaviour of \(f'_{O.A}\) and \(f_{O.A}\) results in different observational distributions of  \(\mc{M}'\) and \(\mc{M}\).
\end{proof}

\begin{adxremark}
The proof of Thm.~\ref{adx:thm:obsimposs} above does not rely on unobserved confounding between variables (be it of the same entity/relation instance, or across such instances). As such, Thm.~\ref{adx:thm:obsimposs} holds even if we restrict to the class of Markovian RSCMs.
\end{adxremark}

\begin{customthm}{\ref{thm:rcht}}[Impossibility of causal inference within a skeleton]
\label{adx:thm:rcht}
Consider a schema \(\mc{S}\) where at least one entity or relation type has more than one observed attribute.
For any relational SCM \(\mc{M}\) over \(\mc{S}\) and skeleton \(\rho\), there exists another relational SCM \(\mc{M}'\) over \(\mc{S}\) such that \(\mc{M}\) and \(\mc{M}'\) agree on the observational distribution \(P(\*v_{\rho})\) but disagree on some interventional distribution over \(\*V_{\rho}\).
\end{customthm}

\textit{Proof idea.} We will construct an SCM \(\mc{M}'\) such that for an entity/relation type \(O\)
with observed attributes \(O.A\) and \(O.B\), the function \(f'_{O.B}\) in \(\mc{M}'\) can `detect' that \(O.A\) has been intervened on (for the same instance).  Then, \(f'_{O.B}\) has different behaviour than \(f_{O.B}\) when  \(O.A\) is under intervention, but the same behaviour otherwise.
The fact that \(O.A\) and \(O.B\) belong to the same instance allows us to implement such `detection', since the constraints in \(f_{O.A}\) that hold for an instance \(o\) will also hold for \(o\) in \(f'_{O.B}\).

\begin{proof}

By assumption, there exists some entity/relation type \(O\) such that \(O.A, O.B \in \*V\).
First, assume WLOG, that \(O.B \not \in \*{Pa}_{O.A}\) in \(\mc{M}\).
Define \(\mc{M}'\) to be the same as \(\mc{M}\), but with two modifications.

First, \(\mc{M'}\) contains an additional exogenous variable \(O.U\) with the same domain as \(O.B\), so that \(\*U' = \*U \cup \{O.U\}\).
Second, \(\mc{M'}\) has a function \(f'_{O.B}\) as follows:
\begin{align*}
    &f'_{O.B}(\*{pa}_{O.B} \cup \*{pa}_{O.A} \cup \{O.a\}, \*{u}'_{O.B} \cup \*{u}_{O.A}, \*{pa}^r_{O.B} \cup  \*{pa}^r_{O.A}, \*{u}^r_{O.B} \cup \*{u}^r_{O.A}) \\
    &=\begin{cases}
    f_{O.B}(\*{pa}_{O.B}, \*{u}'_{O.B} \setminus \{O.U\}, \*{pa}^r_{O.B}, \*{u}^r_{O.B}) & \textbf{ if } O.a = f_{O.A}(\*{pa}_{O.A}, \*{u}'_{O.A} \setminus \{O.u\}, \*{pa}^r_{O.A}, \*{u}^r_{O.A}) \\
    O.u & \text{ otherwise }
    \end{cases}
\end{align*}
Above, \(f'_{O.B}\) has an extended parent set for \(O.B\) that takes in all the parents (endogenous and exogenous, non-relational and relational) of \(O.A\).
Since \(O.A\) and \(O.B\) belong to the same instance, for any relational parent \((\*W, \phi) \in \*{Pa}^r_{O.A} \cup \*{U}^r_{O.A}\), and any skeleton \(\rho\), \(\phi(o,t)\) will hold in \(f'_{O.B}\) in \(\rho\) iff it holds in  \(f_{O.A}\) in \(\rho\).

In the observational regime, the condition \( O.a = f_{O.A}(\dots)\) is always true; therefore, \(f'_{O.B}\) is exactly equal to \(f_{O.B}\), and \(\mc{M}\) and \(\mc{M}'\) induce the same \(P(\*v_{\rho_\star)}\).
However, this is not the case under intervention.
Consider an intervention that sets \(o_1.A = a\) for some instance \(o_1\) of \(O\) in \(\rho\). 
There is some assignment \(\*u_{\rho}\) to the exogenous variables such that the valuation \(o_1.A(\*u_{\rho}) \neq a\). 
Under this assignment, the condition \( o_1.a = f_{O.A}(\dots)\) fails, and thus \(o_1.B \gets x_1.u\).
The probability \(P(O.U)\) in \(\mc{M}'\) can be chosen such that \(P^{\mc{M}'_{\rho}}(o_1.B = b \mid do(o_1.A = a)) = P((o_1.U=b) \neq P^{\mc{M}_{\rho}}(o_1.B = b\mid do(o_1.A = a))\) for some \(b\) in the domain of \(O.B\).

\end{proof}

\begin{adxremark}
The proof of Thm.~\ref{thm:rcht} above does not rely on unobserved confounding between variables of different entity/relation instances. As such, Thm.~\ref{thm:rcht} holds even if we restrict to the class of \(\rho\)-Markovian RSCMs.
\end{adxremark}

\subsection{Proofs for Sec.~\ref{sec:id}}

First, we will show how if an RSCM \(\mc{M}\) induces by Def.~\ref{def:rcg} a causal graph \(\G\), then for any skeleton \(\rho\), the ground RSCM \(\mc{M}_\rho\), when viewed as a standard SCM, induces (by the definition in (Sec.~\ref{sec:prelim}))  a causal graph that is equal to the marginalized ground graph \(\bar{\G}_\rho\) (Def.~\ref{adx:def:marg-ground-rcg}).

\begin{lemma}\label{adx:lemma:marginal-con}
Consider a relational schema \(\mc{S}\), skeleton \(\rho\),  and RSCM \(\mc{M}\) inducing the relational causal graph \(\G\).
Then, the ground RSCM \(\mc{M}_\rho\) induces the marginalized ground graph  \(\bar{\G}_\rho\).
\end{lemma}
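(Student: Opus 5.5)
The plan is to show that the causal graph of \(\mc{M}_\rho\), built by the standard construction of Sec.~\ref{sec:prelim}, and the marginalized ground graph \(\bar{\G}_\rho\) of Def.~\ref{adx:def:marg-ground-rcg} coincide. Both have vertex set \(\*V_\rho\): the former by Def.~\ref{adx:def:rscm-ground}, the latter because every relational node \(o.R\) is deleted from \(\G_\rho\). It therefore remains to check, separately, that the directed edges agree and that the bidirected edges agree.

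\textbf{Directed edges.} Fix \(o.A \in \*V_\rho\). By Def.~\ref{adx:def:rscm-ground}, the endogenous arguments of \(f_{o.A}\) are of two kinds.
\begin{itemize}
\item The non-relational parents \(o.B\) with \(O.B \in \*{Pa}_{O.A}\). For each such parent, Def.~\ref{def:rcg} gives an edge \(O.B \to O.A\) in \(\G\). Def.~\ref{adx:def:ground-rcg} then gives \(o.B \to o.A\) in \(\G_\rho\), and this edge survives marginalization.
\item For each relational parent \(R=(\*W,\phi,\textrm{AGG})\), the variables \(t.W\) with \(T.W \in \*W\), \(t\in\rho(T)\), and \(\phi(o,t)\) true in \(\rho\). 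Def.~\ref{def:rcg} puts \(T.W \to O.R \to O.A\) in \(\G\). Grounding yields \(t.W \to o.R\) exactly for those \(t\) satisfying \(\phi(o,t)\), together with \(o.R \to o.A\). Marginalization turns this into \(t.W \to o.A\).
\end{itemize}
Conversely, every directed edge of \(\bar{\G}_\rho\) into \(o.A\) arises in one of these two ways, so the directed parent sets agree.

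One caveat: the SCM-graph construction adds \(V_i \to V_j\) when \(V_i\) is an argument of \(f_{V_j}\). So I take "parent" to mean syntactic argument, matching the edges Def.~\ref{def:rcg} adds for declared parents. Aggregation does not remove arguments; even if \(\textrm{AGG}\) ignores some inputs, the convention is syntactic.

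\textbf{Bidirected edges.} Two ground variables share an exogenous argument in \(\mc{M}_\rho\) in one of two ways.
\begin{itemize}
\item They belong to the same instance \(o\) and share a non-relational \(o.U\). This happens iff \(\*U_{O.A}\cap\*U_{O.B}\neq\emptyset\), giving the \(\G\)-edge annotated \(O=O'\). Grounding that edge yields exactly the same-instance pairs.
\item \(o.A\) and \(t.B\) share some \(z.U\) through exogenous relational parents \((\*W_1,\phi_1,\cdot)\) and \((\*W_2,\phi_2,\cdot)\) with \(Z.U\in\*W_1\cap\*W_2\), where \(\phi_1(o,z)\) and \(\phi_2(t,z)\) both hold. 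This is precisely the condition that \(\exists Z:\phi_1\wedge\phi_2\) holds at \((o,t)\), which is the annotated constraint Def.~\ref{def:rcg} attaches. Def.~\ref{adx:def:ground-rcg} instantiates the edge for exactly those pairs.
\end{itemize}
Since the bidirected edges are inherited unchanged under marginalization, this case is finished.

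\textbf{Main obstacle.} I expect the hardest step to be the bookkeeping in the mixed case, where one variable uses \(z.U\) non-relationally and the other uses it relationally. There is also the case where constraints appended to an existing edge are read as a disjunction. I would handle both by taking the annotation of a bidirected edge to be the disjunction of all contributing constraints, and by treating a non-relational exogenous parent as a relational one with constraint \(Z=O\). This makes every case reduce to the existential-conjunction condition above.
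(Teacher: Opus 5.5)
Your proposal is correct and follows essentially the same route as the paper: identical vertex sets, then a double-inclusion check of directed and bidirected edges traced through Def.~\ref{adx:def:rscm-ground}, Def.~\ref{def:rcg}, Def.~\ref{adx:def:ground-rcg} and the marginalization step. The one organizational difference is that you split cases by the origin of an edge (non-relational vs.\ relational parent) rather than by within- vs.\ cross-instance, which also correctly covers a relational parent with \(\phi(o,o)\) true.

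Your treatment of the mixed bidirected case (\(z.U\) used non-relationally by one variable and relationally by the other), and of appended annotations as a disjunction, is not derived from Def.~\ref{def:rcg} as literally written. It is a convention that extends that definition, and the paper's proof silently skips the mixed case. You should state it explicitly as an assumption on how \(\G\) is built.
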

\begin{proof}
Let \(\mc{M}\), \(\G\), \(\rho\), and  \(\bar{\G}_\rho\) be as given in the statement of the lemma.
Let \(\G'\) be the causal graph induced by  \(\mc{M}_\rho\)  viewed as a standard SCM, according to Sec.~\ref{sec:prelim}).
\(\G'\) and \(\bar{\G}_\rho\) contain the same nodes by construction, since relational nodes have been marginalized from \(\G_\rho\) to get \(\bar{\G}_\rho\).
We will show \(\G' = \bar{\G}_\rho\) by showing that every edge in \( \bar{\G}_\rho\) is also in \(\G'\) and vice-versa.

\paragraph{Within-instance edges.} Fix a type \(O \in \mc{E \cup R}\) and an instance \(o \in \rho(O)\). For any variable \(o.A, o.B \in \*{V}_\rho\), the endogenous variables in \(\mc{M}_\rho\),
\begin{enumerate}
    \item Every within-instance edge in \(\G'\) is also an edge in \(\bar{\G}_\rho\).
    \begin{align*}
        &o.B \in \*{Pa}_{o.A} \textrm{ in } \mc{M}_\rho \\
        & \implies O.B \in \*{Pa}_{O.A} \in \mc{M} \tag{\(\mc{M}_\rho\) grounds \(\mc{M}\) on \(\rho\), Def.~\ref{adx:def:rscm-ground}} \\
        & \implies O.B \to O.A \in \G \tag{by construction of \(\G\) from \(\mc{M}\), Def.~\ref{def:rcg}} \\
        & \implies o.B \to o.A \in \G_\rho \tag{by construction of \(\G_\rho\) from \(\G\), Def.~\ref{adx:def:ground-rcg}}  \\
        & \implies o.B \to o.A \in \bar{\G}_\rho \tag{by construction of \(\bar{\G}_\rho\) from \(\G_\rho\), Def.~\ref{adx:def:marg-ground-rcg}} 
    \end{align*}
    \begin{align*}
        & \exists o.U \in \*{U}_\rho, o.U \in \*{U}_{o.A} \cap \*{U}_{o.B} \\
        & \implies O.U \in \*{U}_{O.A} \cap \*{U}_{O.B} \textrm{ in } \mc{M} \tag{since \(\mc{M}_\rho\) grounds \(\mc{M}\) on \(\rho\)}\\
        & \implies O.A \overset{O=O'}{\leftrightarrow} O.B \in \G \tag{by construction of \(\G\) from \(\mc{M}\)} \\
        & \implies o.A \leftrightarrow o.B \in \G_\rho \tag{by construction of \(\G_\rho\) from \(\G\)} \\
        & \implies o.A \leftrightarrow o.B \in \bar{\G}_\rho \tag{by construction of \(\bar{\G}_\rho\) from \(\G_\rho\)} 
    \end{align*}
    
    \item Every within-instance edge in \(\bar{\G}_\rho\) is also an edge in \(\G'\).
    \begin{align*}
        &o.B \to o.A \in \bar{\G}_\rho \\
        & \implies o.B \to o.A \in \G_\rho \tag{within-instance edges preserved by construction of \(\bar{\G}_\rho\) from \(\G_\rho\)} \\
        & \implies O.B \to O.A \in \G \tag{by construction of \(\G_\rho\) from \(\G\)} \\
        & \implies O.B \in \*{Pa}_{O.A} \textrm{ in } \mc{M} \tag{by construction of \(\G\) from \(\mc{M}\)} \\
        & \implies o.B \in \*{Pa}_{o.A} \textrm{ in } \mc{M}_\rho \tag{since \(\mc{M}_\rho\) grounds \(\mc{M}\) on \(\rho\)} 
    \end{align*}
    \begin{align*}
        &o.B \leftrightarrow o.A \in \bar{\G}_\rho \\
        & \implies o.B \leftrightarrow o.A \in \G_\rho \tag{within-instance edges preserved by construction of \(\bar{\G}_\rho\) from \(\G_\rho\)} \\
        & \implies O.B \overset{O=O'}{\leftrightarrow} O.A \in \G \tag{by construction of \(\G_\rho\) from \(\G\)} \\
        & \implies \exists O.U \in \*{U}, O.U \in \*{U}_{O.A} \cap \*{U}_{O.B} \textrm{ in } \mc{M}  \tag{by construction of \(\G\) from \(\mc{M}\)} \\
        & \implies \exists o.U \in \*{U}_\rho, o.U \in \*{U}_{o.A} \cap \*{U}_{o.B} \textrm{ in } \mc{M}_\rho \tag{\(\mc{M}_\rho\) grounds \(\mc{M}\) on \(\rho\)} 
    \end{align*}
\end{enumerate}

\paragraph{Cross-instance edges.}  Fix  types \(O, T \in \mc{E \cup R}\) and non-identical instances \(o \in \rho(O), t \in \rho(T)\). For any variables \(o.A, t.B \in \*{V}_\rho\), 
\begin{enumerate}
    \item Every cross-instance edge in \(\G'\) is also an edge in \(\bar{\G}_\rho\).
    \begin{align*}
        &t.B \in \*{Pa}_{o.A} \textrm{ in } \mc{M}_\rho \\
        & \implies \exists R = (\*W, \phi, \textrm{AGG}) \textrm{ with } T.B \in \*W, R \in  \*{Pa}^r_{O.A} \in \mc{M} \textrm{ such that } \phi(o,t) \tag{\(\mc{M}_\rho\) grounds \(\mc{M}\) on \(\rho\), Def.~\ref{adx:def:rscm-ground}} \\
        & \implies T.B \overset{\phi, \textrm{AGG}}{\to} O.R \to O.A \in \G \tag{by construction of \(\G\) from \(\mc{M}\), Def.~\ref{def:rcg}} \\
        & \implies t.B \overset{\phi, \textrm{AGG}}{\to}o.R \to o.A \in \G_\rho \tag{by construction of \(\G_\rho\) from \(\G\), Def.~\ref{adx:def:ground-rcg}}  \\
        & \implies o.B \to o.A \in \bar{G}_\rho \tag{by construction of \(\bar{\G}_\rho\) from \(\G_\rho\)} 
    \end{align*}
    \begin{align*}
        & \exists z.U \in \*{U}_\rho, z.U \in \*{U}_{o.A} \cap \*{U}_{o.B} \\
        & \implies \exists R_1=(\*{W_1}, \phi_1, \textrm{AGG}_1) \in \*{U}^r_{O.A} \textrm{ and } R_2=(\*{W_2}, \phi_2, \textrm{AGG}_2) \in \*{U}^r_{T.B} \textrm{ in } \mc{M}  \\
        & \hspace{3em} \textrm{ with } Z.U \in \*{W_1} \cap \*{W_2} \textrm{ and } \phi_1(o,z) \wedge \phi_2(t,z)
        \tag{\(\mc{M}_\rho\) grounds \(\mc{M}\) on \(\rho\)}\\
        & \implies O.A \overset{\exists Z: \phi_1(O,Z) \wedge \phi_2(T,Z)}{\leftrightarrow} T.B \in \G \tag{by construction of \(\G\) from \(\mc{M}\)} \\
        & \implies o.A \leftrightarrow o.B \in \G_\rho \tag{by construction of \(\G_\rho\) from \(\G\)} \\
        & \implies o.A \leftrightarrow o.B \in \bar{\G}_\rho \tag{by construction of \(\bar{\G}_\rho\) from \(\G_\rho\), which preserves bidirected edges} 
    \end{align*}
    
    \item Every cross-instance edge in \(\bar{\G}_\rho\) is also an edge in \(\G'\).
    \begin{align*}
        &t.B \to o.A \in \bar{\G}_\rho \\
        & \implies t.B \to o.R \to o.A \in \G_\rho \textrm{ for some } R = (\*{W}, \phi, \textrm{AGG})\tag{by construction of \(\bar{\G}_\rho\) from \(\G_\rho\)} \\
        & \implies T.B \overset{\phi, \textrm{AGG}}{\to} O.R \to O.A \in \G \tag{by construction of \(\G_\rho\) from \(\G\)} \\
        & \implies R \in \*{Pa}^r_{O.A} \textrm{ in } \mc{M} \tag{by construction of \(\G\) from \(\mc{M}\)} \\
        & \implies t.B \in \*{Pa}^r_{o.A} \textrm{ in } \mc{M}_\rho \tag{since \(\mc{M}_\rho\) grounds \(\mc{M}\) on \(\rho\)} 
    \end{align*}
    \begin{align*}
        &t.B \leftrightarrow o.A \in \bar{\G}_\rho \\
        & \implies o.B \leftrightarrow o.A \in \G_\rho \tag{bidirected edges preserved by construction of \(\bar{\G}_\rho\) from \(\G_\rho\)} \\
        & \implies O.B \overset{\phi}{\leftrightarrow} O.A \in \G \textrm{ for some } \phi \textrm{ such that }\phi(o,t) \tag{by construction of \(\G_\rho\) from \(\G\)} \\
        & \implies \exists Z.U \in \*U, z \in \rho(Z), R_1=(\*{W_1}, \phi_1, \textrm{AGG}_1) \in \*{U}^r_{O.A} \textrm{ and } R_2=(\*{W_2}, \phi_2, \textrm{AGG}_2) \in \*{U}^r_{T.B} \textrm{ in } \mc{M}  \\
        & \hspace{3em} \textrm{ with } Z.U \in \*{W_1} \cap \*{W_2} \textrm{ and } \phi_1(o,z) \wedge \phi_2(t,z) \tag{by construction of \(\G\) from \(\mc{M}\)} \\
        & \implies \exists z.U \in \*{U}_\rho,\  z.U \in \*{U}_{o.A} \cap \*{U}_{t.B} \textrm{ in } \mc{M}_\rho \tag{since \(\mc{M}_\rho\) grounds \(\mc{M}\) on \(\rho\)} 
    \end{align*}
\end{enumerate}

\end{proof}

\begin{adxcorollary}\label{adx:cor:l3con}
Consider a relational schema \(\mc{S}\), skeleton \(\rho\),  and RSCM \(\mc{M}\) inducing the relational causal graph \(\G\).
Then, the ground RSCM \(\mc{M}_\rho\) induces counterfactual distributions that satisfy all counterfactul equality constraints encoded in \(\bar{\G}_\rho\).
\end{adxcorollary}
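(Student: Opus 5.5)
The plan is to reduce the statement to the standard fact that any (recursive) SCM induces counterfactual distributions satisfying the constraints encoded in its own causal graph, and then to invoke Lemma~\ref{adx:lemma:marginal-con} to identify that graph with \(\bar{\G}_\rho\).

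\emph{Step 1: the ground model is an ordinary SCM.} By Def.~\ref{adx:def:rscm-ground}, \(\mc{M}_\rho=\langle \*V_\rho,\*U_\rho,\mc{F}_\rho,P(\*U_\rho)\rangle\) is a standard SCM. Its endogenous set is finite because \(\rho\) is finite, and it is recursive by our standing assumption. Its mechanisms have signature \(f_{o.A}(\*{pa}_{o.A}\cup\*{pa}^r_{o.A},\*u_{o.A}\cup\*u^r_{o.A})\). Parameter sharing across instances and the aggregation are simply particular choices of these functions and of \(P(\*U_\rho)\); they impose no structure the standard theory forbids. I will also check that the exogenous variables in \(\*U_\rho\) are jointly independent. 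This holds because \(P(\*U)\) factorizes over template exogenous variables and each ground copy \(o.U\) is drawn independently. This is the independence assumption under which the counterfactual constraints of \citet{correa:25} are stated (exclusion, independence, and ctf-calculus rules).

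\emph{Step 2: the induced graph is \(\bar{\G}_\rho\).} By Lemma~\ref{adx:lemma:marginal-con}, the causal graph that \(\mc{M}_\rho\) induces as a standard SCM (Sec.~\ref{sec:prelim}) equals \(\bar{\G}_\rho\).

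\emph{Step 3: transfer the constraints.} I apply the known soundness result: any SCM with independent exogenous variables inducing graph \(\G'\) produces counterfactual distributions obeying every equality constraint read off \(\G'\). These constraints include the exclusion restrictions \(Y_{\*{pa}_Y,\*z}=Y_{\*{pa}_Y}\), the independence restrictions between counterfactual variables not linked by bidirected paths, and, more generally, the ctf-calculus rules and the counterfactual Bayesian network factorization. Combining this with Step 2 gives the statement.

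The main obstacle is the independence assumption in Step 1. The standard result presupposes independent exogenous variables, each potentially shared among several endogenous variables. In the ground RSCM, a single exogenous \(z.U\) may enter several mechanisms through relational exogenous parents. I must confirm that every such sharing is reflected by a bidirected edge in \(\bar{\G}_\rho\), so that no hidden dependence exists that the graph fails to record. This is exactly the cross-instance bidirected-edge part of Lemma~\ref{adx:lemma:marginal-con}, so the remaining work is to verify that the lemma covers all such cases, including the edges annotated \(\exists Z:\phi_1\wedge\phi_2\).
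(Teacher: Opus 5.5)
Your proposal is essentially the paper's proof: Lemma~\ref{adx:lemma:marginal-con} identifies the graph induced by \(\mc{M}_\rho\) with \(\bar{\G}_\rho\), and the standard soundness fact (the paper cites Lemma 1 of \citet{xia:23}) then transfers all counterfactual equality constraints. Your checks of recursiveness and exogenous independence are details the paper leaves implicit, and the obstacle you flag is not an extra step but the content of that lemma, which the paper takes as already established; if you do re-verify it, watch the case of an exogenous \(t.U\) that is a non-relational parent of \(t.B\) and also a relational exogenous parent of some \(o.A\), since the bidirected-edge rules of Def.~\ref{def:rcg} only cover sharing that is non-relational on both sides or relational on both sides.
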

\begin{proof}
By Lemma~\ref{adx:lemma:marginal-con}, we have that \(\mc{M}_\rho\) induces  \(\bar{\G}_\rho\).
\citep[Lemma 1]{xia:23} shows that if a standard SCM \(\mc{M}\) induces a causal diagram \(\G\), then its induced distributions satisfy all counterfactual equality constraints encoded in \(\G\).
The result follows.
\end{proof}

\begin{customthm}{\ref{thm:obstransfer}}[Observational identification across skeletons]\label{adx:thm:obstransfer}
Consider a schema \(\mc{S}\), relational causal graph \(\G\), source skeleton \(\rho\), and target skeleton \(\rho_\star\). 
Let \(o.A\) be an unconfounded variable in \(\*V_{\rho_\star}\).
The conditional \(P(o.a \mid \*{pa}_{o.A},  \*{pa}^{r}_{o.A})\) is relationally identifiable from \(\G\) and \(P(\*v_\rho)\) if there exists a source instance \(o' \in \rho\) such that \(o'.A\) is unconfounded and \(\dom(\*{Pa}^{r}_{o.A}) \subseteq \dom(\*{Pa}^{r}_{o'.A})\).
In this case, \(P(o.a \mid \*{pa}_{o.A},  \*{pa}^{r}_{o.A}) = P(o'.a \mid \*{pa}_{o'.A},  \*{pa}^{r}_{o'.A})\).
\end{customthm}
\begin{proof}
Consider any two RSCMs \(\mc{M}, \mc{M}'\) compatible with \(\G\) such that \(P^{\mc{M}_\rho}(\*v_\rho) = P^{\mc{M}'_\rho}(\*v_\rho)\).
We need to show that 
\[P^{\mc{M}_{\rho_\star}}(o.a \mid \*{pa}_{o.A},  \*{pa}^r_{o.A}) = P^{\mc{M}'_{\rho_\star}}(o.a \mid \*{pa}_{o.A},  \*{pa}^r_{o.A}).
\]

Consider the \(o' \in \rho\) given by the assumption.
Since \(\mc{M}, \mc{M}'\) agree on the observational distribution over \(P(\*v_\rho)\), this implies that
\[
P^{\mc{M}_{\rho}}(o'.a \mid \*{pa}_{o'.A},  \*{pa}^r_{o'.A}) = P^{\mc{M}'_{\rho}}(o'.a \mid \*{pa}_{o.A},  \*{pa}^r_{o'.A}) 
\]

It suffices, then, to show that for all values \(o.a = o'.a, \*{pa}_{o.A} = \*{pa}_{o'.A}\), and \(\*{pa}^r_{o.A} = \*{pa}^r_{o'.A}\) (with the latter comparison made for values in the support of \(\*{pa}^r_{o.A}\)).
\begin{equation}
    P^{\mc{M}_{\rho}}(o'.a \mid \*{pa}_{o'.A},  \*{pa}^r_{o'.A}) = P^{\mc{M}_{\rho_\star}}(o.A \mid \*{pa}_{o.A},  \*{pa}^r_{o.A}) \label{adx:thm:obstransfer:eq1}
\end{equation}

and
\begin{equation}
P^{\mc{M}'_{\rho}}(o'.a \mid \*{pa}_{o'.A},  \*{pa}^r_{o'.A}) = P^{\mc{M}'_{\rho_\star}}(o.a \mid \*{pa}_{o.A},  \*{pa}^r_{o.A}).\label{adx:thm:obstransfer:eq2}
\end{equation}

Let the structural equation for \(o.A\) in \(\mc M_{\rho_\star}\) be \(o.A \gets f_{O.A}(\*{pa}_{o.A},\ \*{u}_{o.A},\ \*{pa}^r_{o.A})\),
where \(\*{U}_{o.A}\) are the exogenous parents of \(o.A\) in  \(\mc M_{\rho_\star}\).
By condition (1) of unconfoundedness, \(o.A\) shares no bidirected edge with any variable in \(\bar{\G}_\rho\).
By Prop.~\ref{adx:lemma:marginal-con}, this implies that
\(
\*{U}_{o.A} \perp\!\!\!\perp (\*{Pa}_{o.A}, \*{Pa}^r_{o.A}).
\)
Therefore, for any values \(a,\*{pa}_{o.A},\*{pa}^r_{o.A}\),
\begin{align*}
P^{\mc M_{\rho_\star}}(o.a \mid \*{pa}_{o.A},\ \*{pa}^r_{o.A})
&= \sum_{\*{u}_{o.A}} P(o.a \mid \*{u}_{o.A},\*{pa}_{o.A},\ \*{pa}^r_{o.A})P(\*{u}_{o.A} \mid \*{pa}_{o.A},\ \*{pa}^r_{o.A}) \\
& = \sum_{\*{u}_{o.A}} P(o.a \mid \*{u}_{o.A},\*{pa}_{o.A},\ \*{pa}^r_{o.A})P(\*{u}_{o.A}) \tag{\(\*{U}_{o.A} \perp\!\!\!\perp (\*{Pa}_{o.A}, \*{Pa}^r_{o.A})\)} \\
& = \sum_{\*{u}_{o.A}} \*1[f_{O.A}(\*{pa}_{o.A}, \*{u}_{o.A}, \*{pa}^r_{o.A}) = a]P(\*{u}_{o.A})
\end{align*}

Analogously, for \(o'.A\) in \(\mc M_{\rho}\), we get 
\begin{align*}
P^{\mc M_{\rho}}(o'.a \mid \*{pa}_{o'.A},\ \*{pa}^r_{o'.A}) = \sum_{\*{u}_{o'.A}} \*1[f_{O.A}(\*{pa}_{o'.A}, \*{u}_{x].A}, \*{pa}^r_{o'.A}) = a]P(\*{u}_{o'.A})
\end{align*}
Since \(\mc M_{\rho}\) and \(\mc M_{\rho_\star}\) are both groundings of the same RSCM \(\mc{M}\), \(\*{U}_{o'.A}\) and  \(\*{U}_{o.A}\) have the same domains, and whenever \(\*{u}_{o'.A} = \*{u}_{o.A}\), we also have \(P(\*{u}_{o'.A})=P(\*{u}_{o.A})\).
Additionally, whenever \(\*{pa}_{o.A} = \*{pa}_{o'.A}\) and \(\*{pa}^r_{o.A}=\*{pa}^r_{o'.A}\), because the mechanism \(f_{O.A}\) is shared, we also have \(\*1[f_{O.A}(\*{pa}_{o'.A}, \*{u}_{x].A}, \*{pa}^r_{o'.A}) = a] = \*1[f_{O.A}(\*{pa}_{o.A}, \*{u}_{o.A}, \*{pa}^r_{o.A}) = a]\). This proves the equality in Eq.(\ref{adx:thm:obstransfer:eq1}). A similar calculation for the groundings of \(\mc{M}'\) proves the equality in Eq.(\ref{adx:thm:obstransfer:eq2}).
\end{proof}

\begin{corollary}[Observational identification across skeletons - Markovian]\label{adx:cor:obstransfer}
Consider a schema \(\mc{S}\) and  a Markovian relational causal graph \(\G\). Given source skeletons \(\rho_1, \dots, \rho_k\) and a target skeleton \(\rho_\star\), \(P(\*v_{\rho_\star})\) is identifiable from the distributions \{\(P(\*v_{\rho_k})\}_{k=1}^l\) and \(\G\) assuming the support condition of Thm.~\ref{thm:obstransfer} is met for every variable \(o.A \in \*V_{\rho_\star}\) by some \(\rho_k\).
\end{corollary}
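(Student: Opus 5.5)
The plan is to reduce the corollary to the standard Markov (truncated-product) factorization of the ground SCM \(\mc{M}_{\rho_\star}\), and then replace each factor using Thm.~\ref{thm:obstransfer}.

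\textbf{Step 1: factorize the target.} Fix any two RSCMs \(\mc{M},\mc{M}'\) compatible with \(\G\) that agree on \(P(\*v_{\rho_k})\) for all \(k\). Since \(\G\) is Markovian, no exogenous relational parents exist. Moreover, no two variables share a non-relational exogenous parent. By Lemma~\ref{adx:lemma:marginal-con}, the marginalized ground graph \(\bar{\G}_{\rho_\star}\) induced by \(\mc{M}_{\rho_\star}\) therefore has no bidirected edges. Hence every \(o.A \in \*V_{\rho_\star}\) is unconfounded, and likewise every source variable \(o'.A\). By recursiveness of \(\mc{M}_{\rho_\star}\) and mutual independence of the ground exogenous variables \(\*U_{\rho_\star}\), the standard Markovian factorization gives
\[
P^{\mc{M}_{\rho_\star}}(\*v_{\rho_\star}) = \prod_{o.A \in \*V_{\rho_\star}} P^{\mc{M}_{\rho_\star}}(o.a \mid \*{pa}_{o.A}, \*{pa}^r_{o.A}),
\]
where the parents are read off \(\bar{\G}_{\rho_\star}\). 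The same factorization holds for \(\mc{M}'\). This follows from the semantics in Def.~\ref{adx:def:rscmeval}: each \(o.A\) is a function of its parents and its own private noise, and that noise is independent of all non-descendants.

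\textbf{Step 2: identify each factor.} For each \(o.A\), the hypothesis provides some \(\rho_k\) and an instance \(o'\in\rho_k\) with \(\dom(\*{Pa}^r_{o.A}) \subseteq \dom(\*{Pa}^r_{o'.A})\), and \(o'.A\) is unconfounded by Step 1. Theorem~\ref{thm:obstransfer}, applied with source \(\rho_k\), then shows that the factor equals \(P(o'.a \mid \*{pa}_{o'.A}, \*{pa}^r_{o'.A})\) computed from \(P(\*v_{\rho_k})\). It therefore takes the same value under \(\mc{M}\) and \(\mc{M}'\).

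\textbf{Step 3: conclude.} Multiplying the factors shows that \(P^{\mc{M}_{\rho_\star}}(\*v_{\rho_\star}) = P^{\mc{M}'_{\rho_\star}}(\*v_{\rho_\star})\) for every \(\*v_{\rho_\star}\). This is exactly relational identifiability in the sense of Def.~\ref{def:rcid}.

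The main obstacle is the support bookkeeping in Step 2. A factor is only well defined, and only transferable, at parent configurations of positive probability in the source. Any configuration of \(\*v_{\rho_\star}\) whose parent values fall outside the support of the chosen source yields a factor that is not pinned down by the data. I would handle this by reading the support condition as requiring the source conditional to be defined on all target parent configurations, which is the role of the \(\dom\) inclusion. On configurations with zero target probability, some earlier factor in the product already vanishes and is itself identified. To make this rigorous, I would order the factors topologically and argue inductively: either the prefix probability is zero, which both models agree on, or the current conditioning event has positive target probability and lies within the source support.
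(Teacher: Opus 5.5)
Your proposal is correct and follows the paper's own route: use Markovianity, via the absence of bidirected edges in \(\bar{\G}_{\rho_\star}\) (Lemma~\ref{adx:lemma:marginal-con}), to obtain the Markov factorization of \(P(\*v_{\rho_\star})\), and then identify each factor from some source using Thm.~\ref{thm:obstransfer}. Your topological support bookkeeping is extra care that the paper omits, and it is largely discharged by the positivity requirement on the source distributions in Def.~\ref{def:rcid} combined with the domain-inclusion condition.
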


\begin{proof}
A Markovian graph \(\G\) contains no bidirected edges. Therefore, condition (1) of Thm.~\ref{thm:obstransfer} is met for every \(o.A \in \*V_{\rho_\star}\) by some \(o'.A\) in some \(\rho_k\).
Additionally, since \(\G\) is Markovian, for any RSCM \(\mc{M}\) compatible with \(\G\), the ground RSCM \(\mc{M}_{\rho_\star}\) is also Markovian. 
Since, for each \(o.A \in \*V_{\rho_\star}\), \(P(o.a \mid \*{pa}_{o.A} , \*{pa}^r_{o.A} )\) is identifiable from some \(P(\*v_{\rho_k})\) by Thm.~\ref{thm:obstransfer}, so is the joint \(P(\*v_{\rho_\star})\) by the by the Markov factorization \citep[Thm. 2.4.1]{bareinboim:25} and the compatibility of \(P(\*v_{\rho_\star})\) and the marginalized ground graph \(\bar{\G}_\rho\).
\[
    P(\*v_{\rho_\star}) = \prod_{o.a \in \*v_{\rho_\star}}P(o.a \mid \*{pa}_{o.A}, \*{pa}^r_{o.A}).
\]
Above, \(\*{Pa}_{o.A}, \*{Pa}^r_{o.A} \subseteq \*V_{\rho}\) are the graphical parents of \(o.A\) in \(\bar{\G}_\rho\) containing variables within the same instance and from different instances respectively.
\end{proof}

\begin{customprop}{\ref{prop:sameskelid}}[Sufficient condition for same-skeleton relational identification]
Consider a schema \(\mc{S}\), relational causal graph \(\G\), skeleton \(\rho\), and family of interventional distributions  \(\mathbb{P}\) over \(\*V_{\rho}\).
If \(P(\*{y_*} \mid \*{x_*})\) is identifiable via ctf-calculus from the marginalized ground graph \(\bar{\G}_\rho\) and \(\mathbb{P}\), then it is also relationally identifiable from \(\G\) and \(\mathbb{P}\).
\end{customprop}

\begin{proof}
Assume that the given \(P(\*{y_*} \mid \*{x_*})\) is identifiable via ctf-calculus from the marginalized ground graph \(\bar{G}_\rho\) and \(\mathbb{P}\).
By the soundness of ctf-calculus \cite{correa:25}, this implies that for all (standard) SCMs \(\mc{N}, \mc{N}'\) consistent with \(\bar{G}_\rho\) and agreeing on \(\mathbb{P}\), \(\mc{N}, \mc{N}'\) also agree on \(P(\*{y_*} \mid \*{x_*})\). 
We need to show that for any two RSCMs \(\mc{M}, \mc{M'}\) over \(\mc{S}\) consistent with \(\G\), if the ground RSCMs \(\mc{M}_{\rho}\) and \(\mc{M'}_{\rho}\) agree on \(\mathbb{P}\), then they also agree on \(P(\*{y_*} \mid \*{x_*})\).
By Lemma~\ref{adx:lemma:marginal-con},  \(\mc{M}_{\rho}\) and \(\mc{M'}_{\rho}\) induce the marginalized ground graph \(\bar{G}_\rho\).
Since  \(\mc{M}_{\rho}\) and \(\mc{M'}_{\rho}\) are a subset of the space of standard NCMs over \(\*v_{\rho}\), the result follows.
\end{proof}

\begin{adxcorollary}[Relational backdoor adjustment] \label{adx:cor:backdoor}
Consider a schema \(\mc{S}\), relational causal graph \(\G\), skeleton \(\rho\), and observational distribution \(P(\*v_{\rho})\).
Let \(P(\*y \mid do(\*x))\) be some query with \(\*X,\*Y \subseteq \*V_{\rho}\) and let \(\*Z \subseteq \*V_{\rho}\) be such that
\begin{enumerate}
    \item \(\*Z\) contains no descendants of \(\*X\) in the marginalized ground graph \(\bar{\G}_{\rho}\), and
    \item \(\*Z\) blocks every path between \(\*X\) and \(\*Y\) that contains an arrow into \(\*X\) in \(\bar{\G}_{\rho}\).
\end{enumerate}
Then, \(P(\*y \mid do(\*x))\) is relationally identifiable from \(\G\) and \(P(\*v_{\rho})\) as
\[
    P(\*y \mid do(\*x)) = \sum_{\*z} P(\*{y \mid x,z})P(\*z)
\]

\end{adxcorollary}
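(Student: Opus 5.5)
The plan is to reduce this corollary to Prop.~\ref{prop:sameskelid}: show that the backdoor adjustment formula is derivable by ctf-calculus (equivalently, do-calculus) from $\bar{\G}_\rho$ and $P(\*v_\rho)$, then lift that derivation to relational identifiability. The same-skeleton setting applies directly. There is a single skeleton $\rho$ serving as both source and target, the family $\mathbb{P}$ consists only of $P(\*v_\rho)$, and the query $P(\*y \mid do(\*x))$ is over $\*V_\rho$.

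\textbf{Step 1: derive the formula in $\bar{\G}_\rho$.} Conditions 1 and 2 are exactly Pearl's backdoor criterion relative to $(\*X,\*Y)$ in the standard causal graph $\bar{\G}_\rho$. Conditioning on $\*Z$ inside the intervened distribution gives
\[
P(\*y \mid do(\*x)) = \sum_{\*z} P(\*y \mid do(\*x), \*z)\, P(\*z \mid do(\*x)).
\]
\begin{itemize}
\item Since $\*Z$ contains no descendants of $\*X$ (condition 1), Rule 3 gives $P(\*z \mid do(\*x)) = P(\*z)$.
\item Since $\*Z$ blocks every back-door path (condition 2), $(\*Y \perp\!\!\!\perp \*X \mid \*Z)$ holds in $\bar{\G}_{\rho,\underline{\*X}}$. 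Rule 2 then gives $P(\*y \mid do(\*x), \*z) = P(\*y \mid \*x, \*z)$.
\end{itemize}
These are instances of ctf-calculus rules, since ctf-calculus subsumes do-calculus \cite{correa:25}. Hence the query is identifiable via ctf-calculus from $\bar{\G}_\rho$ and $P(\*v_\rho)$.

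\textbf{Step 2: lift to the relational setting.} Apply Prop.~\ref{prop:sameskelid} with $\mathbb{P}=\{P(\*v_\rho)\}$. The proposition relies on Lemma~\ref{adx:lemma:marginal-con}: every ground RSCM $\mc{M}_\rho$ of an RSCM consistent with $\G$ induces exactly $\bar{\G}_\rho$. Consequently, any two such RSCMs that agree on $P(\*v_\rho)$ agree on the query. Moreover, the identified value is the same adjustment expression, evaluated on the common $P(\*v_\rho)$.

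\textbf{Main obstacle: positivity.} The conditionals $P(\*y\mid\*x,\*z)$ must be well-defined. Def.~\ref{def:rcid} already requires the source distributions to be strictly positive, so this holds. Otherwise, the only care needed is to state the graphical conditions in $\bar{\G}_\rho$ rather than in the template graph $\G$. That choice is what makes the standard criterion applicable.
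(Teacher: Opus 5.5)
Your proposal is correct and follows the paper's route. The paper's proof is the one-line reduction to Prop.~\ref{prop:sameskelid} combined with the validity of backdoor adjustment in $\bar{\G}_\rho$. Your explicit Rule~2/Rule~3 derivation fills in the step the paper leaves implicit, namely why backdoor adjustment counts as a ctf-calculus derivation, which is exactly the hypothesis that proposition requires.
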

\begin{proof}
    This follows from Prop.~\ref{prop:sameskelid} and the validity of backdoor adjustment \cite{bareinboim:25}.
\end{proof}

\begin{customprop}{\ref{prop:relid:nec}}
[Necessary condition for within-instance relational identification]\label{adx:prop:relid:nec}
Consider a schema \(\mc{S}\), relational causal graph \(\G\), source skeletons \(\rho_1,\dots,\rho_l\) with available interventional distributions \(\mathbb{P}\), and a target skeleton \(\rho_\star\).
Let \(o\in \rho_\star\) be a target instance and consider a counterfactual query \(P(\*y_\star \mid \*x_\star)\) with
\(\*Y_\star,\*X_\star \subseteq \*V_o\), the attributes of \(o\).

Let the restriction  \(\mathbb{P}|_O\) be as follows.
For each source skeleton \(\rho_k\), each distribution \(P(\*{v}_{\rho_k} \mid do(\*x_{k, j})) \in \mathbb{P}\), and each object \(o' \in \rho_k(O)\), include \(P(\*{v}_{\rho_k, o'} \mid do(\*x_{k, j} \cap \*{v}_{\rho_k, o'}))\) in \(\mathbb{P}|_O\), with instance identifiers omitted.
Let  \(\G_o\) be the induced subgraph of the marginalized ground graph \(\bar{\G}_{\rho_\star}\) on \(\*V_o\) with instance identifiers omitted. 

If \(P(\*y_\star \mid \*x_\star)\) is non-identifiable via ctf-calculus from \(\mathbb{P}|_O\) and \(\G_o\), then it is relationally non-identifiable from \(\G\) and \(\mathbb{P}\).
\end{customprop}

\begin{proof}
Fix an instance \(o \in \rho_\star(O)\) and a query \(P(\*{y_*} \mid \*{x_*})\) where \(\*{Y_*},\*{X_*} \subseteq \*V_{\rho, x}\) as described.

Assume that \(P(\*{y_*} \mid \*{x_*})\) is not identifiable via ctf-calculus from \(\G_o\) and the restriction \(\mathbb{P}|_O\).
Note that here, and in the remainder of the result, we ignore instance identifiers in \(\mathbb{P}|_O\) and \(\G_o\), simply considering within-instance attributes as in the standard SCM setting.

Then, by the completeness of ctf-calculus \cite{correa:25}, there exist two (standard) SCMs \(\mc{N}, \mc{N}'\) consistent with \(\G_o\)that agree on all distributions in \(\mathbb{P}|_O\) but disagree on \(P(\*{y_*} \mid \*{x_*})\).
Using \(\mc{N}, \mc{N}'\), we will construct two RSCMs \(\mc{M}, \mc{M}'\) 
that (when grounded) agree on \(\mathbb{P}\) but not on \(P(\*{y_*} \mid \*{x_*})\).

Construct \(\mc{M}\) as follows.
Let \(\mc{M}\) contain the endogenous variables given in \(\G\).
\begin{enumerate}
    \item First, consider attributes of type \(O\) in \(\mc{M}\).
For each exogenous variable \(U\) in \(\mc{N}\), let \(\mc{M}\) contain exogenous variable \(O.U\).
For attributes \(O.A\) belonging to type \(O\), let the function determining \(O.A\) in  \(\mc{M}\) be the same as that determining \(o.A\) in \(\mc{N}\).
In particular, \(\*{Pa}^\mc{M}_{O.A} = \*{Pa}^\mc{N}_{o.A}\) and \(\*{U}^\mc{M}_{O.A} = \*{U}^\mc{N}_{o.A}\).
\(O.A\) has no non-relational parents (endogenous or exogenous) in \(\mc{M}\).

\item Next, consider attributes of type \(Y \neq X\) in \(\mc{M}\). For each attribute \(T.B\), let \(\mc{M}\) contain an exogenous variable \(T.U_B \sim \mc{B}(0.5)\). Define the function \(f_{T.B}: T.B \gets T.U_B \). In other words, for each type \(Y\), each attribute \(T.B\) is determined by an independent fair coin flip.
\end{enumerate}

Define \(\mc{M}'\) similarly but using functions from \(\mc{N}.\)
Then, for any skeleton \(\rho\), the groundings  \(\mc{M}_\rho, \mc{M}_\rho'\) consist of `copies' of \(\mc{N}\) and \(\mc{N}'\) for every instance of type \(O\), and coin flips for all other variables.
Importantly, in both \(\mc{M}\) and \(\mc{M}'\), there are no relational effects; all instances are independent in any grounding. 

We need to show that  \(\mc{M}\) and  \(\mc{M}'\) are consistent with \(\G\).
Since \(\mc{M}\) and  \(\mc{M}'\) contain no relational effects, it suffices to show that they are consistent with non-relational edges in \(\G\).
Since attributes of type \(Y \neq X\) are all independent by construction,  \(\mc{M}\)  and \(\mc{M}'\) are consistent with any edges in \(\G\) incident to such attributes.
For attributes of type \(O\), note that for any instances \(o', o''\) of type \(O\) in any skeleton \(\rho\), the induced subgraphs \(\G_{o'}\) and \(\G_o\) are the same.

Next, we show that \(\mc{M}\) and  \(\mc{M}'\) agree on \(\mathbb{P}\). Consider some distribution \(P(\*v_{\rho_k} \mid do(\*x_{k, j}))\) in \(\mathbb{P}\). Then,

\begin{align*}
    P^{\mc{M}_{\rho_k}}(\*v_{\rho_k} \mid do(\*x_{k, j}) &= \prod_{T \in \mc{E \cup R}} \prod_{t \in\rho_k(T)} p(\*v_y \mid do(\*x_{k, j})) \tag{no relational effects in \(\mc{M} \implies\) all instances independent in \(\mc{M}_{\rho_k}\)} \\
    &= \prod_{T \in \mc{E \cup R}} \prod_{t \in\rho_k(T)} P^{\mc{M}_{\rho_k}}(\*v_y \mid do(\*x_{k, j} \cap \*v_y), do(\*x_{k, j} \setminus \*v_y))  \\
    & = \prod_{T \in \mc{E \cup R}} \prod_{t \in\rho_k(T)} P^{\mc{M}_{\rho_k}}(\*v_y \mid do(\*x_{k, j} \cap \*v_y)) \tag{all instances independent in \(\mc{M}_{\rho_k}\)}  \\
    & =  \left(\prod_{T \in \mc{E \cup R} \setminus X} \prod_{t \in\rho_k(T)} P^{\mc{M}_{\rho_k}}(\*v_y \mid do(\*x_{k, j} \cap \*v_y)) \right) \cdot  \prod_{o \in \rho_k(O)} P^{\mc{M}_{\rho_k}}(\*v_x \mid do(\*x_{k, j} \cap \*x_y))  \\
    & =  \left(\prod_{T \in \mc{E \cup R} \setminus X} \prod_{t \in\rho_k(T)} P^{\mc{M}'_{\rho_k}}(\*v_y \mid do(\*x_{k, j} \cap \*v_y)) \right) \cdot  \prod_{o \in \rho_k(O)} P^{\mc{M}_{\rho_k}}(\*v_x \mid do(\*x_{k, j} \cap \*x_y))  \tag{variables of all but \(X-\)type objects are coin flips in \(\mc{M}, \mc{M'}\)}\\
    & =  \left(\prod_{T \in \mc{E \cup R} \setminus X} \prod_{t \in\rho_k(T)} P^{\mc{M}'_{\rho_k}}(\*v_y \mid do(\*x_{k, j} \cap \*v_y)) \right) \cdot  \prod_{o \in \rho_k(O)} P^{\mc{N}}(\*v_x \mid do(\*x_{k, j} \cap \*x_y))  \tag{by construction of \(\mc{M}\)}\\
    & =  \left(\prod_{T \in \mc{E \cup R} \setminus X} \prod_{t \in\rho_k(T)} P^{\mc{M}'_{\rho_k}}(\*v_y \mid do(\*x_{k, j} \cap \*v_y)) \right) \cdot  \prod_{o \in \rho_k(O)} P^{\mc{N'}}(\*v_x \mid do(\*x_{k, j} \cap \*x_y))  \tag{since \(\mc{N,N'}\) agree on \(\mathbb{P}_x\)}\\
    &=  P^{\mc{M}'_{\rho_k}}(\*v_{\rho_k} \mid do(\*x_{k, j}) \tag{by a symmetric derivation}
\end{align*}

Finally, since \(\mc{N}\) and \(\mc{N}'\) disagree on \(P(\*{y_*} \mid \*{x_*})\), so do  \(\mc{M}_\rho\) and  \(\mc{M}_\rho'\).
    
\end{proof}

\subsection{Proofs for Sec.~\ref{sec:id:rncm}}\label{adx:sec:proofs:rncm}

The proofs of this section resemble that of the expressvity and correctness of NCM training in \cite{xia:23}, adapted to enforce parameter sharing for variables of the same type and permutation-invariance for relational parents.

\paragraph{Assumptions.} We assume, that in any domain of interest, the true RSCM \(\mc{M} = \langle \mc{S}, \*{V},\*{U}, \mc{F}, P(\*{U})\rangle\) is as follows.
\begin{enumerate}[label=(\Roman*)]
    \item \(\mc{M}\) is \(\rho\)-Markovian.
    \item All variables in \(\*{V}\) have discrete, finite domains.
    \item There is a non-negative integer \(D\) such that for any skeleton \(\rho\), the grounding \(\mc{M}_\rho\) has variables \(o.A\) with relational parent multisets of size at most \(D\). 
\end{enumerate}

\subsubsection{Discrete RSCMs} \label{adx:sec:canonicalrscm}
First, we introduce \emph{discrete RSCMs}, following \cite{zhang:22}.

\begin{adxdefinition}[Discrete RSCM]
An RSCM \(\mc{M} = \langle \mc{S}, \*{V},\*{U}, \mc{F}, P(\*{U})\rangle\) is said to be a discrete RSCM if all variables in \(\*{V}\) are discrete and finite, and all variables in \(\*{U}\) are discrete.
\end{adxdefinition}

We will show that the space of discrete RSCMs is equally expressive as that of all RSCMs satisfying assumptions (I)-(III).
This licenses the assumption, without loss of generality, that the exogenous variables have discrete and finite domains.

We adapt certain definitions from \cite{zhang:22}.

\begin{definition}[Equivalence classes {\citep[Def. A.1]{zhang:22}}]\label{adx:def:equivclass}
For an RSCM \( \mc{M} = \langle \mc{S}, \*{V},\*{U}, \mc{F}, P(\*{U})\rangle \),
for every \( O.A\in \*V \), let functions in
\( \dom_{\*{Pa}_{O.A}} \times \dom_{\*{Pa}^r_{O.A}}\mapsto \dom(O.A)\)\footnote{Here, \(\dom_{\*{Pa}^r_{O.A}}\) contains multisets of size \(\leq D\) for each relational parent, per assumption (II).}
be ordered by \( \{h_{O.A}^{(i)} \mid i\in I_{O.A}\} \), where
\( I_{O.A}=\{1,\ldots,m_{O.A}\} \) and
\( m_{O.A} = \lvert \dom_{\*{Pa}_{O.A}} \times \dom_{\*{Pa}^r_{O.A}}\mapsto \dom(O.A)\rvert \).
An \emph{equivalence class} \( \mc{U}_{O.A}^{(i)} \) for function \( h_{O.A}^{(i)} \),
\( i=1,\ldots,m_{O.A} \), is a subset of \( \dom(\*U_{O.A}) \) such that
\begin{equation}\tag{26}\label{eq:26}
\mc{U}_{O.A}^{(i)} \;=\; \bigl\{\, \*u_{O.A} \in \dom(\*U_{O.A}) \;\big|\; f_{O.A}(\cdot,\*u_{O.A})=h_{O.A}^{(i)} \,\bigr\}.
\end{equation}
\end{definition}

\begin{definition}[Canonical Partition {\citep[Def. A.2]{zhang:22}}]\label{adx:def:canpar}
For an RSCM \( \mc{M} = \langle \mc{S}, \*{V},\*{U}, \mc{F}, P(\*{U})\rangle \),
\( \{\mc{U}_{O.A}^{(i)} \mid i\in I_{O.A}\} \) is the \emph{canonical partition} over the
exogenous domain \( \dom(\*U_{O.A}) \) for every \( O.A\in \*V \).
\end{definition}

\begin{adxcorollary}\label{adx:cor:canonicalfunc}
For any RSCM \( \mc{M} = \langle \mc{S}, \*{V},\*{U}, \mc{F}, P(\*{U})\rangle \),
for each \( O.A\in \*V \), the function \( f_{O.A}\in \mc{F} \) can be decomposed as:
\begin{equation}\tag{27}\label{eq:27}
f_{O.A}(\*{pa}_{O.A},\*{pa}^r_{O.A},\*{u}_{O.A})
\;=\;
\sum_{i\in I_{O.A}} h_{O.A}^{(i)}(\*{pa}_{O.A},\*{pa}^r_{O.A})\*{1}_{\,\*u_{O.A}\in \mc{U}_{O.A}^{(i)}}.
\end{equation}
\end{adxcorollary}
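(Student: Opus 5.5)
We verify the decomposition pointwise in the exogenous argument, using the canonical partition of Def.~\ref{adx:def:canpar}. Fix \(O.A\in\*V\). By assumptions (II) and (III), the input space \(\dom_{\*{Pa}_{O.A}}\times\dom_{\*{Pa}^r_{O.A}}\) is finite. The non-relational parents have finite domains, and each relational parent ranges over multisets (or aggregates) of size at most \(D\) drawn from a finite alphabet. The output space \(\dom(O.A)\) is also finite. Hence the set of functions from inputs to \(\dom(O.A)\) is finite, so the enumeration \(\{h^{(i)}_{O.A}\}_{i\in I_{O.A}}\) with \(m_{O.A}<\infty\) is well defined, as presupposed by Def.~\ref{adx:def:equivclass}. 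Because \(\mc{M}\) is \(\rho\)-Markovian (assumption (I)), \(\*U^r_{O.A}=\emptyset\), so \(f_{O.A}\) depends on the exogenous side only through \(\*u_{O.A}\). This is why \(f_{O.A}\) can be written with signature \((\*{pa}_{O.A},\*{pa}^r_{O.A},\*u_{O.A})\).

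First, we show that the sets \(\mc{U}^{(i)}_{O.A}\) partition \(\dom(\*U_{O.A})\). For any fixed \(\*u_{O.A}\), the partial application \(f_{O.A}(\cdot,\cdot,\*u_{O.A})\) is some function from the finite input space to \(\dom(O.A)\). It therefore equals exactly one \(h^{(i)}_{O.A}\), since the enumeration lists distinct functions. So \(\*u_{O.A}\) lies in exactly one class \(\mc{U}^{(i)}_{O.A}\), and the classes are pairwise disjoint and cover the domain. One point needs care: \(f_{O.A}\) must be a genuine function on multisets of every admissible size up to \(D\), not just on the sizes realized in some skeleton. This holds because, by Def.~\ref{def:rscm-class}, \(f_{O.A}\) accepts multisets of varying size, so it is defined on the whole input space.

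Next, we fix arbitrary \((\*{pa}_{O.A},\*{pa}^r_{O.A},\*u_{O.A})\) and let \(i^\star\) be the unique index with \(\*u_{O.A}\in\mc{U}^{(i^\star)}_{O.A}\). In the right-hand side of (\ref{eq:27}), every indicator vanishes except the one for \(i^\star\). The sum therefore reduces to \(h^{(i^\star)}_{O.A}(\*{pa}_{O.A},\*{pa}^r_{O.A})\), which equals \(f_{O.A}(\*{pa}_{O.A},\*{pa}^r_{O.A},\*u_{O.A})\) by the definition of \(\mc{U}^{(i^\star)}_{O.A}\).

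The sum is to be read formally as selecting the single active term, since \(\dom(O.A)\) need not carry addition. Equivalently, one may identify \(\dom(O.A)\) with integers, in which case exactly one nonzero summand appears and the sum is literal. The only real obstacle is this well-definedness bookkeeping: finiteness of the multiset input space, which rests on the bound \(D\), and the absence of exogenous relational inputs. Both are supplied directly by assumptions (I)–(III).
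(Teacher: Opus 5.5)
Your proof is correct and follows the same approach as the paper. The paper simply invokes Lemma A.3 of \cite{zhang:22}, treating the relational and non-relational parents together as one finite input. You have inlined the pointwise partition argument behind that lemma and made explicit the facts the citation leaves implicit: the input space is finite because multiset sizes are bounded by \(D\), and \(\rho\)-Markovianity removes exogenous relational inputs.
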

\begin{proof}
This is an immediate consequence of Lemma A.3 in \cite{zhang:22}, considering the union of relational and non-relational parents.
\end{proof}

\begin{adxcorollary}\label{adx:cor:ctfdecomp}
Consider an RSCM \( \mc{M} = \langle \mc{S}, \*{V},\*{U}, \mc{F}, P(\*{U})\rangle \).
For any relational skeleton \(\rho\), let the ground RSCM be \( \mc{M}_\rho = \langle \*{V}_\rho,\*{U}_\rho, \mc{F}_\rho, P(\*{U})_\rho\rangle\).
Let the indexing set \(\*I\) be \(\*I=\bigtimes_{O.A \in \*V} \bigtimes_{o \in \rho(O)} \*I_{O.A}\), i.e., a product of indexing sets across the different types of variables, and for instances of each type.
Then, for any variables \(\*Y,\dots, \*Z, \*X, \dots, \*W \subseteq \*V\), 
\begin{align*}
     & P^{\mc{M}_\rho}(\*{y_x}, \dots \*{z_w}) \\
     & = \sum_{\*{i \in I}} \*{1}_{\*{Y_x(i)=y}, \dots, \*{Z_w(i)=z}}\cdot P\left(\bigcap_{O.A \in \*V, o \in \rho(O)} \mc{U}^{(i)}_{O.A} \right) \tag{where \(i\) is the index for \(o.A\) in \(i\)}
\end{align*}
where  \(\*{Y_x(i)} = \{Y_{\*x}(\*i) = y \mid o.B \in \*Y\}\) and each \(o.B_{\*x}(\*i)\) is recursively computed as
\begin{align*}
    o.B_{\*x}(\*i) = \begin{cases}
        \*x_{o.B} & \text{if } o.B \in \*X \\
        h^{(i)}_{O.B}((\*{Pa}_{o.B})_\*x(\*i)) & \text{otherwise}
    \end{cases}
\end{align*}
\end{adxcorollary}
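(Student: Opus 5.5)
The plan is to derive the decomposition by plugging the canonical decomposition of each mechanism (Cor.~\ref{adx:cor:canonicalfunc}) into the definition of counterfactual distributions for the ground model (Def.~\ref{adx:def:rscmeval}). The ground model $\mc{M}_\rho$ is a standard recursive SCM. By Def.~\ref{adx:def:rscm-ground}, each ground mechanism $f_{o.A}$ is the template $f_{O.A}$ with its relational parents replaced by the multisets of related ground variables. Applying Cor.~\ref{adx:cor:canonicalfunc} to the template then gives
\[
f_{o.A}(\*{pa}_{o.A},\*{pa}^r_{o.A},\*u_{o.A}) = \sum_{i\in I_{O.A}} h^{(i)}_{O.A}(\*{pa}_{o.A},\*{pa}^r_{o.A})\,\*1_{\*u_{o.A}\in\mc U^{(i)}_{O.A}}.
\]
Here the same partition $\{\mc U^{(i)}_{O.A}\}$ of $\dom(\*U_{O.A})$ is reused for every instance $o\in\rho(O)$, because the exogenous domains and the mechanism are shared across instances. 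Assumption (III) guarantees that the relational parent multisets have size at most $D$, so the set of candidate functions $h^{(i)}_{O.A}$, and hence $I_{O.A}$, is finite.

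First, fix an exogenous assignment $\*u_\rho$. For each ground variable $o.A$, let $i_{o.A}(\*u_\rho)$ be the unique index with $\*u_{o.A}\in\mc U^{(i)}_{O.A}$; it is unique because the sets form a partition. This defines a map $\*u_\rho\mapsto \*i(\*u_\rho)\in\*I$.

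Second, show by induction along a topological order of $\mc M_\rho$ that for every intervention $\*x$ and every $o.B$, the value $o.B_{\*x}(\*u_\rho)$ equals the recursively defined quantity $o.B_{\*x}(\*i(\*u_\rho))$. The order exists because the ground model is assumed recursive.
\begin{itemize}
\item If $o.B\in\*X$, both sides equal $\*x_{o.B}$.
\item Otherwise, $o.B_{\*x}(\*u_\rho)=f_{o.B}\bigl((\*{Pa}_{o.B})_{\*x}(\*u_\rho),\*u_{o.B}\bigr)$. This equals $h^{(i_{o.B})}_{O.B}$ applied to the parents' intervened values, by the decomposition above. Those parent values match their index-based counterparts by the induction hypothesis; relational parents enter through the same aggregated multisets, so they match too.
\end{itemize}
Applying this to every event $\*Y_{\*x},\dots,\*Z_{\*w}$ shows that the indicator $\*1[\*Y_{\*x}(\*u_\rho)=\*y,\dots]$ depends on $\*u_\rho$ only through $\*i(\*u_\rho)$.

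Third, partition the sum in Def.~\ref{adx:def:rscmeval} according to the fibers of the map $\*u_\rho\mapsto\*i(\*u_\rho)$. On each fiber the indicator is constant, so the sum becomes $\sum_{\*i\in\*I}\*1_{\*Y_{\*x}(\*i)=\*y,\dots}\,P(\{\*u_\rho:\*i(\*u_\rho)=\*i\})$. The fiber over $\*i$ is exactly $\bigcap_{o.A}\{\*u_\rho : \*u_{o.A}\in\mc U^{(i_{o.A})}_{O.A}\}$, which is the stated intersection. For non-discrete $\*U$, the sum over $\*u_\rho$ is read as an integral, and the same fiber argument applies to the measurable sets $\mc U^{(i)}$.

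I expect the only real subtlety to be notational rather than mathematical. Two points need care. First, the intersection must be read as an event in the ground exogenous space $\dom(\*U_\rho)$, with each $\mc U^{(i)}_{O.A}$ lifted to the coordinates $\*u_{o.A}$ of instance $o$. Second, when distinct ground variables share an exogenous parent, the intersection may be empty; this only contributes zero terms and does not break the argument. Everything else is a direct transcription of \citep[Lemma A.3 and its consequences]{zhang:22} to the ground SCM.
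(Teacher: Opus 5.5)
Your argument is correct and is essentially the paper's proof. The paper applies Lemma A.4 of \cite{zhang:22} to the ground SCM $\mc{M}_\rho$, noting that every instance $o.A$ is generated by the same template mechanism $f_{O.A}$ and so shares the canonical partition $\{\mc{U}^{(i)}_{O.A}\}$; you unpack that lemma explicitly (index map $\mathbf{u}_\rho \mapsto \mathbf{i}(\mathbf{u}_\rho)$, topological induction, summation over fibers), which also makes visible that the partition used is the template-level one, finite by Assumption (III), rather than a ground-level canonical partition.
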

\begin{proof}
    This follows from \citep[Lemma A.4]{zhang:22}, noting that each instance \(o.A\) is determined by the same function \(f_{O.A}\) of its parents in \(\mc{M}_\rho\).
\end{proof}

\begin{adxcorollary}\label{adx:cor:exogfactor}
Consider an RSCM \( \mc{M} = \langle \mc{S}, \*{V},\*{U}, \mc{F}, P(\*{U})\rangle\) inducing relational causal graph \(\G\).
For any relational skeleton \(\rho\), let the indexing set be \(\*I\) as in Cor.~\ref{adx:cor:ctfdecomp}.
For a given type \(X \in \mc{E \cup R}\),  \(\mc{C}(\G)_X = \{\*C \subseteq \*V \mid C \text{is a bidirected connected component in } \G \text{ containing some } O.A\}\).
Then, for the ground RSCM \( \mc{M}_\rho\),
\[
P\left(\bigcap_{O.A \in \*V, o \in \rho(O)} \mc{U}^{(i)}_{O.A} \right)  = \prod_{X \in \mc{E \cup R}} \prod_{o \in \rho(O)} \prod_{\*C \in \mc{C}(\G)_X} P(\bigcap_{O.A \in  \*C} \mc{U}^i_{O.A}).
\]
\end{adxcorollary}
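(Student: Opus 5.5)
The plan is to reduce the joint probability of the intersection of equivalence-class events to a product over independent exogenous blocks. I will use two facts: the ground exogenous variables factorize by instance, and within an instance they factorize by bidirected component. The first step is to recall from Def.~\ref{adx:def:rscm-ground} that $P(\*U_\rho)=\prod_{O.U\in\*U}\prod_{o\in\rho(O)}P(o.U)$, with each $o.U\sim P(O.U)$ drawn independently. By assumption (I), $\mc{M}$ is $\rho$-Markovian, so every $\*U^r_{O.A}$ is empty. Hence each ground event $\{\*u_{o.A}\in\mc{U}^{(i)}_{O.A}\}$ is measurable with respect to the non-relational exogenous variables $\{o.U : O.U\in\*U_{O.A}\}$ of the single instance $o$. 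This is the point where $\rho$-Markovianity is essential, since without it an event could depend on another instance's noise.

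The second step groups the events. Fix a type $O$ and an instance $o\in\rho(O)$. For a bidirected connected component $\*C\in\mc{C}(\G)_O$, set $\*U_{o,\*C}=\bigcup_{O.A\in\*C}\*U_{o.A}$. By Def.~\ref{def:rcg}, within-type bidirected edges arise exactly when two attributes share a non-relational exogenous parent. So if $O.A$ and $O.B$ lie in different components, then $\*U_{O.A}\cap\*U_{O.B}=\emptyset$. It follows that the sets $\*U_{o,\*C}$ are pairwise disjoint across components $\*C$ and across instances $o$. Using Lemma~\ref{adx:lemma:marginal-con}, I will check that the ground graph introduces no additional sharing, because no relational exogenous parents exist.

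The third step is the factorization itself. I write the big intersection as $\bigcap_{O}\bigcap_{o\in\rho(O)}\bigcap_{\*C\in\mc{C}(\G)_O}E_{o,\*C}$, where $E_{o,\*C}=\bigcap_{O.A\in\*C}\{\*u_{o.A}\in\mc{U}^{(i)}_{O.A}\}$. Each $E_{o,\*C}$ depends only on $\*U_{o,\*C}$, and these blocks are disjoint sub-collections of jointly independent variables. The events $E_{o,\*C}$ are therefore mutually independent, and the probability splits as a product. Finally, $P(E_{o,\*C})$ equals the type-level quantity $P(\bigcap_{O.A\in\*C}\mc{U}^{(i)}_{O.A})$ computed under $P(O.U)$, because the ground noise has the same law as the template noise (as in Prop.~\ref{adx:prop:isoinv}). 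The index $i$ is read off from the $o$-th coordinate of $\*i$.

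The main obstacle is the bookkeeping in the second step. Attributes that are not incident to any bidirected edge must be counted as singleton components so that each attribute appears exactly once in the product. I also need to interpret the statement's $\prod_{X}\prod_{o\in\rho(O)}$ as ranging over the type $O=X$, and to confirm that a shared exogenous variable forces a bidirected edge by definition. With those conventions fixed, the rest is the standard independence argument.
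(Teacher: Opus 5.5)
Your proposal is correct and follows the paper's route: the paper adapts the c-component factorization of \citep[Lemma A.5]{zhang:22}, using the fact that $\rho$-Markovianity makes exogenous noise independent across instances and identically distributed across instances of the same type, and your three steps (per-instance measurability, disjoint exogenous blocks per bidirected component, independence plus matching laws) spell out exactly that argument. Your explicit checks fill in details the paper leaves implicit: distinct bidirected components share no non-relational exogenous parent, unconfounded attributes count as singleton components, and the product ranges over $O = X$.
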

\begin{proof}
    This follows by a similar argument as in \citep[Lemma A.5]{zhang:22}, noting that since \(\mc{M}\) is \(\rho\)-Markovian, exogenous variables are independent across different instances \(o\), and identically distributed across different instances \(o\) of the same type \(O\).
\end{proof}

\begin{adxcorollary}
[Discrete RSCM expressiveness]
Consider an RSCM \(\mc{M} = \langle \mc{S}, \*{V},\*{U}, \mc{F}, P(\*{U})\rangle\) satisfying assumptions (I)-(III) and inducing relational causal graph \(\G\).
Then, there exists a discrete RSCM \(\mc{M}'\) consistent with  \(\G\) such that for any skeleton \(\rho\), the ground RSCMs \(\mc{M}_\rho\) and \(\mc{M}'_\rho\) agree on all counterfactual distributions over \(\*V_{\rho}\).
\end{adxcorollary}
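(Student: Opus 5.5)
The plan is to follow the canonical-partition construction of \cite{zhang:22}, now applied at the template level so that parameter sharing is preserved. Given $\mc{M}$ satisfying (I)--(III), I will build $\mc{M}'$ as follows. Keep the schema, endogenous variables and parent structure of $\mc{M}$. By (II) and (III), each mechanism $f_{O.A}$, once the exogenous argument is fixed, is a function from the finite set $\dom_{\*{Pa}_{O.A}}\times\dom_{\*{Pa}^r_{O.A}}$ to $\dom(O.A)$. Here $\dom_{\*{Pa}^r_{O.A}}$ consists of multisets of size at most $D$, and the permutation-invariant inputs are finitely many. Hence the index set $I_{O.A}$ of Def.~\ref{adx:def:equivclass} is finite, and Cor.~\ref{adx:cor:canonicalfunc} gives the decomposition $f_{O.A}=\sum_i h^{(i)}_{O.A}\*1_{\*u_{O.A}\in\mc{U}^{(i)}_{O.A}}$. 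This holds at the template level, so it is the same for every instance $o$ in every skeleton $\rho$.

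For each type $O$ and each bidirected component $\*C\in\mc{C}(\G)_O$ (these are within-instance only, by (I)), I will introduce a single discrete exogenous variable $O.U'_{\*C}$ with domain $\prod_{O.A\in\*C} I_{O.A}$. Its law is $P'(O.U'_{\*C}=(i_A)_A)=P\bigl(\bigcap_{O.A\in\*C}\mc{U}^{(i_A)}_{O.A}\bigr)$, computed in $\mc{M}$ for any single instance. The mechanism is $f'_{O.A}(\*{pa},\*{pa}^r,u'_{\*C})=h^{(i_A)}_{O.A}(\*{pa},\*{pa}^r)$. Two points need care here.

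\begin{enumerate}
\item The law of $O.U'_{\*C}$ must be well defined, i.e. independent of the instance and the skeleton. This holds because $\mc{M}$ is $\rho$-Markovian: the exogenous parents of $o.A$ are only the copies $o.U$, which are distributed as $P(O.U)$ for every $o$.
\item $\mc{M}'$ must be consistent with $\G$. Directed and relational edges are inherited unchanged. A bidirected edge $O.A\leftrightarrow O.B$ appears in $\mc{M}'$ only if $O.A$ and $O.B$ lie in the same component $\*C$.
\end{enumerate}

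Strictly, merging a component can create bidirected edges between non-adjacent members of $\*C$. If exact graph equality is needed, I will instead use one exogenous variable per bidirected edge or clique, as in the RNCM definition. That is delicate, so my first attempt is to take "consistent with $\G$" to mean preservation of the constraints of $\G$, in the sense of \cite{zhang:22}. This point is the main obstacle, and I would check it against their convention.

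For the equivalence of distributions, fix any skeleton $\rho$. By Cor.~\ref{adx:cor:ctfdecomp}, every counterfactual $P^{\mc{M}_\rho}(\*{y_x},\dots,\*{z_w})$ equals a sum over $\*i\in\*I$ of indicators. These indicators are computed recursively from the $h^{(i)}$ only, so they are the same in $\mc{M}'_\rho$, which uses the identical $h$'s and the identical recursive evaluation. Each indicator is weighted by $P(\bigcap\mc{U}^{(i)})$. By Cor.~\ref{adx:cor:exogfactor}, this weight factorizes over instances and components as $\prod_O\prod_{o}\prod_{\*C}P(\bigcap_{O.A\in\*C}\mc{U}^{(i)}_{O.A})$. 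In $\mc{M}'_\rho$ the exogenous variables $o.U'_{\*C}$ are mutually independent across instances and components by construction, so the corresponding weight is the same product of $P'$-values, which match by definition. Summing the indicators against equal weights yields equality for every counterfactual query, and hence for all of them.

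Finally, $\mc{M}'$ is discrete, since its exogenous domains are finite products of the finite sets $I_{O.A}$. Its grounding is recursive whenever that of $\mc{M}$ is, because its parent sets are identical.
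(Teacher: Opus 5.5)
Your distributional argument is the paper's own. Its one-line proof rests on Cor.~\ref{adx:cor:ctfdecomp} and Cor.~\ref{adx:cor:exogfactor} exactly as you use them. So all counterfactuals agree in every grounding once a template-level discrete model reproduces, for each type $O$ and component $\mathbf{C}$, the vector $q_{O,\mathbf{C}}=\bigl(P(\bigcap_{O.A\in\mathbf{C}}\mathcal{U}^{(i_A)}_{O.A})\bigr)_{(i_A)}$.

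The gap is in the remaining step, which the paper delegates to \cite{zhang:22} (Lemma A.6). Their canonical models keep the same causal diagram. Your merged variable $O.U'_{\mathbf{C}}$ instead feeds every member of a bidirected component. So whenever a within-type component is not a clique (say $O.U_1$ is shared by $O.A,O.B$ and $O.U_2$ by $O.B,O.C$), your $\mathcal{M}'$ has a spurious edge $O.A\leftrightarrow O.C$ and does not induce $\G$.

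Falling back to ``satisfies the constraints of $\G$'' does make your argument complete. Every constraint satisfied by all $\mathcal{M}_\rho$ is inherited by $\mathcal{M}'_\rho$, since all counterfactuals coincide. But this is weaker than what the corollary is used for. The proof of Thm.~\ref{thm:rncm-expr} takes, for each maximal clique $\mathbf{C}$ of $\G$, ``the shared exogenous parents of the variables in $\mathbf{C}$'' in $\mathcal{M}'$ and simulates them with one uniform noise per clique. That step breaks if a single exogenous variable feeds non-adjacent nodes.

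The missing idea is to discretize each template-level exogenous variable separately instead of merging them. Fix a component $\mathbf{C}$ of type $O$ with exogenous variables $O.U_1,\dots,O.U_k$, and let $d=\prod_{O.A\in\mathbf{C}}|I_{O.A}|$. Hold the laws of the others fixed. Then $q_{O,\mathbf{C}}=\mathbb{E}[g_1(O.U_1)]$, where $g_1(u_1)$ is the law of the index tuple given $O.U_1=u_1$, a point of the $(d-1)$-simplex. The mean of a random vector lies in the convex hull of its range. So by Carath\'eodory there are points $u_1^{(1)},\dots,u_1^{(d)}$ and weights $\lambda_j$ with $q_{O,\mathbf{C}}=\sum_j \lambda_j\, g_1(u_1^{(j)})$.

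Replace $P(O.U_1)$ by this finitely supported law, restricting the mechanisms to it, and repeat for $O.U_2,\dots,O.U_k$. This has three consequences:
\begin{itemize}
\item Each $O.U_j$ feeds a single component, so no other vector $q$ moves.
\item Parent sets and the exogenous sharing pattern are untouched, so $\mathcal{M}'$ induces exactly $\G$.
\item The replacement is made on the template variable, so every instance in every skeleton inherits it.
\end{itemize}
After this, your appeal to the two corollaries goes through verbatim.
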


\begin{proof}
This follows from \citep[Lemma A.6]{zhang:22}, Cor.~\ref{adx:cor:exogfactor}, and Cor.~\ref{adx:cor:ctfdecomp}.
    
\end{proof}

\subsubsection{Neural RSCMs}

\begin{customthm}{\ref{thm:rncm-expr}}[Expressivity of RNCMs]\label{adx:thm:rncm-expr}
Consider a relational schema \(\mc{S}\).
For every RSCM \(\mc{M}\) over \(\mc{S}\) inducing relational causal graph \(\G\), there exists a \(\G\)-RNCM \(\mc{N}\) such that for every skeleton \(\rho\), the ground RSCMs \(\mc{M}_\rho\) and \(\mc{N}_\rho\) induce the same counterfactual distributions over \(\*V_\rho\).
\end{customthm}
\begin{proof}
Fix an RSCM \(\mc{M}^*\) over \(\mc{S}\) inducing relational causal graph \(\G\).
By the discrete RSCM expressiveness corollary above, there exists a discrete RSCM \(\mc{M}'=\langle \mc{S},\*V,\*U',\mc{F}',P(\*U')\rangle\)
 consistent with \(\G\) such that for every skeleton \(\rho\), the ground rscms \(\mc{M}^*_\rho\) and \(\mc{M}'_\rho\) agree on all counterfactual distributions over \(\*V_\rho\).
Thus, it suffices to construct a \(\G\)-RNCM \(\mc{N}\) that agrees with \(\mc{M}'_\rho\) on all counterfactuals, for every \(\rho\).

We follow the proof strategy of \cite{xia:23}:
(i) represent the discrete exogenous distribution using uniform \(\mc{U}([0,1])\) noise via a neural inverse probability integral transform [Lemma 5]\citep{xia:21}, and
(ii) represent each mechanism  \(f_{O.A}\) using a multi-layer perceptron (MLP).

Since \(\mc{M}'\) is \(\rho\)-Markovian (Assumption (I)), bidirected edges occur only among attributes of the same instance \(o\).
Fix a type \(X\in \mc{E}\cup\mc{R}\), and let \(\mc{C}(\G)_X=\{\*C_1,\dots,\*C_k\}\) be the set of bidirected maximal cliques among \(\{O.A: O.A\in\*V\}\) in \(\G\).
We construct \(\mc{N}\) as in Def.~\ref{def:rncm} by introducing, for each \(\*C\in \mc{C}(\G)_X\), an exogenous variable \(\hat U_{\*C}\sim \mc{U}([0,1])\).
For each clique \(\*C\), let \(\*U'_{\*C}\) denote the tuple of discrete exogenous variables in \(\mc{M}'\) that are the shared exogenous parents of the variables in \(\*C\) (within an instance of type \(O\)).
Since \(\*U'_{\*C}\) has a finite domain, Lemma 5 of \cite{xia:21} provides an MLP \(g_{\*C}\) such that \(g_{\*C}(\hat U_{\*C})\) has the same distribution as \(\*U'_{\*C}\).

Next, consider some variable \(O.A\in\*V\).
Under Assumptions (II)--(III), the input space
\(\dom(\*{Pa}_{O.A}) \times \dom(\*{Pa}^r_{O.A}) \times \dom(\*U'_{O.A})\)
is finite (the relational-parent domain is finite because each multiset has size at most \(D\)).
Hence, by Lemma 4 of \cite{xia:21}, there exists an MLP \(h_{O.A}\) that agrees with \(f'_{O.A}\) for each possible input.

Composing MLPs \(h_{O.A}\) with every MLP \(g_{\*C}\) such that \(O.A \in \*C\) yields a function \(\hat{f}_{O.A}: \dom(\*{Pa}_{O.A}) \times \dom(\*{Pa}^r_{O.A}) \times \dom(\hat{\*U}_{O.A}) \to \dom(O.A)\).

For any skeleton \(\rho\), the ground RSCM \(\mc{N}_\rho\) shares the same function \(\hat{f}_{O.A}\) across all instances \(o\in\rho(O)\); this is also the case in \(\mc{M}_\rho\).
Therefore, for every ground variable \(o.A\in \*V_\rho\), \(\mc{M}_\rho\) and \(\mc{N}_\rho\)  share the same distribution over exogenous parents and the same mechanism.
It follows that \(\mc{M}'_\rho\) and \(\mc{M}^*_\rho\) agree on all counterfactual distributions.
\end{proof}

\begin{adxdefinition}[Data-dependent relational counterfactual identification]\label{adx:def:rcid-data-dep}
Consider a schema \(\mc{S}\), relational causal graph \(\G\), source skeletons \(\rho_1,\dots,\rho_l\), source distributions \(\mathbb{P}=\{\{P(\*v_{\rho_k}\mid do(\*x_{k,j})) > 0\}_{j=1}^{m_k}\}_{k=1}^l\), and target skeleton \(\rho_\star\).
Let \(P(\*{y_*} \mid \*{x_*})\) be a target query with \(\*Y_\star,\*X_\star \subseteq \*V_{\rho_\star}\).

We say \(P(\*y_\star \mid \*x_\star)\) is \emph{relationally identifiable} from \(\G\) and \(\mathbb{P}\) if for any RSCMs \(\mc{M}, \mc{M}'\) consistent with \(\G\) agreeing on the source data, so that for every \(\rho_k\) and \(j=1,\dots,m_k\),
\[
P^{\mc{M}_{\rho_k}}(\*v_{\rho_k}\mid do(\*x_{k,j})) 
=
P^{\mc{M}'_{\rho_k}}(\*v_{\rho_k}\mid do(\*x_{k,j})) = P(\*v_{\rho_k}\mid do(\*x_{k,j}))
\]
they also agree on the query: 
\[
P^{\mc{M}_{\rho_\star}}(\*y_\star \mid \*x_\star)
=
P^{\mc{M}'_{\rho_\star}}(\*y_\star \mid \*x_\star).
\]
Otherwise, the query is \emph{relationally non-identifiable dependent on the data}.
\end{adxdefinition}

\begin{adxcorollary}[Correctness of \(\mathsf{RelationalNeuralID}\) (Alg.~\ref{alg:rnid})]\label{adx:cor:rnid:corr}
Consider a schema \(\mc{S}\), relational causal graph \(\G\), source skeletons \(\rho_1,\dots,\rho_l\), source distributions \(\mathbb{P}=\{\{P(\*v_{\rho_k}\mid do(\*x_{k,j})) > 0\}_{j=1}^{m_k}\}_{k=1}^l\), and target skeleton \(\rho_\star\).
Let \(P(\*{y_*} \mid \*{x_*})\) be a target query with \(\*Y_\star,\*X_\star \subseteq \*V_{\rho_\star}\).
Let \(Q\) be the output of  \(\mathsf{RelationalNeuralID}\) (Alg.~\ref{alg:rnid})  given these inputs.
Then, \(Q\) is not \(\tt{FAIL}\) if and only if the query \(P(\*{y_*} \mid \*{x_*})\) is  relationally identifiable dependent on the data from \(\G\) and \(\mathbb{P}\), in which case \(Q = P(\*{y_*} \mid \*{x_*})\).
\end{adxcorollary}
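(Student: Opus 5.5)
The plan is to follow the correctness argument for NCM-based identification in \cite{xia:23}, replacing their expressivity result by Thm.~\ref{thm:rncm-expr}. The key observation is that, by construction, every \(\G\)-RNCM is itself an RSCM consistent with \(\G\). Conversely, Thm.~\ref{thm:rncm-expr} gives, for every RSCM \(\mc{M}\) consistent with \(\G\) (under assumptions (I)--(III)), a \(\G\)-RNCM \(\mc{N}\) whose grounding \(\mc{N}_\rho\) matches \(\mc{M}_\rho\) on all counterfactual distributions, simultaneously for \emph{every} skeleton \(\rho\). Hence the set of tuples \(\big(\{P^{\mc{M}_{\rho_k}}(\*v_{\rho_k}\mid do(\*x_{k,j}))\}_{k,j},\, P^{\mc{M}_{\rho_\star}}(\*y_\star\mid\*x_\star)\big)\) ranging over RSCMs consistent with \(\G\) equals the set obtained by ranging over \(\G\)-RNCM parameters \(\theta\in\Theta(\hat M)\). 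The simultaneity across skeletons is essential, since the source and target queries involve different groundings; this is exactly what Thm.~\ref{thm:rncm-expr} provides.

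Let \(\mathcal{Q}\) be the set of query values \(P^{\mc{M}_{\rho_\star}}(\*y_\star\mid\*x_\star)\) over RSCMs \(\mc{M}\) consistent with \(\G\) that reproduce the given source data \(\mathbb{P}\). By the equivalence above, the feasible set of both optimization problems in Alg.~\ref{alg:rnid} attains exactly the query values in \(\mathcal{Q}\). Therefore \(q_l=\inf\mathcal{Q}\) and \(q_r=\sup\mathcal{Q}\), treating the \(\arg\min/\arg\max\) as attained in the idealized algorithm, as in \cite{xia:23}. The set \(\mathcal{Q}\) is nonempty because the true model generating \(\mathbb{P}\) is an RSCM consistent with \(\G\).

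Both directions then follow.

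\textbf{If the query is identifiable.} Suppose the query is relationally identifiable dependent on the data (Def.~\ref{adx:def:rcid-data-dep}). Then \(\mathcal{Q}\) is a singleton \(\{q\}\), so \(q_l=q_r=q\) and the algorithm returns \(q\). This value is the query under the true model, so \(Q=P(\*y_\star\mid\*x_\star)\).

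\textbf{If the algorithm does not fail.} Suppose the algorithm returns a value, so \(q_l=q_r\). Then \(\inf\mathcal{Q}=\sup\mathcal{Q}\), so every pair \(\mc{M},\mc{M}'\) consistent with \(\G\) and matching \(\mathbb{P}\) agrees on the query. That is precisely identifiability. The contrapositive gives the statement in the other direction: if the query is non-identifiable, two RSCMs witness distinct values, the corresponding RNCMs give \(q_l<q_r\), and the algorithm returns \(\tt{FAIL}\).

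The main obstacle is making the parameter-space equivalence rigorous. Thm.~\ref{thm:rncm-expr} yields an RNCM for each RSCM, but the reverse inclusion needs care: a \(\G\)-RNCM must genuinely induce the relational causal graph \(\G\), or a subgraph of it, in the sense required by "consistent with \(\G\)". I would argue this directly from Def.~\ref{def:rncm}. Each network takes as input only the parents allowed by \(\G\). The exogenous variables are shared only within bidirected cliques of a single instance, which is consistent with \(\rho\)-Markovianity. Ground acyclicity is inherited from \(\G\).

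A secondary issue is attainment of the optimum. I would handle it as \cite{xia:23} does, by assuming idealized optimization, or by noting that the finite discrete canonical representation (Cor.~\ref{adx:cor:canonicalfunc}) reduces \(\mathcal{Q}\) to the image of a compact polytope of exogenous distributions under a continuous map. In that case the extrema are attained.
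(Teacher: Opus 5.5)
Your proposal is correct and is essentially the paper's proof. The paper likewise combines Thm.~\ref{thm:rncm-expr} (each RSCM consistent with \(\G\) is matched on every skeleton by some \(\G\)-RNCM) with the fact that every \(\G\)-RNCM is itself consistent with \(\G\), concludes that optimizing the query over \(\G\)-RNCMs reproducing \(\mathbb{P}\) is equivalent to optimizing over all such RSCMs, and reads off both directions from Def.~\ref{adx:def:rcid-data-dep}. Your extra remarks on nonemptiness, consistency meaning a subgraph of \(\G\), and attainment spell out what the paper leaves implicit, though the compactness route to attainment needs the query to be continuous on the feasible set: this holds for interventional queries but can fail for conditional counterfactual queries whose target denominator \(P(\mathbf{x}_\star)\) may vanish.
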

\begin{proof}
Fix a relational causal diagram \(\G\). 
By Thm.~\ref{adx:thm:rncm-expr}, for every RSCM \(\mc{M}\) consistent with \(\G\), there exists a \(\G\)-RNCM agreeing with \(\mc{M}\) on counterfactual distributions for every possible skeleton \(\rho\).
Additionally, every \(\G\)-RNCM is consistent with \(\G\) by construction.
Therefore, minimizing / maximizing the query \(Q\) in the space of RSCMs consistent with \(\G\) and inducing the given \(\mathbb{P}\) is equivalent to minimizing / maximizing the query \(Q\) in the space of \(\G-\) RSCMs inducing the given \(\mathbb{P}\).
The correctness of Alg.~\ref{alg:rnid} thus follows from the definition of data-dependent relational identification (Def.~\ref{adx:def:rcid-data-dep}).
    
\end{proof}

\subsection{Additional Algorithms}
\paragraph{Estimating identifiable queries.}Alg.~\ref{adx:alg:rnest} provides an algorithm for point-estimation of a given query from graph and data.
It modifies Alg.~\ref{alg:rnid} by simply fitting a single \(\G\)-RNCM to the available data, and outputting the value of the query for that RNCM. 
Since Alg.~\ref{adx:alg:rnest} does not minimize/maximize the query, it is correct only assuming that the given query is relationally identifiable from the given graph and distributions.

\begin{algorithm}[tb]
  \caption{\(\mathsf{RelationalNeuralEstimation}\)}
  \label{adx:alg:rnest}
  \begin{algorithmic}
    \STATE {\bfseries Input:} schema \(\mc{S}\),  relational causal graph \(\G\), source data  \(\mc{D}=\big\{\big(\rho_k,\ \{P(\*v_{\rho_k}\mid do(\*x_{k,j}))\}_{j=1}^{m_k}\big)\big\}_{k=1}^{l}\), target skeleton \(\rho_\star\), query \(P(\*{y_*}\mid \*{x_*})\)
    \STATE \(\hat{M} \gets \G\text{-RNCM}\)

    \STATE \(
        \begin{aligned}
        \hat{\theta} \gets & \ \theta \in \Theta(\hat{M})
         \text{ subject to } \forall k, j \ \ 
        P^{\hat{M}_{\rho_k}(\theta)}(\*v_{\rho_k}\mid do(\*x_{k,j})) = P(\*v_{\rho_k} \mid do(\*x_{k,j}))\ 
        \end{aligned}
        \)
    \STATE \(q \gets P^{\hat{M}_{\rho_\star}(\hat{\theta}_{l})}(\*{y_*}\mid \*{x_*})\)

    \STATE {\bfseries return} \(q\)
    
  \end{algorithmic}
\end{algorithm}

\section{Further Experiments and Experimental Details} \label{adx:sec:exp}

\subsection{Compute and Implementation} 
All implementations are in Python, adapted from the Neural Causal Models codebase \cite{xia:23}.
Neural Causal Models are built using PyTorch \cite{paszke:19} and trained using  PyTorch Lightning \cite{falcon:20}.
We trained our models on a single NVIDIA H100 GPU on a shared compute cluster with 2x Intel Xeon
Platinum 8480+ CPUs (112 cores total, 224 threads) at up to 3.8 GHz, and 210 MiB L3 cache

\subsection{Data Generation} 
We generated all synthetic data using \emph{regional canonical models} \citep[Def. 11]{xia:23} adapted to share functions across variables of the same type.
Concretely, given a \(\rho\)-Markovian  relational causal graph \(\G\), we introduce one exogenous variable \(U_{\*C}\sim \mc{U}([0,1])\) for each maximal bidirected clique \(\*C\) in \(\G\), to represent within-instance latent confounding encoded by \(\G\).
For each endogenous variable \(O.A\), we then sample a random deterministic function
\[
f_{O.A}:\dom(\*{Pa}_{O.A})\times \dom(\*{Pa}^{r}_{O.A}) \times \dom(\*u_{O.A}) \to \dom(O.A)
\]
according to the canonical construction of \citet{xia:23}, where \(\*u_{O.A}=\{U_{\*C}:O.A\in \*C\}\).

For each relational parent \((\*W,\phi,\textrm{AGG})\in \*{Pa}^{r}_{O.A}\), we set its domain to be the set of histograms over \(\dom(\*W)\) induced by multisets of size at most \(D\); in our experiments \(D=5\), which upper-bounds the relational-parent multiset sizes in Fig.~\ref{fig:skels}.
Note that since we assume discrete endogenous domains, the count is a sufficient statistic for a multiset of \(\*w\)-values.

For each trial, we sample a random regional canonical model as above. 
Given a skeleton \(\rho\), to generate data from \(P(\*v_\rho)\), we instantiate variables \(o.A\) and \(o.U_\*C\) for every instance \(o\), and reuse the same mechanism for every variable of the same type.
Due to the expressivity of canonical models \cite{zhang:22, xia:23}, this avoids bias in data-generation induced by choosing a particular parametric model.

\subsection{Model Architecture and Training}
We refer readers to the comprehensive Appendix B.6 in \cite{xia:23} for details on the MLE-NCM architecture and training procedure. 
In addition to the clique-level noise variables (Def.~\ref{def:rncm}), an MLE NCM additionally includes a Gumbel noise variable for each attribute.

Given a relational causal graph \(\G\), we initialize a module for each mechanism \(f_{O.A}\) to get an RNCM \(\hat{\mc{M}}\).
This is reused for every instance \(o.A\) across source and target skeletons.
To train to maximize a query \(P(\*{y_*} \mid \*{x_*})\) on a target \(v_{\rho_\star}\) given data \(\mc{D}\) from sources \(\rho_1, \dots, \rho_k\), where each source \(\rho_k\) has data \(\{\*{v_{\rho_k, \*z_{j, k}}}^{(i)}\}_{i=1}^{n_{j,k}}\) comprising \(n_{j,k}\) datapoints from \(j=1,\dots, m_k\) regimes, we use the following modified MLE-NCM loss.

\begin{align}
    L(\hat{\mc{M}}, \mc{D}) = \sum_{k=1}^l \sum_{j=1}^{m_k} \frac{1}{n_{j,k}} \sum_{i=1}^{n_{j,k}} - \log P^{\hat{\mc{M}_{\rho_k}}}(\*{v}_{\rho_k,\*z_{j,k}}^{(i)}) - \lambda \log P^{\hat{\mc{M}_{\rho_\star}}}(\*{y_\*} \mid \*{x_*}) 
\end{align}

Here, \(\lambda\) is a parameter that decreases during training. To minimize the query, we replace the \(\lambda \log P^{\hat{\mc{M}_{\rho_\star}}}(\*{y_\*} \mid \*{x_*})\) term with \(\lambda (1-\log P^{\hat{\mc{M}_{\rho_\star}}}(\*{y_\*} \mid \*{x_*}))\). 
In our estimation experiments (Exp.~\ref{sec:exp:transfer}), we set \(\lambda=0\) and train only one RNCM.

All modules contain 2 hidden layers with 128 neurons each, and were trained for \(200\) epochs (often converging earlier) with a learning rate of \(10^{-3}\) and a batch size of 1000 datapoints.

\subsection{RNCM Estimation Experiments: Details} \label{adx:sec:exp:details}

\paragraph{Specification of queries.}
In Table~\ref{adx:tab:queries}, we give the the exact target queries used in Exp.~\ref{sec:exp:transfer}.

\paragraph{Proof of identifiability.} We give an example derivation to show that \(P^{\rho_C}(c_2.b \mid do(p_2.x,p_3.x))\) is identifiable from \(P(\*v_{\rho_A})\).
The proof for other identifiable cases follows similarly, by application of same-skeleton backdoor adjustment (Cor.~\ref{adx:cor:backdoor}) and  cross-skeleton observational inference (Thm.~\ref{thm:obstransfer}).

\begin{adxprop}\label{adx:prop:exp:est:id}
Given source skeleton \(\rho_A\) and target skeleton \(\rho_C\) as in Exp.~\ref{sec:exp:transfer}, and relational causal graph \(\G\) (Fig.~\ref{fig:rcg}), the distribution \(P^{\rho_C}(c_2.b \mid do(p_2.x,p_2.x))\) is relationally identifiable from \(P(\*{v}_{\rho_A})\) and \(\G\).
\end{adxprop}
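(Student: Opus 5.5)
The plan is to reduce the query to a same-skeleton adjustment formula in \(\rho_C\), and then transfer each factor of that formula from \(\rho_A\) using Thm.~\ref{thm:obstransfer}. The point of this route is that it never requires the full target distribution \(P(\*v_{\rho_C})\). That distribution is not available: car \(c_1\) in \(\rho_C\) is controlled by two signals and so fails the support condition with respect to \(\rho_A\), which rules out Cor.~\ref{adx:cor:obstransfer}. I read the query as \(P^{\rho_C}(c_2.b \mid do(p_2.x, p_3.x))\), since \(p_2\) and \(p_3\) are the pedestrians in \(c_2\)'s path.

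\textbf{Step 1 (adjustment in the target).} First I would read off the marginalized ground graph \(\bar{\G}_{\rho_C}\) from Fig.~\ref{fig:rcg} and Fig.~\ref{fig:skels}. The graph \(\G\) is Markovian, with no bidirected edges. In \(\bar{\G}_{\rho_C}\), the parents of \(c_2.B\) are \(s_2.W\), \(p_2.X\) and \(p_3.X\), and \(s_2.W\) is a root. Using the truncated factorization in the Markovian ground SCM, I would marginalize every variable other than \(c_2.B\) and its parents. The descendants of \(c_2.B\) sum out, and non-ancestors are irrelevant. This gives
\[
P^{\rho_C}(c_2.b \mid do(p_2.x,p_3.x)) = \sum_{s_2.w} P^{\rho_C}(c_2.b \mid s_2.w, p_2.x, p_3.x)\, P^{\rho_C}(s_2.w).
\]
Equivalently, \(\*Z=\{s_2.W\}\) satisfies the conditions of Cor.~\ref{adx:cor:backdoor}. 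Every backdoor path from \(\{p_2.X,p_3.X\}\) to \(c_2.B\) either passes through the fork at \(s_2.W\) or hits a collider at another car. By Prop.~\ref{prop:sameskelid}, this identity holds for every RSCM consistent with \(\G\).

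\textbf{Step 2 (transfer the factors from \(\rho_A\)).} The conditional \(P^{\rho_C}(c_2.b \mid s_2.w, p_2.x,p_3.x)\) is exactly the example already worked after Thm.~\ref{thm:obstransfer}. Both \(c_2.B\) in \(\rho_C\) and \(c_1.B\) in \(\rho_A\) are unconfounded, and each has one controlling signal and two in-path pedestrians, so the relational parent domains coincide. Hence this conditional equals \(P^{\rho_A}(c_1.b \mid s_1.w, p_1.x, p_2.x)\). The marginal \(P^{\rho_C}(s_2.w)\) is also a conditional of an unconfounded variable with empty parent sets. Applying Thm.~\ref{thm:obstransfer} with source instance \(s_1\in\rho_A\) gives \(P^{\rho_A}(s_1.w)\). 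Substituting both into Step 1 yields a formula in \(P(\*v_{\rho_A})\) alone, which establishes relational identifiability.

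\textbf{Main obstacle.} The delicate point is the positivity and support bookkeeping in Step 2, rather than the graphical adjustment. The terms \(P^{\rho_A}(c_1.b \mid s_1.w, p_1.x,p_2.x)\) must be well defined for every value combination appearing in the sum, including \(p_1.x=p_2.x=1\) jointly with each \(s_1.w\). I would argue this from the positivity assumed in Def.~\ref{def:rcid}, namely \(P(\*v_{\rho_A})>0\). I would also take care to match the multisets \(\{p_2.x,p_3.x\}\) and \(\{p_1.x,p_2.x\}\) as multisets rather than as ordered tuples, which is legitimate because mechanisms are permutation-invariant.
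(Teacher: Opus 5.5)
Your proposal is correct and follows the paper's proof. The paper also applies backdoor adjustment on \(\{s_2.W\}\) in \(\rho_C\) (Cor.~\ref{adx:cor:backdoor}), then transfers \(P^{\rho_C}(c_2.b \mid s_2.w, p_2.x, p_3.x)\) and \(P^{\rho_C}(s_2.w)\) from \(c_1\) and \(s_1\) in \(\rho_A\) via Thm.~\ref{thm:obstransfer}; your truncated-factorization derivation, your reading of the query as \(do(p_2.x,p_3.x)\), and your positivity bookkeeping for \(P(\*{v}_{\rho_A})\) spell out what the paper's brief, typo-laden proof leaves implicit.
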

\begin{proof}
By Cor.~\ref{adx:cor:backdoor}, \(P^{\rho_C}(c_2.b \mid do(p_2.x,p_2.x)))\) is identifiable from \(P(\*v_{\rho_C})\) via backdoor adjustment on the set \(\{s_2.W\}\), using the formula
\begin{align*}
    P^{\rho_C}(c_2.b \mid do(p_2.x,p_2.x)) = \sum_{s_2.w}P^{\rho_C}(c_2.b \mid s_2.w, p_2.x,p_3.x) P^{\rho_C}(s_2.W)
\end{align*}

By Thm.~\ref{thm:obstransfer}, we have that \(P^{\rho_C}(c_2.b \mid do(p_2.x,p_2.x)) = P^{\rho_A}(c_1.b \mid s_1.w, p_1.x,p_2.x)\) and \(P^{\rho_C}(s_2.W) = P^{\rho_A}(s_1.W)\), and we are done.
\end{proof}

\paragraph{Proof of non-identifiability.}  We show that in the case where RNCMs underperform relative to the gold standard NCM\(^*\) (training on source \(\rho_A\) and evaluating on car \(c_1\) in target \(\rho_C\)), the query is in fact non-identifiable from the available data and assumptions.

\begin{adxprop}\label{adx:prop:exp:est:nonid}
Given source skeleton \(\rho_A\) and target skeleton \(\rho_C\) as in Exp.~\ref{sec:exp:transfer}, and relational causal graph \(\G\) (Fig.~\ref{fig:rcg}), the distribution \(P^{\rho_C}(c_1.B=1 \mid do(p_1.X=1,p_2.X=1))\) is relationally non-identifiable from \(P(\*{v}_{\rho_A})\) and \(\G\).
\end{adxprop}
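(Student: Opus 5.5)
To show that \(P^{\rho_C}(c_1.B=1 \mid do(p_1.X=1,p_2.X=1))\) is relationally non-identifiable, I will exhibit two RSCMs \(\mc{M}\) and \(\mc{M}'\), both consistent with \(\G\) (Fig.~\ref{fig:rcg}), that induce the same \(P(\*v_{\rho_A})\) but assign different values to the query. I will follow the template of Ex.~\ref{ex:obsimposs}. Take \(\mc{M}\) to be the RSCM of Ex.~\ref{ex:rscm-class}. Take \(\mc{M}'\) to be identical except that the aggregate over controlling signals in \(f_{\msf{Car}.B}\) is changed from \(\bigvee\) to \(\bigoplus\). The graph treats the aggregator of the \(\mathsf{Ctrl}(\msf{Sig},\msf{Car})\) relational parent as unspecified, since the experiment uses histograms. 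So both models have the same relational parents, the same within-instance structure and no bidirected edges, and both are consistent with \(\G\).

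\textbf{Agreement on the source.} In \(\rho_A\), car \(c_1\) is controlled only by \(s_1\) and \(c_2\) by no signal. Hence \(\bigvee\{s_1.W\}=\bigoplus\{s_1.W\}=s_1.W\) and \(\bigvee\emptyset=\bigoplus\emptyset=0\). Every ground mechanism of \(\mc{M}_{\rho_A}\) therefore coincides with that of \(\mc{M}'_{\rho_A}\), and the exogenous distributions agree. It follows that \(P^{\mc{M}_{\rho_A}}(\*v_{\rho_A})=P^{\mc{M}'_{\rho_A}}(\*v_{\rho_A})\). One technical point is that Def.~\ref{def:rcid} requires the source distribution to be strictly positive. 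This holds, because every exogenous Bernoulli parameter lies strictly in \((0,1)\) and each variable is an XOR with its own noise.

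\textbf{Disagreement on the target.} In \(\rho_C\), car \(c_1\) is controlled by \(s_1,s_2\) and has \(p_1,p_2\) in its path. Under \(do(p_1.X=1,p_2.X=1)\), the pedestrian disjunct in \(f_{\msf{Car}.B}\) equals \(1\) in both models. The \(\lor\) with the signal aggregate then also equals \(1\), so this particular query gives \(P(c_1.U_B=0)=0.8\) in both models, and this pair of models does not separate it. I expect this to be the main obstacle: the construction must make the disagreement survive the intervention.

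To overcome it, I will make the signal contribution enter without being absorbed by the pedestrian term. In \(\mc{M}'\) I will use the braking rule
\[
c.B \gets c.U_B \oplus \Big(\big(\textstyle\bigvee_{\mathsf{Path}} \msf{Ped}.X\big)\wedge g\big(\{s.W\}_{\mathsf{Ctrl}}\big)\Big),
\]
where \(g\) equals \(1\) on multisets of size at most one and \(g(\{w,w'\})=1-(w\wedge w')\) on multisets of size two. \(\mc{M}\) takes \(g\equiv 1\). The two models again coincide on \(\rho_A\), since every car there has at most one controlling signal. On \(\rho_C\), under the intervention, \(\mc{M}\) gives \(0.8\). \(\mc{M}'\) gives
\[
0.8\,P(\lnot(s_1.W\wedge s_2.W))+0.2\,P(s_1.W\wedge s_2.W)=0.8-0.6\cdot 0.09,
\]
which differs from \(0.8\). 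Signals have no parents, so intervening on the pedestrians leaves the distribution of \(s_1.W,s_2.W\) unchanged. Both models use the same parent sets as \(\G\), with no new edges, so consistency with \(\G\) is preserved. This completes the argument.
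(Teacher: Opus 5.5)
Your construction is correct and is essentially the paper's. The paper also changes only the braking mechanism, XOR-ing onto $H(c)$ a term $Z(c)=\mathbf{1}[\,|\{s : s.W=1,\ \mathsf{Ctrl}(s,c)\}|>1\,]$ that vanishes on $\rho_A$ (no car there has two controlling signals), and under $do(p_1.X=1,p_2.X=1)$ it gets exactly your values $0.8$ versus $0.746$ for $c_1$ in $\rho_C$; your gating of the pedestrian disjunct by $g$ (with $g\equiv 1$ in $\mathcal{M}$) is a cosmetic variant, though you should also define $g$ on multisets of size $\geq 3$ so that $\mathcal{M}'$ is defined on every skeleton.
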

\begin{proof}
    We construct two RSCMs inducing \(\G\) that agree on the observational distribution over the source skeleton \(\rho_A\), but disagree on the target interventional query.

Consider an RSCM \(\mc{M}\) as follows.
The endogenous variables are \(\*V=\{\msf{Sig}.W,\ \msf{Ped}.X,\ \msf{Car}.B\}\).
The exogenous variables \(\*U\) are \(\msf{Sig}.{U_W} \sim \mc{B}(0.3), \msf{Ped}.{U_X} \sim \mc{B}(0.4)\) and \(\msf{Car}.{U_B} \sim \mc{B}(0.2)\), mutually independent.
The mechanisms are
\begin{align*}
    \msf{Sig}.W & \gets \msf{Sig}.U_W, \\
    \msf{Ped}.X & \gets \msf{Ped}.U_X \oplus \*{1}[|\{\msf{Sig} \mid \msf{Sig}.W=1, \mathsf{Ctrl}(\msf{Sig}, \msf{Ped} )\}| > 0], \\
     \msf{Car}.B &\gets \msf{Car}.U_B \oplus H(\msf{Car})
\end{align*}
where
\begin{align*}
H(\msf{Car}) = 
     \*{1}[|\{\msf{Sig} \mid \msf{Sig}.W=1,\mathsf{Ctrl}(\msf{Sig}, \msf{Car} )\}| > 0] 
     \lor 
     \*{1}[|\{\msf{Ped} \mid \msf{Ped}.X=1,\mathsf{Path}(\msf{Ped}, \msf{Car} )\}| > 0].
\end{align*}
Next, consider an RSCM \(\mc{M}'\), identical to \(\mc{M}\) except with a modified braking mechanism
\begin{align*}
\msf{Car}.B &\gets \msf{Car}.U_B \oplus H(\msf{Car}) \oplus Z(\msf{Car})
\end{align*}
where
\begin{align*}
Z(\msf{Car}) = 
     \*{1}[|\{\msf{Sig} \mid \msf{Sig}.W=1,\mathsf{Ctrl}(\msf{Sig}, \msf{Car} )\}| > 1].
\end{align*}
Both \(\mc{M}\) and \(\mc{M}'\) induce the same relational causal graph \(\G\), since the additional term \(Z(\msf{Car})\) depends only on signal variables already related to \(\msf{Car}.B\).

We first show that \(\mc{M}\) and \(\mc{M}'\) agree on the source skeleton \(\rho_A\).
In \(\rho_A\), every car is controlled by at most one signal.
Thus, for every car \(c\) in \(\rho_A\), \(Z(c)=0\).
Therefore the braking mechanisms of \(\mc{M}\) and \(\mc{M}'\) coincide on \(\rho_A\).
Since all other mechanisms are identical by construction,
\(
    P^{\mc{M}_{\rho_A}}(\*v_{\rho_A})
    =
    P^{\mc{M}'_{\rho_A}}(\*v_{\rho_A}).
\)

We now show that the two models disagree on the target query.
In \(\rho_C\), car \(c_1\) is controlled by two signals, \(s_1\) and \(s_2\), and has \(p_1,p_2\) in its path.
Under the intervention \(do(p_1.X=1,p_2.X=1)\), we have \(H(c_1)=1\) in both models, regardless of the signal values.
Therefore,
\begin{align*}
    &P^{\mc{M}_{\rho_C}}(c_1.B=1 \mid do(p_1.X=1,p_2.X=1)) \\
    &\qquad = P^{\mc{M}_{\rho_C}}(c_1.U_B \oplus 1 = 1) \\
    &\qquad = P^{\mc{M}_{\rho_C}}(c_1.U_B=0) = 0.8.
\end{align*}
In \(\mc{M}'\), the additional term \(Z(c_1)\) equals \(1\) exactly when both controlling signals are active, i.e. when \(s_1.W=s_2.W=1\).
Since the signal mechanisms are unchanged by the intervention and \(P(s.W=1)=0.3\), we have \(P^{\rho_C}(Z(c_1)=1)=P(s_1.W=1,s_2.W=1)=0.3^2=0.09\).
Thus,
\begin{align*}
    &P^{\mc{M}'_{\rho_C}}(c_1.B=1 \mid do(p_1.X=1,p_2.X=1)) \\
    &\qquad = P^{\mc{M}'_{\rho_C}}(c_1.U_B \oplus 1 \oplus Z(c_1) = 1) \\
    &\qquad = P(Z(c_1)=0)P(c_1.U_B=0) + P(Z(c_1)=1)P(c_1.U_B=1) \\
    &\qquad = (1-0.09)(0.8) + (0.09)(0.2) \\
    &\qquad = 0.746.
\end{align*}

Thus \(\mc{M}\) and \(\mc{M}'\) agree on \(P(\*v_{\rho_A})\) while inducing different values for the target query.
\end{proof}

\paragraph{Design of baselines.} We compare RNCMs with five baselines, constructed as follows.
\begin{enumerate}
    \item  \textbf{\textsc{NCM-X}, a non-relational causal baseline.} This trains a \(\G\)-NCM for the flattened graph \(\G\) with edges \(W \to X, \ W \to B, \ X \to B\). It is trained on `Cartesian' flattened data, splitting each data point into all triples of cars, pedestrians, and signals. For example, a datapoint  of source skeleton \(\rho_A\) is split into four datapoints as follows:
    \begin{align*}
       (s_1.w, p_1.x, p_2.x, c_1.b, c_2.b) \to \ & (s_1.w, p_1.x, c_1.b) \\
        &(s_1.w, p_1.x, c_2.b) \\
        &(s_1.w, p_2.x, c_1.b) \\
        &(s_1.w, p_2.x, c_2.b). \\
    \end{align*}
    If the original dataset for source \(\rho_A\) contains \(10^4\) datapoints, the flattened dataset thus contains \(4 \times 10^4\) datapoints.
    \item \textbf{\textsc{NCM-J}, a causal baseline with partial relational information.} It is trained on `join' flattened data, splitting each data point into all triples of cars, pedestrians, and signals that are pairwise related. For example, a datapoint  of source skeleton \(\rho_A\) is split into two datapoints as follows:
    \begin{align*}
       (s_1.w, p_1.x, p_2.x, c_1.b, c_2.b) \to \ & (s_1.w, p_1.x, c_1.b) \\
       &(s_1.w, p_2.x, c_1.b). 
    \end{align*}
    Note how \((s_1.w, p_1.x, c_2.b)\) is not an included triple since \(\msf{Ctrl}(s_1, c_2)\) is false.
    If the original dataset for source \(\rho_A\) contains \(10^4\) datapoints, the flattened dataset thus contains \(2 \times 10^4\) datapoints.
    \item \textbf{\textsc{Rel-MLP}, a relational non-causal baseline.}
    This baseline trains an MLP to predict \(c.B\) from count features of the objects related to \(c\).
    Each relational datapoint is split into one supervised example per car. The feature vector for car \(c\) is
    \[
        \Big(
        \underset{s:\mathsf{Ctrl}(s,c)}{\sum} s.W,\;
        |\{s:\mathsf{Ctrl}(s,c)\}|,\;
        \underset{p:\mathsf{InPath}(p,c)}{\sum} p.X,\;
        |\{p:\mathsf{InPath}(p,c)\}|
        \Big).
    \]
    Thus the model observes both the number of related signals and pedestrians, and the number of active related signals and pedestrians. For example, a datapoint from \(\rho_A\) is split into two car-level examples:
    \begin{align*}
       (s_1.w, p_1.x, p_2.x, c_1.b, c_2.b) \mapsto \ &
       \big(s_1.w,\; 1,\; p_1.x+p_2.x,\; 2,\; c_1.b\big), \\
       & \big(0,\; 0,\; p_2.x,\; 1,\; c_2.b\big).
    \end{align*}
    Here the first four entries are the input features and the final entry is the prediction target.
    Since this baseline is not causal, it cannot directly compute interventional quantities.
    To estimate our target query
    \(
        P(c.B=1 \mid do(p_1.X=1,\ldots,p_k.X=1)),
    \)
    we use a conditional probability. rst, the intervention is mapped to the active pedestrian count it induces in the target car's neighborhood:
    \(
        m_c
        =
        \sum_{j=1}^k
        \mathbf 1\{\mathsf{InPath}(p_j,c)\} x_j .
    \)
    We then average the MLP predictions over the subset of rows \(\mc{I}\) in the source data whose active pedestrian count is \(m_c\):
    \[
        \widehat P(c.B=1 \mid do(p_1.X=1,\ldots,p_k.X=1))
        =
        \frac{1}{|\mathcal I|}
        \sum_{i\in \mathcal I}
        \sigma(f_\theta(i)),
    \]
    If \(\mathcal I\) is empty, the estimate is left undefined.

    \item \textbf{\textsc{Rel-MLP + deg}, a degree-matched variant of \textsc{Rel-MLP}.}
    This baseline uses the same trained MLP and the same count features as \textsc{Rel-MLP}, but uses a different query estimator.
    In addition to matching the active pedestrian count induced by the intervention, it also requires the source car row to have the same number of related signals and pedestrians as the target car, thus choosing a smaller subset \(\mc{I}\) above.
    For example, suppose the target query concerns a car \(c\) with one related signal and two related pedestrians, under an intervention setting two related pedestrians to \(1\).
    Then \textsc{Rel-MLP} averages MLP predictions over all source rows with two active related pedestrians.
    \textsc{Rel-MLP + deg} further restricts this average to source rows whose cars also have exactly one related signal and two related pedestrians.
    \item \textbf{\textsc{NCM}\(^*\), a gold standard target-only NCM.}. Finally, for a target skeleton \(\rho_\star\), NCM\(^*\) is simply a \(\bar{\G}_{\rho_\star}\)-constrained NCM trained on data from the target, where \(\bar{\G}_{\rho_\star}\) is the marginalized ground graph of the true relational causal graph on \(\rho_\star\).
Note that since it is a standard NCM, it does not share parameters across different objects of the same type.
\end{enumerate}

\begin{table}[t]
\centering
\small
\setlength{\tabcolsep}{4pt}
\begin{tabular}{c  c  c}
\toprule
 \textbf{Car in \(\rho_C\)} & \textbf{Neighbourhood} & \textbf{Query}  \\
\midrule
\(c_1\) & 2 signals, 2 pedestrians &\(P(c_1.B = 1 \mid do(p_1.X = 1, p_2.X = 1)\) \\
\(c_2\) & 1 signal, 2 pedestrians &\(P(c_2.B = 1 \mid do(p_2.X=1, p_3.X = 1)\) \\
\(c_3\) & 2 signals, 2 pedestrians &\(P(c_2.B = 1 \mid do(p_3.X = 1)\) \\
\bottomrule
\end{tabular}
\caption{Target queries used Exp.~\ref{sec:exp:transfer} and Fig.~\ref{fig:exp:transfer}.}
\label{adx:tab:queries}
\end{table}

\subsection{Experiment with Front-door Estimation} \label{adx:sec:frontdoor}
In this section, we present an additional experiment with a front-door graph, in addition to the backdoor graphs considered in Exp.~\ref{sec:exp:transfer}.
 We use similar set of source and target relational skeletons as in Exp.~\ref{sec:exp:id}. 
We train RNCMs using the graph in Fig.~\ref{adx:fig:frontdoor} to estimate the identifiable  same-car query \(P(c_1.B=1 \mid do(c_1.X=1))\) in the target skeleton.
This query is computable via front-door adjustment on \(\{c_1.Z, c_2.Z, c_3.Z\}\).
Averaging across five trials, RNCMs achieve an MSE of \(0.00082 \pm 0.0018\) from the ground truth, thus estimating the query with high accuracy.

\begin{figure}
    \centering
    \begin{subfigure}[c]{0.36\linewidth}
    \includegraphics[width=\linewidth]{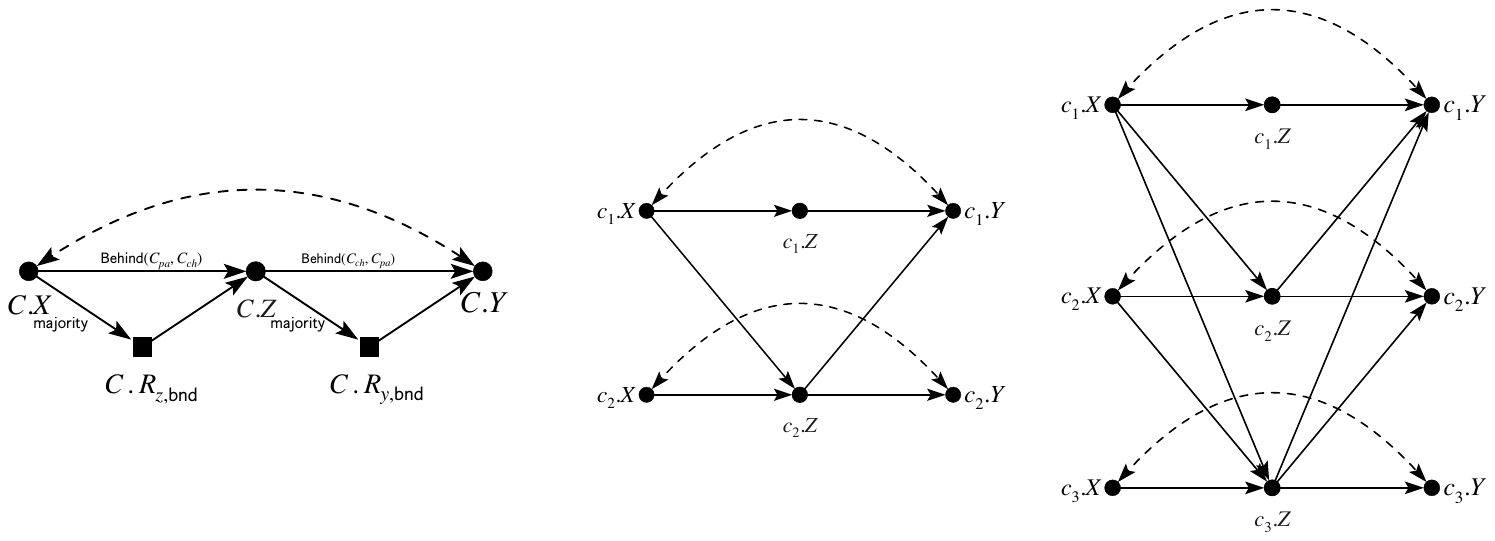}
    \caption{Front-door relational causal graph.}
    \label{adx:fig:frontdoor:temp}
    \end{subfigure}
    \begin{subfigure}[c]{0.3\linewidth}
    \includegraphics[width=\linewidth]{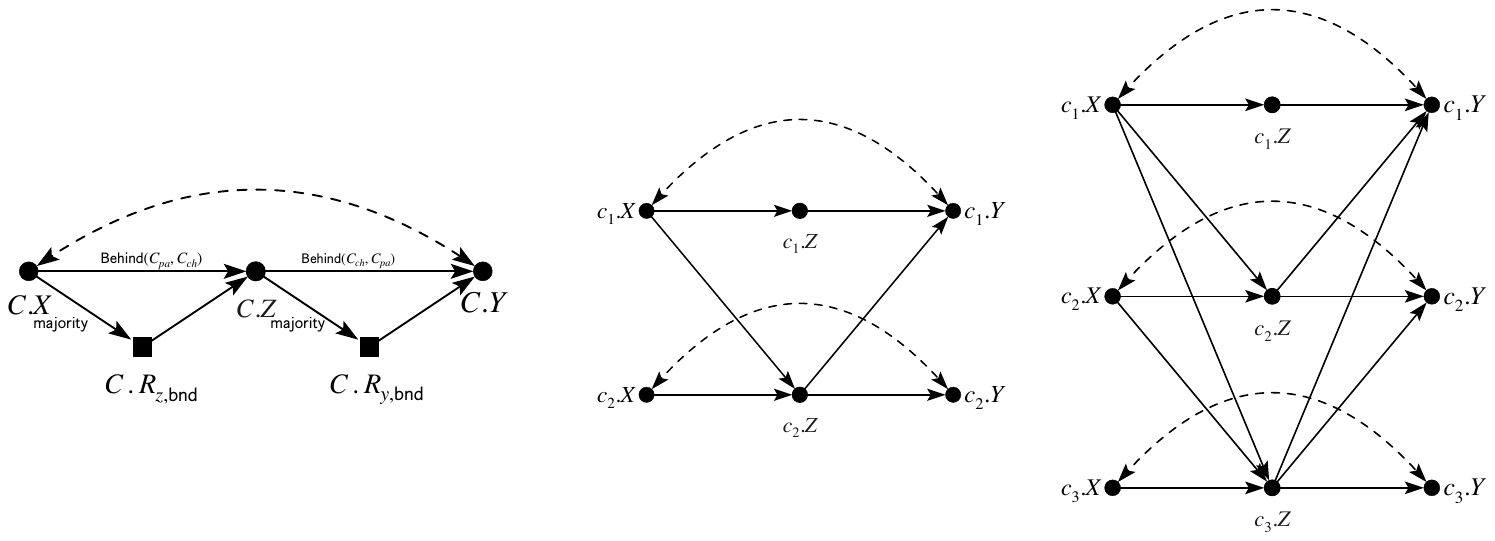}
    \caption{Marginalized ground graph on source skeleton.}
    \label{adx:fig:frontdoor:ground1}
    \end{subfigure}
    \begin{subfigure}[c]{0.3\linewidth}
    \includegraphics[width=\linewidth]{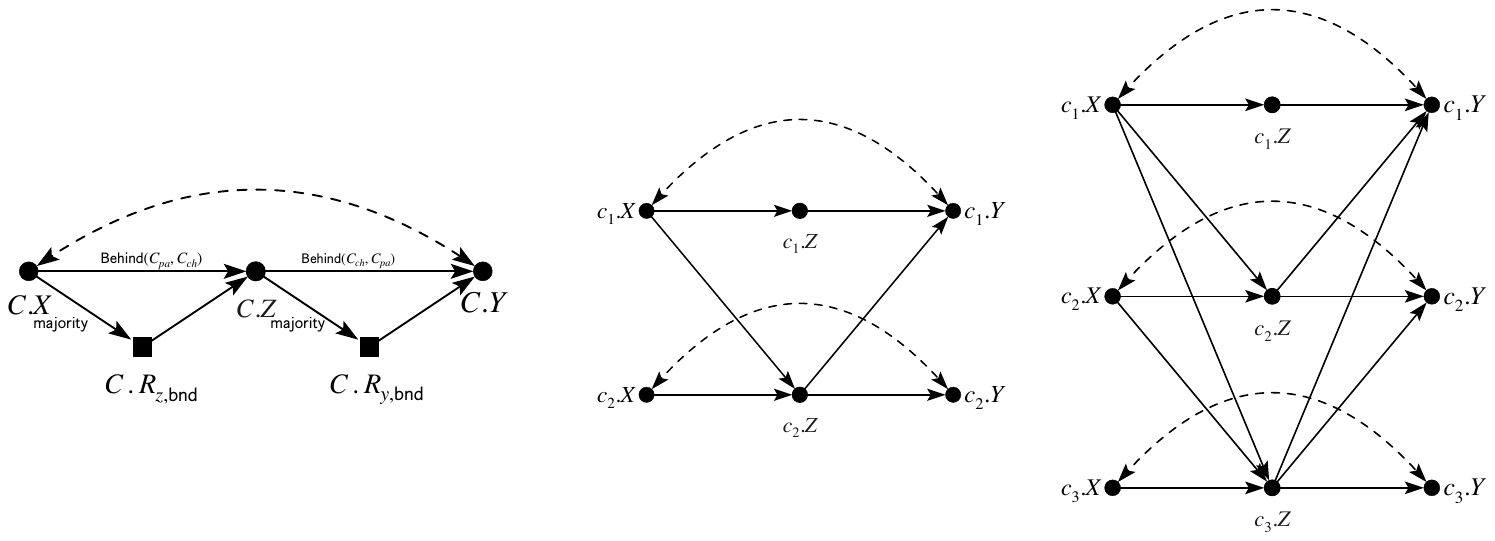}
    \caption{Marginalized ground graph on target skeleton.}
    \label{adx:fig:frontdoor:groudn2}
    \end{subfigure}
    \caption{Front-door graphs used in Exp.~\ref{adx:sec:frontdoor}}
    \label{adx:fig:frontdoor}
\end{figure}

\subsection{RNCM Identification Experiments: Details}

In Fig.~\ref{adx:fig:exp-id-ground}, we show the marginalized ground causal graphs for the source skeleton \(\rho\) and target skeleton \(\rho_\star\) of both the relational bow and the relational IV graphs used in Exp.~\ref{sec:exp:id}.

We also prove the identifiability result for the different-car query in both cases.

\begin{adxprop}\label{adx:prop:exp-id-obs}
    Given source skeleton \(\rho\) and target skeleton \(\rho_\star\) as in Exp.~\ref{sec:exp:id}, and causal graph \(\G_{\mathsf{bow}}\) (Fig.~\ref{adx:fig:exp-id-ground}, top left), the distribution \(P^{\rho_\star}(c_3.y)\) is relationally identifiable from \(P(\*{v}_\rho)\) and \(\G_{\mathsf{bow}}\).
\end{adxprop}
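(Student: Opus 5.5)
The plan is to reduce $P^{\rho_\star}(c_3.y)$ to per-car conditional pieces, each of which already appears in the source grounding, using the parameter sharing of an RSCM in the spirit of Thm.~\ref{thm:obstransfer}. The difference from Thm.~\ref{thm:obstransfer} is that $c_3.Y$ is confounded with $c_3.X$, so the pieces will be \emph{joint} within-car quantities rather than single-variable conditionals. I read $\G_{\mathsf{bow}}$ (Fig.~\ref{adx:fig:exp-id-ground}) as follows: within each car, $\msf{Car}.X \to \msf{Car}.Y$ and $\msf{Car}.X \leftrightarrow \msf{Car}.Y$. The only relational parent is $\msf{Car}.R$, the majority aggregate of $\msf{Car}'.X$ over cars $\msf{Car}'$ with $\mathsf{Behind}(\msf{Car}',\msf{Car})$, and it points into $\msf{Car}.Y$. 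The graph is $\rho$-Markovian. If the figure also lets $X$ depend on cars behind, the same argument applies with one extra layer of conditioning.

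First, write $r_3 = \textrm{maj}\{c_1.x, c_2.x\}$ for the target. Since exogenous noise is independent across instances and $c_1.X, c_2.X$ depend only on their own cars' noise, I obtain the factorization
\[
P^{\rho_\star}(c_3.y)=\sum_{c_1.x,\,c_2.x} P^{\rho_\star}(c_1.x)\,P^{\rho_\star}(c_2.x)\sum_{c_3.x} P^{\rho_\star}(c_3.x,c_3.y \mid c_3.R=r_3).
\]
Second, I show that the inner term is a fixed function of $r_3$. Take any RSCM $\mc{M}$ consistent with $\G_{\mathsf{bow}}$ and any car $c$ in any skeleton. Its exogenous parents $c.\*U$ are independent of $c.R$, and $(c.X,c.Y)=(f_X(c.\*u),\,f_Y(c.X,c.\*u,c.R))$ with shared $f$ and $P(\*U)$. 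Hence $P(c.x,c.y\mid c.R=r)=\sum_{\*u}\*1[\cdots]P(\*u)$ is the same function $g_{\mc{M}}(x,y,r)$ in every grounding. The same sharing gives that $P(c.x)=\sum_y g_{\mc{M}}(x,y,r)$ does not depend on $r$ or on the skeleton; this is the argument of Prop.~\ref{adx:prop:isoinv} and Thm.~\ref{thm:obstransfer}, applied to the clique $\{X,Y\}$ instead of a single variable.

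Third, I read $g_{\mc{M}}$ off the source. In $\rho$, car $c_2$ has exactly $c_1$ behind it, so $c_2.R=\textrm{maj}\{c_1.X\}=c_1.X$. This gives $g_{\mc{M}}(x,y,r)=P^{\rho}(c_2.x,c_2.y\mid c_1.x=r)$, and positivity of $P(\*v_\rho)$ makes it defined for both values of $r$. The marginals $P^{\rho_\star}(c_i.x)$ equal $P^{\rho}(c_1.x)$. Substituting these into the factorization expresses $P^{\rho_\star}(c_3.y)$ purely in terms of $P(\*v_\rho)$. Any two RSCMs agreeing on $P(\*v_\rho)$ therefore agree on it, which is the definition of relational identifiability (Def.~\ref{def:rcid}).

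The step I expect to be delicate is the support condition for the aggregate. The claim is that the set of values $c_3.R$ can take in $\rho_\star$ (majority over two cars, including the tie-breaking convention when $c_1.x\neq c_2.x$) lies within the set of values $c_2.R$ realizes in $\rho$. For binary majority both sets equal $\{0,1\}$, so this should hold. If the paper's aggregator instead produced a distinct ``tie'' value, the argument would fail, and I would have to check the precise aggregator definition in Exp.~\ref{sec:exp:id}.
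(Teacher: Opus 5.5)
You have the direction of the relational dependence in $\G_{\mathsf{bow}}$ reversed. In Exp.~\ref{sec:exp:id} a car is influenced by the cars \emph{in front of} it. The footnote in Sec.~\ref{sec:rscm} describes exactly this (``$\msf{Car}.B$ affects $C'.B$ if $C'$ is behind $C$'') and says this RSCM is the one implemented in Exp.~\ref{sec:exp:id}. Prop.~\ref{adx:prop:exp-id-int} also relies on there being no directed path from $c_2.X$ to $c_3.Y$ in the ground graph of $\G_{\mathsf{bow}}$ on $\rho_\star$. Under your reading $c_2.X\to c_3.Y$ is an edge. More generally, because $c_2.X\to c_2.Y$ and $c_3.X\to c_3.Y$, any relational edge from $c_2$ into $c_3$ would create such a path.

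So $c_3$, which has two cars behind it and none ahead, has no relational parents in $\rho_\star$. Likewise $c_2$ has none in $\rho$; it is $c_1$ whose relational node aggregates over the car ahead. Your Step~1 factorization through $r_3=\mathrm{maj}\{c_1.x,c_2.x\}$ and your Step~3 identity $c_2.R=c_1.X$ are therefore false for the graph in the statement. Moreover, $P^{\rho}(c_2.x,c_2.y\mid c_1.x=r)$ is not the within-car mechanism there: $c_1$ lies downstream of $c_2$ in $\rho$, so conditioning on $c_1.x$ can carry information about $c_2$'s exogenous noise. Under your own reading the argument is internally sound (up to the aggregator convention), but it proves identifiability for a different graph.

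The repair is immediate, and your Step~2 is the right engine: it is the paper's proof, applied with empty relational input. Both $c_3$ in $\rho_\star$ and $c_2$ in $\rho$ have empty relational-parent multisets. Write $f_{\msf{Car}.X}(\mathbf{u})$ and $f_{\msf{Car}.Y}(x,\mathbf{u})$ for the shared mechanisms evaluated at that empty input. Then shared mechanisms and shared exogenous distributions give, for any RSCM $\mathcal{M}$ compatible with $\G_{\mathsf{bow}}$,
\[ P^{\mathcal{M}_{\rho_\star}}(c_3.x,c_3.y)=\sum_{\mathbf{u}}\mathbf{1}\bigl[f_{\msf{Car}.X}(\mathbf{u})=x,\ f_{\msf{Car}.Y}(x,\mathbf{u})=y\bigr]P(\mathbf{u})=P^{\mathcal{M}_{\rho}}(c_2.x,c_2.y). \]
Hence $P^{\mathcal{M}_{\rho_\star}}(c_3.y)=P^{\mathcal{M}_{\rho}}(c_2.y)$, and any two such RSCMs that agree on $P(\mathbf{v}_\rho)$ agree on the query. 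No support matching, positivity over aggregate values, or majority tie-breaking is involved, so the step you flagged as delicate never arises.
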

\begin{proof}
    Consider any RSCM \(\mc{M}\) consistent with \(\G_{\mathsf{bow}}\).
    Since \(\G_{\mathsf{bow}}\) is \(\rho\)-Markovian, so is \(\mc{M}\).
    Let \(\*{U}_{X}\) (resp., \(\*{U}_{Y}\)) denote the non-relational exogenous variables affecting \(\msf{Car}.X\) (resp., \(\msf{Car}.Y\)), and \(\*{U}'_{X} = \*{U}'_{X} \setminus \*{U}'_{Y}\)
    Then, in the ground RSCM \(\mc{M}_{\rho_\star}\), 
    \begin{align*}
        P^{\mc{M}_{\rho_\star}}(c_3.y) & = \sum_{c_3.x, c_3.\*u_{Y}}^{\mc{M}_{\rho_\star}}P^{\mc{M}_{\rho_\star}}(c_3.y \mid c_3.x,c_3.\*u_{Y}) P^{\mc{M}_{\rho_\star}}(c_3.x \mid c_3.\*u_{Y}) P^{\mc{M}_{\rho_\star}}(c_3.\*u_{Y}) \\
        &= \sum_{c_3.x, c_3.\*u_{Y}} \mathbf{1}_{f_{\msf{Car}.Y}(c_3.x, \*u_{Y}) = c_3.y} \sum_{c_3.\*{u}'_{X}}  \mathbf{1}_{f_{\msf{Car}.X}(\*u_{X}) = c_3.x}P^{\mc{M}_{\rho_\star}}(c_3.\*{u}_{Y}, c_3.\*{u}'_{X}) \tag{\(c_3.Y\) has no relational parents}\\
        &= \sum_{c_3.x, c_3.\*u_{Y}} \mathbf{1}_{f_{\msf{Car}.Y}(c_3.x, \*u_{Y}) = c_3.y} \sum_{c_3.\*{u}'_{X}}  \mathbf{1}_{f_{\msf{Car}.X}(\*u_{X}) = c_3.x}P^{\mc{M}_{\rho}}(c_2.\*{u}_{Y}, c_2.\*{u}'_{X}) \tag{exogenous distributions are shared across \(c\) in \(\rho, \rho_\star\)} \\
        &= \sum_{c_2.x, c_2.\*u_{Y}} \mathbf{1}_{f_{\msf{Car}.Y}(c_2.x, \*u_{Y}) = c_2.y} \sum_{c_2.\*{u}'_{X}}  \mathbf{1}_{f_{\msf{Car}.X}(\*u_{X}) = c_2.x}P^{\mc{M}_{\rho}}(c_2.\*{u}_{Y}, c_2.\*{u}'_{X}) \tag{functions \(f_{\msf{Car}.X}, f_{\msf{Car}.Y}\) are shared across \(c\) in \(\rho, \rho_\star\)} \\
        &= P^{\mc{M}_{\rho}}(c_2.y) \tag{by a symmetric derivation}
    \end{align*}

This proves that for any RSCMs \(\mc{M}, \mc{M}'\) consistent with \(\G_{\mathsf{bow}}\), we have \(P^{\mc{M}_{\rho}}(c_2.y) = P^{\mc{M}_{\rho_\star}}(c_3.y)\)  and \(P^{\mc{M}'_{\rho}}(c_2.y) = P^{\mc{M}'_{\rho_\star}}(c_3.y)\).
If  \(\mc{M}, \mc{M}'\) agree on the source data, then \(P^{\mc{M}_{\rho}}(c_2.y) = P^{\mc{M}'_{\rho}}(c_2.y)\), and we are done.
\end{proof}

\begin{adxprop}\label{adx:prop:exp-id-int}
    Given source skeleton \(\rho\) and target skeleton \(\rho_\star\) as in Exp.~\ref{sec:exp:id}, and causal graph \(\G_{\mathsf{bow}}\) (Fig.~\ref{adx:fig:exp-id-ground}, top left), the query \(P^{\rho_\star}(c_3.y \mid do(c_2.x))\) is relationally identifiable from \(P(\*{v}_\rho)\) and \(\G_{\mathsf{bow}}\).
\end{adxprop}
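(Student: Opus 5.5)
The plan is to mirror the derivation in Prop.~\ref{adx:prop:exp-id-obs}: reduce the target query to within-car mechanisms and exogenous distributions, which are shared across skeletons, and then re-express those same quantities as observational quantities in the source $\rho$.

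I read $\G_{\mathsf{bow}}$ as follows. Within each car, $\msf{Car}.X \to \msf{Car}.Y$ and $\msf{Car}.X \leftrightarrow \msf{Car}.Y$. $\msf{Car}.X$ has no relational parents, and $\msf{Car}.Y$ has a single relational parent: the majority aggregate of $C'.X$ over the cars $C'$ with $\mathsf{Behind}(C',C)$. Because $\G_{\mathsf{bow}}$ is $\rho$-Markovian, exogenous noise is independent across cars. In $\rho_\star$ the relational parent of $c_3.Y$ is $\mathrm{maj}\{c_1.X,c_2.X\}$. In $\rho$ the relational parent of $c_2.Y$ is $\mathrm{maj}\{c_1.X\}=c_1.X$.

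\textbf{Step 1 (target side).} Under $do(c_2.x)$, the only remaining ancestors of $c_3.Y$ outside $c_3$ are $c_1.X$, which is determined by $c_1$'s own noise. Write $\*U_Y$ for the exogenous parents of $\msf{Car}.Y$ and $\*U'_X$ for those of $\msf{Car}.X$ not shared with $\msf{Car}.Y$. Summing over $c_1.x$ and over $c_3$'s exogenous variables, and using independence across cars, gives
\begin{align*}
P^{\mc M_{\rho_\star}}(c_3.y\mid do(c_2.x)) = \sum_{c_1.x} P^{\mc M_{\rho_\star}}(c_1.x)\, g\big(y;\mathrm{maj}\{c_1.x,c_2.x\}\big),
\end{align*}
where
\begin{align*}
g(y;r)=\sum_{\*u_Y,\*u'_X}\mathbf 1\big[f_{\msf{Car}.Y}(f_{\msf{Car}.X}(\*u_X),\*u_Y,r)=y\big]\,P(\*u_Y,\*u'_X).
\end{align*}
The function $g$ depends only on the template mechanisms and the template noise distribution. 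The confounded pair $(c_3.X,c_3.Y)$ is handled as one within-car unit, so the bidirected edge never needs to be broken.

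\textbf{Step 2 (source side).} Since $\msf{Car}.X$ has no relational parents, $P^{\mc M_{\rho_\star}}(c_1.x)=P^{\mc M_\rho}(c_1.x)$ by the same sharing argument as in Prop.~\ref{adx:prop:exp-id-obs}. Next, in $\rho$ the variable $c_1.X$ is independent of $c_2$'s noise by $\rho$-Markovianity, so conditioning on $c_1.x$ does not change $c_2$'s exogenous distribution. This gives $P^{\mc M_\rho}(c_2.y\mid c_1.x)=g(y;c_1.x)$, provided $P(c_1.x)>0$, which positivity of the source data ensures.

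\textbf{Step 3 (coverage) — the main obstacle.} I must check that every aggregate value $r=\mathrm{maj}\{c_1.x,c_2.x\}$ arising in the target is realised in the source as $\mathrm{maj}\{c_1.x'\}=c_1.x'$ for some $x'$ in the support. This is the aggregation form of the support condition in Thm.~\ref{thm:obstransfer}. Since the majority of a singleton is its element, every value of $\dom(\msf{Car}.X)$, and hence every possible majority output, is attained. The point to verify carefully is that, whatever tie-breaking rule is used, the majority output always lies in $\dom(\msf{Car}.X)$.

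\textbf{Conclusion.} Combining the three steps,
\begin{align*}
P^{\rho_\star}(c_3.y\mid do(c_2.x))=\sum_{c_1.x}P^{\rho}(c_1.x)\,P^{\rho}\big(c_2.y\mid c_1.X=\mathrm{maj}\{c_1.x,c_2.x\}\big).
\end{align*}
Every quantity on the right is a functional of $P(\*v_\rho)$. Hence any two RSCMs consistent with $\G_{\mathsf{bow}}$ that agree on $P(\*v_\rho)$ also agree on the query, which is the required relational identifiability.
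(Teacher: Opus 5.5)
\textbf{There is a genuine gap: you have the relational edge of $\G_{\mathsf{bow}}$ pointing the wrong way.} In this experiment a car influences the cars \emph{behind} it. This is the RSCM mentioned in the footnote on acyclic groundings: $C.B$ affects $C'.B$ if $C'$ is behind $C$, and the paper says it is implemented in Exp.~\ref{sec:exp:id}. In $\rho_\star$, $c_1$ and $c_2$ are behind $c_3$, so the lead car $c_3$ receives no relational input at all. The paper's proof rests on exactly this: there is no directed path from $c_2.X$ to $c_3.Y$ in the ground graph on $\rho_\star$, and ``$c_3.Y$ has no relational parents'' (proof of Prop.~\ref{adx:prop:exp-id-obs}).

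Under your reading, $c_2.X$ is a relational parent of $c_3.Y$, so Step 1 computes a mechanism that is not the target's. Step 2 does not carry over either. $\rho$-Markovianity only forbids exogenous noise shared across cars; it does not forbid endogenous cross-car dependence. In the source, the rear car $c_1$ may take input from $c_2$. Then $c_1.X$ is informative about $c_2.Y$, and in general $P^{\rho}(c_2.y\mid c_1.x)\neq g(y;c_1.x)$.

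So your three steps establish identifiability for a different graph, and your closing formula is not shown to equal the query. For example, take $c_2.x$ with $\mathrm{maj}\{x_1,c_2.x\}=r$ for every $x_1$, which happens for binary attributes with deterministic tie-breaking. Your formula then collapses to $P^{\rho}(c_2.y\mid c_1.X=r)$. This differs from the correct value $P^{\rho}(c_2.y)$ whenever $c_1.X$ depends on $c_2$.

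\textbf{The intended argument is two lines and needs no aggregation matching.} First, $c_2.X$ is not an ancestor of $c_3.Y$ in the marginalized ground graph on $\rho_\star$. So rule 3 of do-calculus, lifted to the relational setting by Prop.~\ref{prop:sameskelid}, gives $P^{\rho_\star}(c_3.y\mid do(c_2.x))=P^{\rho_\star}(c_3.y)$. Second, $c_3$ in $\rho_\star$ and $c_2$ in $\rho$ are both lead cars with empty relational parent multisets. Shared mechanisms and a shared within-car noise law then give $P^{\rho_\star}(c_3.y)=P^{\rho}(c_2.y)$ (Prop.~\ref{adx:prop:exp-id-obs}).

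Your idea of treating the confounded pair $(c_3.X,c_3.Y)$ as one within-car unit is exactly what makes this second step work despite the bow. Your Step 3 is then unnecessary. Even under your own reading, it was asserted rather than proved: a majority over a two-element multiset can tie, and nothing in the setup forces the tie output to equal the majority of some singleton.
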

\begin{proof}
    There is no directed path from \(c_2.X\) to \(c_3.Y\) in \(\G_{\mathsf{bow}, \rho_{\star}}\) (Fig.~\ref{adx:fig:exp-id-ground}, top right). By do-calculus and Prop.~\ref{prop:sameskelid}, \(P^{\rho_\star}(c_3.y \mid do(c_2.x)) = P^{\rho_\star}(c_3.y)\). By Prop.~\ref{adx:prop:exp-id-obs}, \(P^{\rho_\star}(c_3.y)\) is relationally identifiable, and we are done.
\end{proof}

An identical derivation shows that \(P^{\rho_\star}(c_3.y \mid do(c_2.x))\) is also relationally identifiable from \(P(\*{v}_\rho)\) and \(\G_{\mathsf{iv}}\). 

In both graphs, the non-identifiability of the same-car query follows from Prop.~\ref{prop:relid:nec}, by a similar argument as in Ex.~\ref{adx:ex:nonid}.

\begin{figure}[t]
    \centering
    \includegraphics[width=0.8\linewidth]{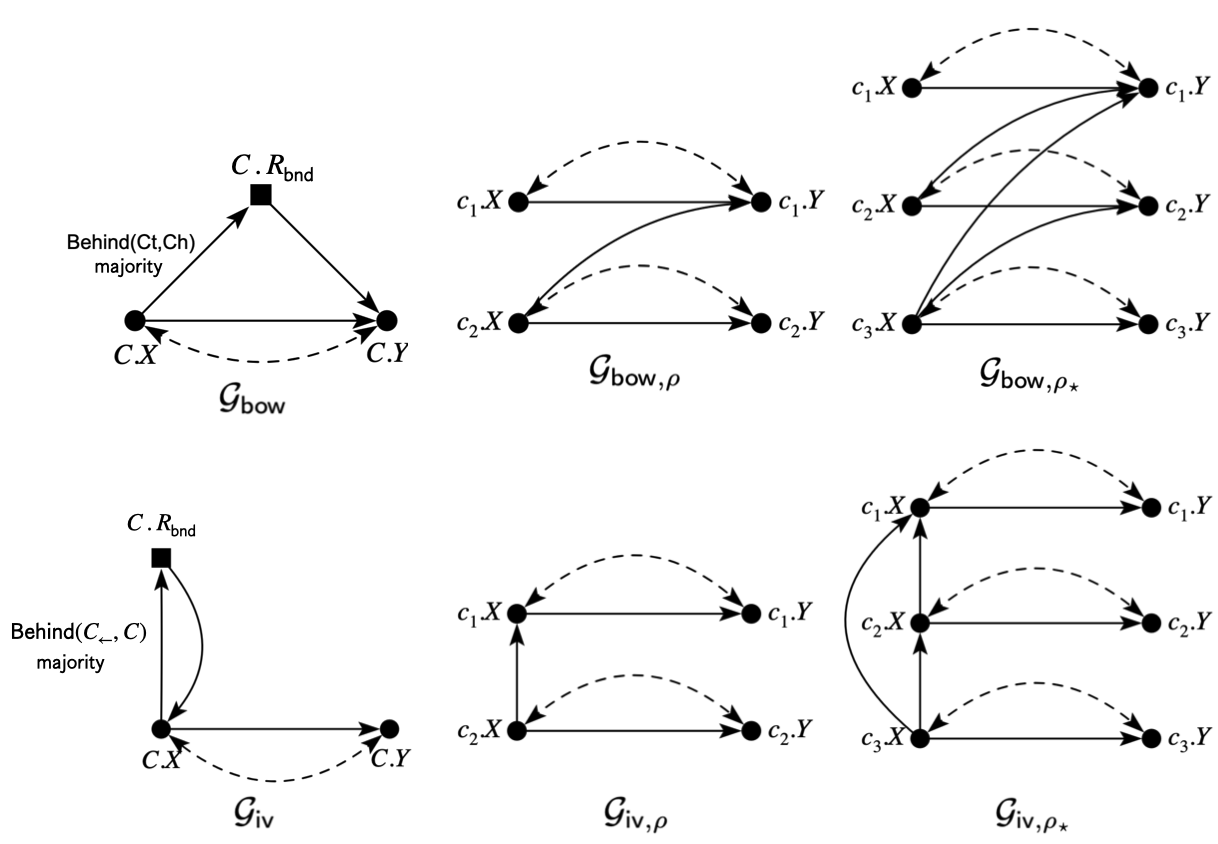}
    \caption{Marginalized ground graphs corresponding to \(\G_{\mathsf{bow}}\) (top) and \(\G_{\mathsf{iv}}\) (bottom) for the source \(\rho\) (middle) and target \(\rho_\star\) (right) in Exp.~\ref{sec:exp:id}.}
    \label{adx:fig:exp-id-ground}
\end{figure}

\subsection{Experiments with Larger Relational Structures} \label{adx:sec:exp:scale}
In this section, we evaluate the scability of RNCMs on larger relational structures with up to 75 variables, extending the estimation experiments of Sec.~\ref{sec:exp:transfer}.

\paragraph{Setup.} We use the relational causal graph in Exp.~\ref{sec:exp:transfer}, Fig.~\ref{fig:rcg}. For \(n \in \{10,20,30,40,50\}\), we train on a source skeleton with \(n\) objects and evaluate on two distinct target skeletons: one with \(n\) objects and one with \(1.5n\) objects, all with similar neighborhood sizes but different topology.
We generate \(5 \times 10^3\) datapoints using a regional CTM similar to Exp.~\ref{sec:exp:transfer} with count aggregation, with three trials per source skeleton.
We compute a large-scale aggregate query: the average of \(P(c.B = 1 | do(p.X = 1))\) for all pedestrians \(p\) and cars \(c\) such that \(\msf{Path}(p, c)\).

\paragraph{Generating relational skeletons.}
We generate source skeletons \((n_S, n_P, n_C) \in \{
(2,4,4),\ (4,8,8),\ (6,12,12),\ (8,16,16),\ (10,20,20),
\}\) signals, pedestrians, and cars respectively.
For each source size, we construct two target skeletons: a matched-size target
\(\rho_\star\), and a larger target \(\rho_{\star \star}\) with
\(
(n_S',n_P',n_C') =
(\lceil 1.5 n_S\rceil,\lceil 1.5 n_P\rceil,\lceil 1.5 n_C\rceil).
\) many signals, pedestrians, and cars respectively.

For each relation type \(R(A,B)\), we generate ground relations as follows.
Consider objects of
type \(A\) and \(n_B\) objects of type \(B\) ordered as
\(a_0,\ldots,a_{n_A-1}\) and \(b_0,\ldots,b_{n_B-1}\).
For each object \(b _j\) we assign a local neighbourhood of objects \(a \in \rho(A)\) as follows.  
We hoose an anchor index \(a(j) \in \{0,\ldots,n_A-1\}\), and include
\[
R(a_i,b_j)
\quad\Longleftrightarrow\quad
|i-a(j)| \leq \Delta_R,
\]
where \(\Delta_R\) is a relation-specific neighborhood radius.  We use
\(\Delta_R=0\) for \(\msf{Controls}(S,P)\), \(\Delta_R=1\) for
\(\msf{Controls}(S,C)\), and \(\Delta_R=1\) for \(\msf{Path}(P,C)\).

The source and target skeletons differ in how the anchor \(a(j)\) is
chosen.  For the source skeleton, we use
\[
a_{\mathrm{src}}(j)
=
\left\lfloor \frac{j n_A}{n_B} \right\rfloor .
\]
For both target skeletons, we use an alternating floor/ceiling anchor
\[
a_{\mathrm{tgt}}(j)
=
\begin{cases}
\left\lfloor \frac{j n_A}{n_B} \right\rfloor, & j \text{ even},\\[3pt]
\left\lceil \frac{j n_A}{n_B} \right\rceil, & j \text{ odd}.
\end{cases}
\]
Thus, the target skeletons preserve the same local
neighborhood radii as the source skeletons but the two are non-isomorphic.
See Fig.~\ref{adx:fig:large} for a visualization of the marginalized ground graphs of the source skeleton and large target skeleton for \(n=10\).

\begin{figure}
    \centering
    \begin{subfigure}[c]{0.6\linewidth}
        \includegraphics[width=\linewidth]{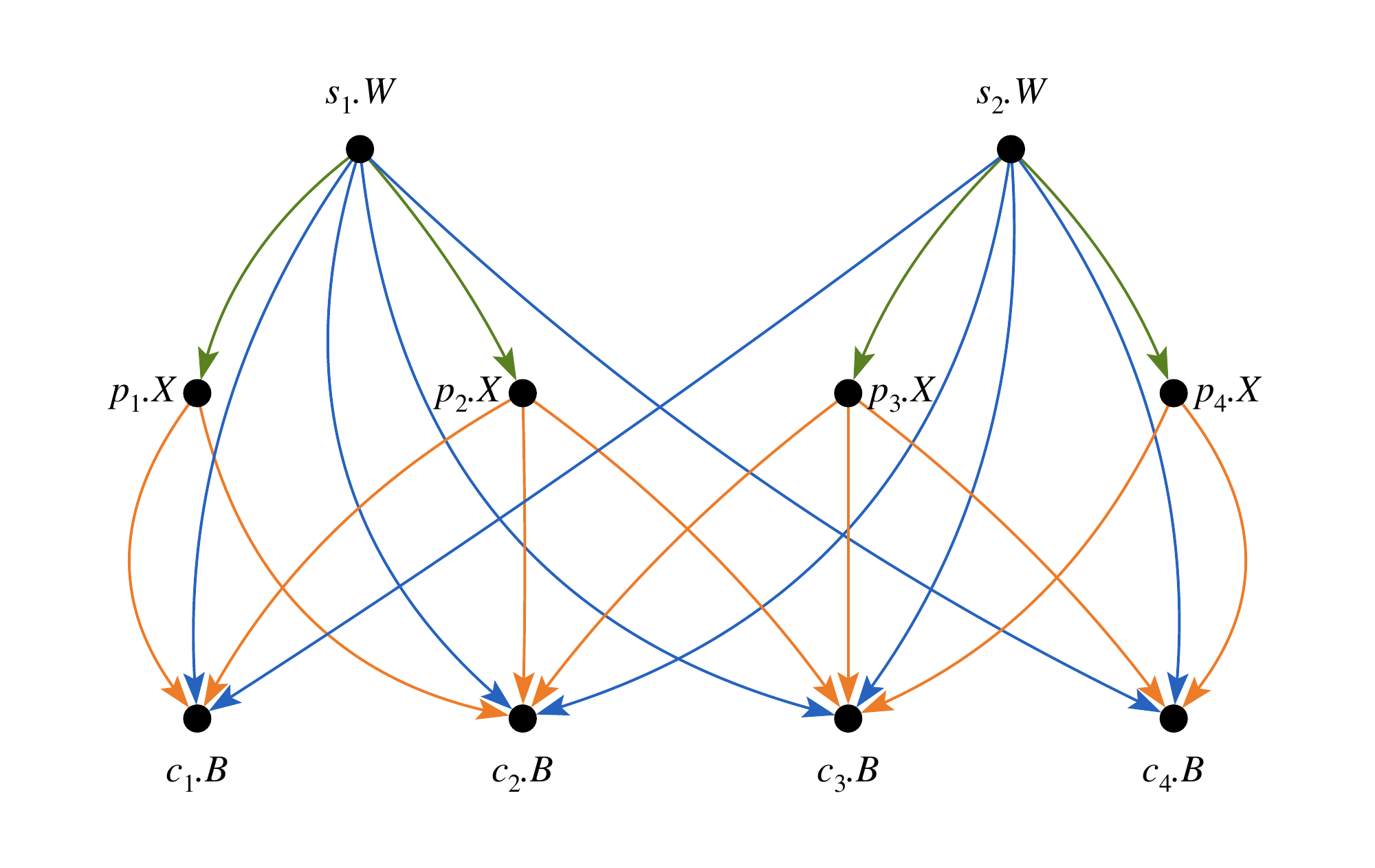}
       
    \caption{Source skeleton \(\rho\) with 10 objects.}
    \label{adx:fig:scaleA}
    \end{subfigure}
    \begin{subfigure}[c]{0.8\linewidth}
        \includegraphics[width=\linewidth]{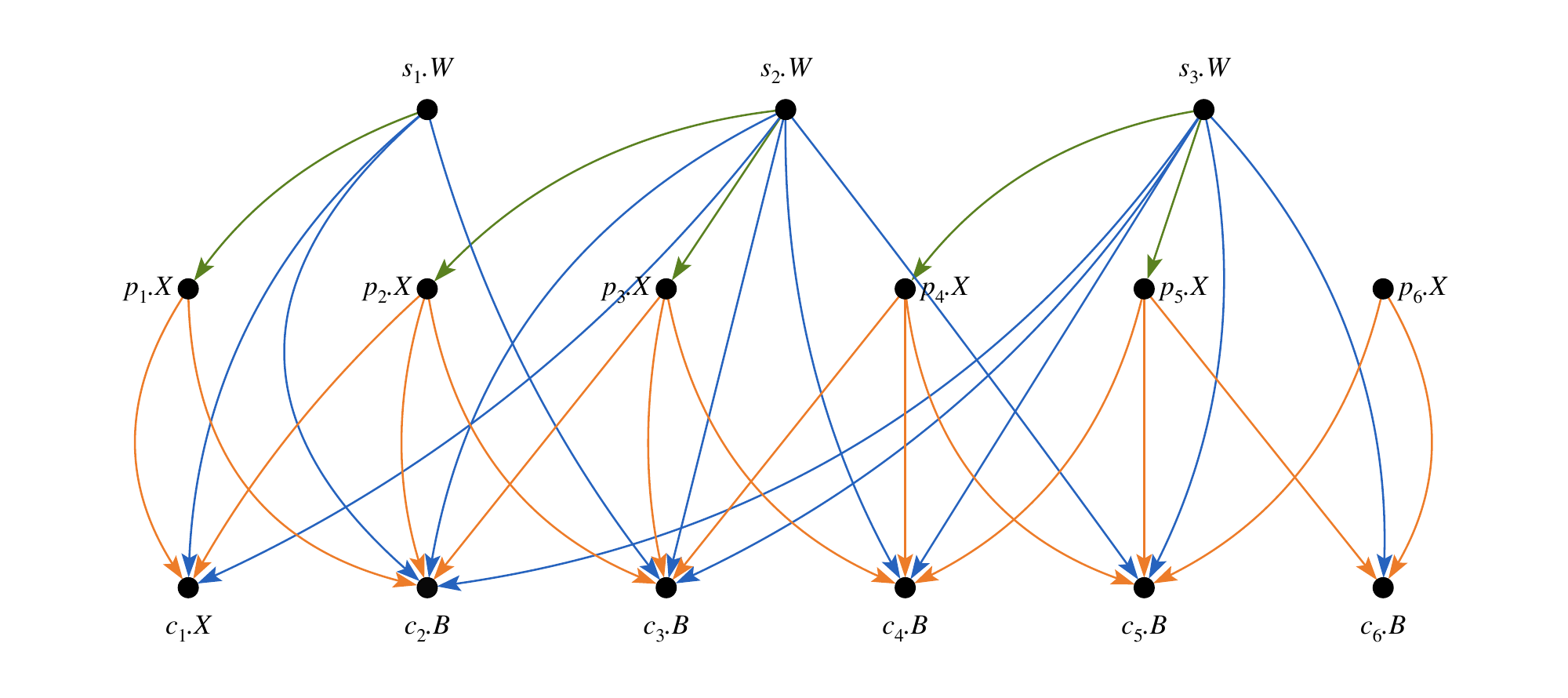}
    \caption{Target skeleton \(\rho_{\star \star}\) with 15 objects.}
    \label{adx:fig:scaleClar}
    \end{subfigure}
    \caption{Marginalized ground graphs for relational skeletons used in scaling experiment (Exp.~\ref{adx:sec:exp:scale}).}
    \label{adx:fig:large}
\end{figure}

\begin{figure}
    \centering
    \begin{subfigure}[c]{0.49\linewidth}
        \includegraphics[width=\linewidth]{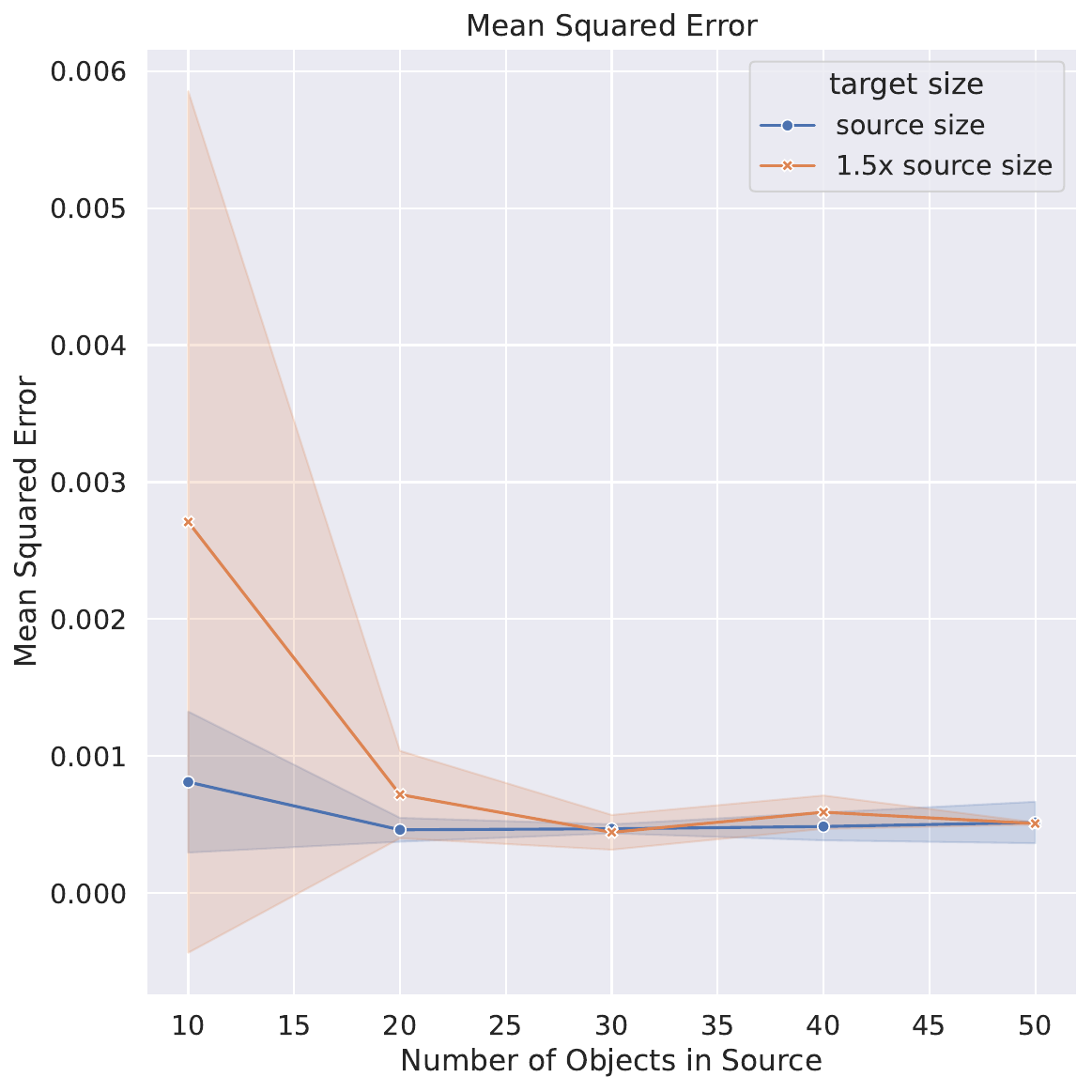}
       
    \caption{Accuracy of RNCMs trained on source and evaluated on target.}
    \label{adx:fig:lar-mse}
    \end{subfigure}
    \begin{subfigure}[c]{0.49\linewidth}
        \includegraphics[width=\linewidth]{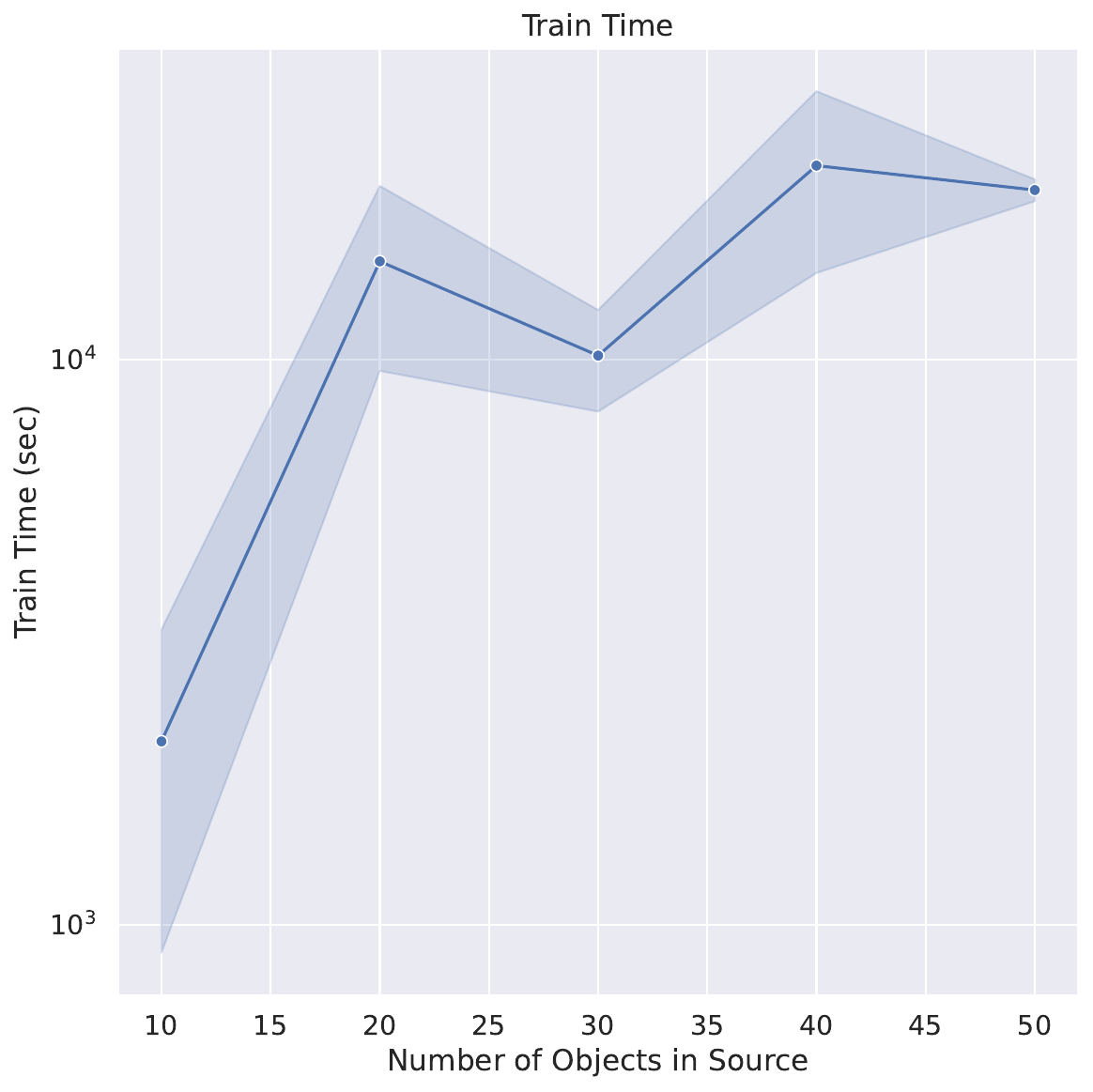}
    \caption{Training time of RNCMs.}
    \label{adx:fig:lar-runtime}
    \end{subfigure}
    \caption{Performance of RNCMs on large relational structures (Exp.~\ref{adx:sec:exp:scale}) averaged across 10 trials. An RNCM is trained on a source skeleton with \(n\) objects, and evaluated on two distinct target skeletons: one with \(n\) objects, and another with \(1.5n\) objects. Accuracy remains stable with increasing number of objects, alongside a manageable increase in runtime. }
    \label{adx:fig:large:results}
\end{figure}

\paragraph{Results.} In Fig.~\ref{adx:fig:large:results}, we report MSE and runtime of RNCMs for this setting.
 Accuracy remains stable and high with an increasing number of objects; interestingly, accuracy improves slightly fom \(n=10\) to \(n=20\) objects, which might be explained by how a larger number of objects means a  larger number of effective training datapoints for each shared mechanism. 
 While runtime increases with the number of objects, the growth is modest beyond small sizes, suggesting that parameter sharing mitigates the combinatorial explosion typically associated with increased dimensionality.
 This highlights the scalability of RNCMs to larger relational structures.

\subsection{Experiments with Misspecified Assumptions} \label{adx:sec:exp:misspec}

In this section, we conduct further experiments to evaluate the sensitivity of RNCMs to two types of misspecification in the assumptions: incorrect edges in the graph structure and incorrect aggregators.

\subsubsection{Misspecified Graphs}\label{adx:sec:exp:misspec:graph}

\paragraph{Setup.}
Across 10 trials, we generate \(10^4\) observational datapoints for source \(\rho_A\) (Fig.~\ref{fig:skels}) using a random regional CTM following the true causal graph in Fig.~\ref{fig:rcg} (with count aggregation).
For each trial, we train four RNCMs, each given a different graph as input: the true graph; an underspecified graph without a causal effect from \(S.W\) to \(C.B\); an underspecified graph without a causal effect from \(S.W\) to \(P.X\); and an overspecified graph, similar to the true graph but without any relational constraints on the edges (so that every signal affects every car/pedestrian, and every pedestrian affects every car).
We evaluate each RNCM on the identifiable query \(P^{\rho_C}(c_2.B=1\mid do(p_2.X=1, p_3.X=1))\) in the target skeleton \(\rho_C\).
We also evaluate the flat NCM baselines; the two under-specified graph removes the \(W \to X\) and \(W \to B\) edges respectively, and the `over-specified' graph is simply the true (complete) graph.

\paragraph{Results.}
We report average MSE across 10 trials in Fig.~\ref{adx:fig:misspec-graph:results}. Misspecification does not necessarily hurt RNCM performance. For instance, the underspecfied graph without the \(S.W \to P.X\) causal effect achieves results in similar RNCM accuracy as the true graph. 
However, the underspecified graph without the \(S.W \rightsquigarrow C.B\) path significantly harms RNCM accuracy.
Overspecification also results in a decrease in accuracy, though less so than a missing  \(S.W \rightsquigarrow C.B\) effect.
Importantly, despite misspecification in the input graph, RNCMs have greater or comparable accuracy to flat baselines even if the flat baselines are given the true flattened graph.

\begin{figure}
    \centering
    \includegraphics[width=0.5\linewidth]{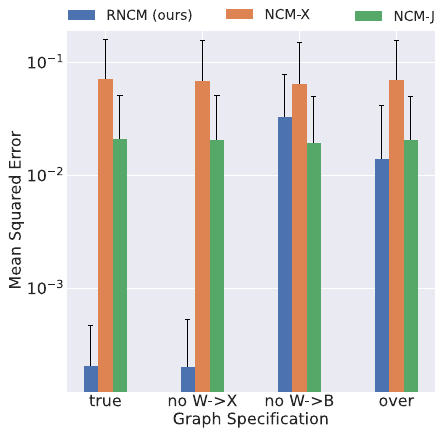}
    \caption{Graphs and results of Exp.~\ref{adx:sec:exp:misspec:graph} for evaluating the sensitivity of RNCMs to misspecification in the input graph.  Misspecification does not necessarily hurt accuracy: a missing \(S.W\to P.X\) effect yields accuracy similar to the true graph. This shows how sensitivity to misspecification is highly dependent on the graph, query, and type of misspecification.
    Importantly, RNCMs given a misspecified graph have accuracy greater than or comparable to flat baselines given the true graph.}
    \label{adx:fig:misspec-graph:results}
\end{figure}

\subsubsection{Misspecified Aggregators}\label{adx:sec:exp:misspec:agg}

\paragraph{Setup.}
We use the graph structure in Fig.~\ref{def:rcg} for both generating the data and training RNCMs, while varying the role aggregators.
For each of the count and majority aggregators, we generate \(10^4\) observational datapoints each across 10 trials for source \(\rho_A\).
For each dataset, we train two RNCMs: one using the true aggregator in (count, majority), and one a misspecified aggregator in (majority, count).
We evaluate each RNCM on the identifiable query  \(P^{\rho_C}(c_2.B=1\mid do(p_2.X=1, p_3.X=1))\)in the target skeleton \(\rho_C\), similar to Exp.~\ref{adx:sec:exp:misspec:graph}.

\paragraph{Results.}
We report average MSE across 10 trials in Fig.~\ref{adx:fig:misspec-agg:results}.
Correctly specified count aggregation gives the lowest error.
However, estimation using majority aggregation when the true aggregation is count yields in a substantial decrease in accuracy.
Correctly specified majority aggegation yields the next lowest error.
When the true aggregator is majority, correctly specified majority aggregation yields the lowest error, but estimation using count aggregation yields only slightly lower accuracy, with RNCMs still remaining performant (MSE below \(10^{-3}\).
This can be explained by how count is a sufficient statistic for computing majority, with the slight decrease in accuracy possibly due to the increased dimensionality of the problem.

\begin{figure}
    \centering
    \vspace{1em}
        \includegraphics[width=0.5\linewidth]{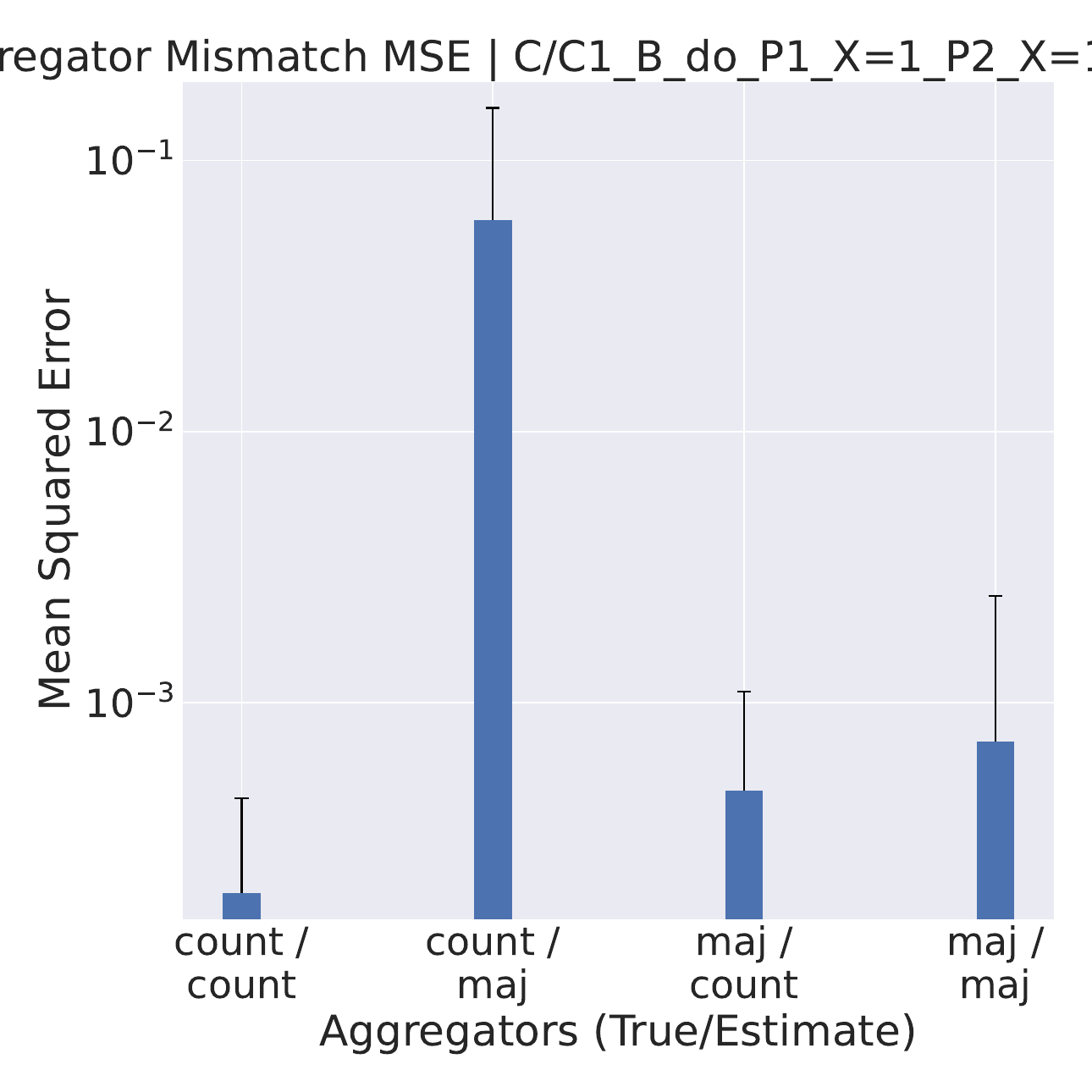}
        \caption{Accuracy of RNCMs under misspecification of aggregators (Exp.~\ref{adx:sec:exp:misspec:agg}). When the true aggregator is count, using majority aggregation for estimation yields a substantial decrease in accuracy. When the true aggregator is count, using majority aggregation only slightly decreases accuracy, still maintaining MSE under \(10^{-3}\).}
        \label{adx:fig:misspec-agg:results}
\end{figure}

\section{Frequently Asked Questions} \label{adx:sec:faq}

\begin{enumerate}[label=Q\arabic*., ref=Q\arabic*]

    \item What is the difference between a schema, skeleton, RSCM, ground SCM, RCG, and ground graph? How are they related to standard SCMs?
    
    \textbf{Answer.}
    A summary of these concepts, with an illustrative example and a comparison to standard SCMs, can be found in Table~\ref{tab:scm-rscm-comparison}. Another example with a more explicit comparison to standard SCMs is given in Sec.~\ref{adx:sec:student}.
    
    \item How does an RSCM differ from a standard SCM?
    
    \textbf{Answer.}
    A standard SCM is defined over a fixed set of variables.
    It assumes that each `unit' or object in the domain (e.g., cars, signals, or pedestrians) can be described by this fixed set of variables.
    Typically, it also assumes that units are i.i.d..
    This assumption can be restrictive when units causally affect each other.
    This has motivated a number of generalizations of SCMs to settings with interference between units.
    However, these methods still assume a fixed set of units with a fixed set of relationships between them, allowing for queries about this network to be answered using data from the same network.
    RSCMs generalize this fixed-unit view by defining a causal data-generating process that can be instantiated for any set of units (of possibly different types) and relations between them.
    RSCMs achieve this by making causal relationships contingent on relational constraints. For example, they may define that a signal's state affects a car's braking speed whenever the signal controls the car, and that the same mechanism for this effect is shared across cars.
    This mechanism can be applied to any set of signals and cars.
    Thus, RSCMs enable answering queries about one network of units using data from another
    (we call this cross-skeleton relational identification (Def.~\ref{def:rcid}), evaluated in Exp.~\ref{sec:exp:transfer}).

    \item Does grounding an RSCM just produce an ordinary SCM? If so, why not work directly with that SCM?
    
    \textbf{Answer.}
    For any fixed skeleton \(\rho\), grounding an RSCM \(\mc{M}\) does produce an
    ordinary SCM \(\mc{M}_\rho\) over instantiated variables.
    The advantage of the RSCM is that it explains how these ordinary SCMs across
    different skeletons are related. Different traffic scenes may contain different numbers of cars, signals, and pedestrians, giving rise to different ground SCMs. The RSCM ties these ground SCMs together by
    sharing mechanisms across object and attribute types. This is what allows
    the framework to ask whether causal information from one skeleton can
    transfer to another.

    \item Do RSCMs require stronger assumptions than standard SCMs?
    
    \textbf{Answer.}
    Not necessarily. RSCMs require explicit assumptions about the relational
    structure, but standard SCMs also require this in assuming that units are i.i.d..
    RSCMs allow a researcher to relax this assumption, allowing more general interactions between units to be made explicit.
    Often, these interactions themselves can be learned from data; see Q6 for an example. 

    \item When is a standard SCM preferable, and when is an RSCM preferable?
    
    \textbf{Answer.}
    A standard SCM is preferable when the causal problem naturally has a fixed
    set of variables and the units can reasonably be treated as i.i.d.. 
    If the setting involves units that causally affect each other, an RSCM is preferable for both theoretical guarantees and empirical accuracy (Fig.~\ref{fig:exp:transfer}) as suggested by our experiments.
    
    \item What prior knowledge is needed in practice?
    
    \textbf{Answer.}
    In our identification results, we assume that the schema, skeletons, and relational causal graph
    are given. This is analogous to graphical identification in standard causal
    inference, where the causal graph is assumed to be given.
    This prior knowledge can come from expert knowledge, or learned from data. Our framework is complementary to
    relational causal discovery for learning the causal graph and to methods that infer relational structure (i.e., skeletons)
    from raw data, such as scene graph extraction in vision.

    \item What is the intuition behind the non-identifiability result for unseen skeletons (Thm.~\ref{thm:obsimposs})?
    
    \textbf{Answer.}
    The key issue is that source skeletons may not reveal how the model behaves
    under relational configurations that occur only in the target skeleton. Two
    RSCMs can agree on every observed source skeleton but disagree on an unseen
    target skeleton.
    For example, suppose the target skeleton contains a relational pattern, such
    as three pedestrians being in the path of a car, that never occurs in the source
    skeletons. One RSCM may define the car's behavior to change when this
    pattern occurs, while another RSCM may ignore that pattern. If the pattern
    is absent from all source skeletons, both models induce the same source
    distributions, but they can induce different target distributions. Therefore,
    the target distribution is not identifiable without further assumptions.

    \item How should Thm.~\ref{thm:obsimposs} be interpreted for relational machine learning?
    
    \textbf{Answer.}
    Thm.~\ref{thm:obsimposs} formalizes a limitation of generalization
    across relational structures (e.g., as in inductive graph learning) even for observational queries--the lowest rung of the causal hierarchy, that does not involve interventions or counterfactuals.
    The theorem states that even if mechanisms are shared across objects
    of the same type (as is common in relational machine learning), theobservational distributions of a given set of skeletons may not uniquely determine
    the observational distribution of an unseen skeleton. This suggests that parameter sharing alone is not enough to guarantee cross-skeleton generalization.

    This does not mean that relational models such as GNNs cannot generalize in
    practice. Rather, it clarifies that successful generalization relies on
    additional assumptions, such as smoothness, regularity, invariance, or
    architectural biases. Our framework leverages relational causal graphs to make such assumptions explicit.

    \item How does the approach scale to larger relational structures?
    
    \textbf{Answer.}
    The main source of scalability is parameter sharing. The same structural
    mechanism is reused across all instances of a given attribute type. For
    example, the same car-braking mechanism applies to every car, regardless of
    how many cars appear in a particular skeleton.

    In additional experiments, we train on skeletons with up to \(50\) objects
    and evaluated on unseen skeletons with up to \(75\) objects (Exp.~\ref{adx:sec:exp:scale}). Accuracy
    remains stable as the number of objects increased, while runtime increases
    moderately. That said, scalability also depends on the choice of relational
    aggregation. High-dimensional aggregators, such as histograms over large
    neighborhoods, can be expensive; lower-dimensional summaries such as sums,
    means, maxima, or attention pooling may be preferable in large-scale
    applications.

    \item Does the framework require discrete variables?
    
    \textbf{Answer.}
    The identification theory of Sec.~\ref{sec:id} does not  require discrete variables.
    The discreteness assumption enters in our neural implementation (Sec.~\ref{sec:id:rncm}), which
    follows discrete canonical neural causal model constructions. Extending the
    neural implementation to continuous variables with theoretical guarantis an important direction for
    future work.

    Conceptually, continuous attributes can be included in an RSCM in the same
    way as discrete attributes: they are attributes of entities or relations and
    have structural equations. The main challenge is practical estimation and
    optimization, not the relational semantics themselves.

    \item How do attributes on relations fit into the framework?
    
    \textbf{Answer.}
    Relation attributes are attributes attached to relation instances rather
    than entity instances. For example, consider entities \(\mathsf{Student}\)
    and \(\mathsf{Course}\), and a relation
    \(\mathsf{Takes}(\mathsf{Student},\mathsf{Course})\). The relation instance
    \(\mathsf{Takes}(s,c)\) could have an attribute \(\mathsf{Takes}.G\),
    representing the grade student \(s\) receives in course \(c\).
    Relation attributes behave like entity attributes in the formalism.

\end{enumerate}

\end{document}